\newcommand{\indep}{{\perp}\!\!\! \perp}
\DeclareMathOperator{\E}{\mathbb{E}}
\DeclareMathOperator{\R}{\mathbb{R}}
\DeclareMathOperator{\N}{\mathcal{N}}
\DeclareMathOperator{\Prob}{\mathbb{P}}
\DeclareMathOperator{\C}{\mathcal{C}}
\newcommand{\Var}{\mathrm{Var}}
\newcommand{\Cov}{\mathrm{Cov}}
\newcommand{\Hrow}{\mathcal{H}_{\mathrm{row}}}
\newcommand{\Hcol}{\mathcal{H}_{\mathrm{col}}}
\newcommand{\distThreshold}{\eta}
\newcommand{\latRow}[1]{x_{\mathrm{row}}^{(#1)}}
\newcommand{\latCol}[1]{x_{\mathrm{col}}^{(#1)}}
\newcommand{\nn}[2]{\N_{#2,#1}}
\newcommand{\sharedcol}[2]{\C_{#1#2}}
\newcommand{\rowdist}[2]{\rho_{#1#2}}
\newcommand{\distnn}{\textsc{Dist-NN}}
\newcommand{\numrows}{N}
\newcommand{\numcols}{M}
\DeclareMathOperator*{\argmin}{arg\,min}
\NewDocumentCommand{\expect}{ e{_} s o >{\SplitArgument{1}{|}}m }{%
  \operatorname{\E}%     the expectation operator
  \IfValueT{#1}{{\!}_{#1}}% the measure of the expectation
  \IfBooleanTF{#2}{% *-variant
    \expectarg*{\expectvar#4}%
  }{% no *-variant
    \IfNoValueTF{#3}{% no optional argument
      \expectarg{\expectvar#4}%
    }{% optional argument
      \expectarg[#3]{\expectvar#4}%
    }%
  }%
}
\NewDocumentCommand{\expectvar}{mm}{%
  #1\IfValueT{#2}{\nonscript\;\delimsize\vert\nonscript\;#2}%
}
\DeclarePairedDelimiterX{\expectarg}[1]{[}{]}{#1}
\NewDocumentCommand{\probc}{ e{_} s o >{\SplitArgument{1}{|}}m }{%
  \operatorname{\Prob}%     the expectation operator
  \IfValueT{#1}{{\!}_{#1}}% the measure of the expectation
  \IfBooleanTF{#2}{% *-variant
    \probarg*{\probvar#4}%
  }{% no *-variant
    \IfNoValueTF{#3}{% no optional argument
      \probarg{\probvar#4}%
    }{% optional argument
      \probarg[#3]{\probvar#4}%
    }%
  }%
}
\NewDocumentCommand{\probvar}{mm}{%
  #1\IfValueT{#2}{\nonscript\;\delimsize\vert\nonscript\;#2}%
}
\DeclarePairedDelimiterX{\probarg}[1]{(}{)}{#1}
\newtheorem{customthm}{Theorem}
\newenvironment{namedthm}[1]{%
  \begin{customthm}}{\end{customthm}}
\theoremstyle{plain}
\newtheorem{theorem}{Theorem}
\newtheorem{corollary}{Corollary}
\newtheorem{lemma}{Lemma}
\newtheorem{proposition}{Proposition}
\newtheorem{assumption}{Assumption}
\theoremstyle{definition}
\newtheorem{definition}{Definition}
\newtheoremstyle{boldremark}
    {\dimexpr\topsep/2\relax} % space above
    {\dimexpr\topsep/2\relax} % space below
    {}          % body font
    {}          % indent amount
    {\bfseries} % theorem head font
    {.}         % punctuation after theorem head
    {.5em}      % space after theorem head
    {}          % theorem hed spec. (empty = "normal")
\theoremstyle{boldremark}
\newcommand{\order}{\mc{O}}
\newcommand{\sless}[1]{\stackrel{#1}{\leq}}
\newcommand{\seq}[1]{\stackrel{#1}{=}}
\newcommand{\x}{x}
\newcommand{\axi}[1][i]{\x_{#1}}
\newcommand{\lnorm}[1]{\bigg\Vert{#1}\bigg\Vert_{L^2(0,1)}}
\newcommand{\quantile}[1]{F^{-1}_{#1}}
\newcommand{\pseqxn}[1][n]{(\axi[i])_{i\geq 1}} % sequence with parentheses
\newcommand{\pseqxnn}[1][n]{(\axi[i])_{i=1}^n} % sequence with parentheses
\newcommand{\brackets}[1]{\left[ #1 \right]}
\newcommand{\parenth}[1]{\left( #1 \right)}
\newcommand{\sbraces}[1]{\left\{ #1  \right\}}
\newcommand{\abss}[1]{\left\lvert #1 \right\rvert}
\newcommand{\real}{\ensuremath{\mathbb{R}}}
\def\balign#1\ealign{\begin{align}#1\end{align}}
\def\baligns#1\ealigns{\begin{align*}#1\end{align*}}
\def\balignat#1\ealign{\begin{alignat}#1\end{alignat}}
\def\balignats#1\ealigns{\begin{alignat*}#1\end{alignat*}}
\def\bitemize#1\eitemize{\begin{itemize}#1\end{itemize}}
\def\benumerate#1\eenumerate{\begin{enumerate}#1\end{enumerate}}
\newenvironment{talign*}
 {\csname align*\endcsname}
 {\endalign}
\newenvironment{talign}
 {\csname align\endcsname}
 {\endalign}
\def\balignst#1\ealignst{\begin{talign*}#1\end{talign*}}
\def\balignt#1\ealignt{\begin{talign}#1\end{talign}}
\newcommand{\qtext}[1]{\quad\text{#1}\quad}
\let\originalleft\left
\let\originalright\right
\renewcommand{\left}{\mathopen{}\mathclose\bgroup\originalleft}
\renewcommand{\right}{\aftergroup\egroup\originalright}
\def\tinycitep*#1{{\tiny\citep*{#1}}}
\def\tinycitealt*#1{{\tiny\citealt*{#1}}}
\def\tinycite*#1{{\tiny\cite*{#1}}}
\def\smallcitep*#1{{\scriptsize\citep*{#1}}}
\def\smallcitealt*#1{{\scriptsize\citealt*{#1}}}
\def\smallcite*#1{{\scriptsize\cite*{#1}}}
\def\mbb#1{\mathbb{#1}}
\def\mc#1{\mathcal{#1}}
\def\mrm#1{\mathrm{#1}}
\def\tbf#1{\textbf{#1}}
\def\R{\mathbb{R}}
\def\<{\left\langle} % Angle brackets
\def\>{\right\rangle}
\def\defeq{\triangleq} % defined equal to
\def\what#1{\widehat{#1}}
\def\E{\mbb{E}} % Expectation symbol
\def\Var{\mrm{Var}} % Variance symbol
\def\Cov{\mrm{Cov}} % Covariance symbol
\def\indep{\perp\!\!\!\perp} % conditional independence
\providecommand{\argmin}{\mathop\mathrm{arg min}}
\newtheorem{example}{Example}
\newenvironment{proof-sketch}{\noindent\textbf{Proof Sketch}
  \hspace*{1em}}{\qed\bigskip\\}
\newenvironment{proof-idea}{\noindent\textbf{Proof Idea}
  \hspace*{1em}}{\qed\bigskip\\}
\newenvironment{proof-of-lemma}[1][{}]{\noindent\textbf{Proof of Lemma {#1}}
  \hspace*{1em}}{\qed\\}
\newenvironment{proof-of-theorem}[1][{}]{\noindent\textbf{Proof of Theorem {#1}}
  \hspace*{1em}}{\qed\\}
\newenvironment{proof-attempt}{\noindent\textbf{Proof Attempt}
  \hspace*{1em}}{\qed\bigskip\\}
\theoremstyle{plain}
\crefname{appendix}{App.}{App.}
\crefname{equation}{}{}
\crefname{lemma}{Lem.}{Lems.}
\crefname{claim}{Claim}{Claims}
\crefname{theorem}{Thm.}{Thms.}
\crefname{Corollary}{Cor.}{Cors.}
\crefname{algorithm}{Alg.}{Algs.}
\crefname{example}{Ex.}{Exs.}
\crefname{section}{Sec.}{Secs.}
\crefname{table}{Tab.}{Tabs.}
\crefname{remark}{Rem.}{Rems.}
\crefname{customthm}{Thm.}{Thms.}
\crefname{definition}{Def.}{Defs.}
\crefname{Proposition}{Prop.}{Props.}
\crefname{myremark}{Rem.}{Rems.}
\crefname{mylemma}{Lem.}{Lems.}
\crefname{mydefinition}{Def.}{Defs.}
\crefname{myproposition}{Prop.}{Props.}
\crefname{mycorollary}{Cor.}{Cors.}
\crefname{assumption}{Assum.}{Assums.}
\crefname{figure}{Fig.}{Figs.}
\crefname{proof}{Pf.}{Pfs.}
\crefname{enumi}{}{}
\crefname{name}{}{} % When using names like Vanilla KD as equation label
\begin{document}

\begin{frontmatter}
\title{Distributional Matrix Completion \\ via Nearest Neighbors in the Wasserstein Space}
\runtitle{Matrix Completion in the Wasserstein Space}
%\thankstext{T1}{A sample additional note to the title.}

\begin{aug}
\author[A]{\fnms{Jacob}~\snm{Feitelberg}\ead[label=e1]{jef2182@columbia.edu}\orcid{0000-0002-4551-0245}},
\author[B]{\fnms{Kyuseong}~\snm{Choi}\ead[label=e2]{kc728@cornell.edu}\orcid{0000-0002-3380-2849}},
\author[A]{\fnms{Anish}~\snm{Agarwal}\ead[label=e3]{aa5194@columbia.edu}},
\and
\author[B]{\fnms{Raaz}~\snm{Dwivedi}\ead[label=e4]{rd597@cornell.edu}\orcid{0000-0002-9993-8554}}
%%%%%%%%%%%%%%%%%%%%%%%%%%%%%%%%%%%%%%%%%%%%%%
%% Addresses                                %%
%%%%%%%%%%%%%%%%%%%%%%%%%%%%%%%%%%%%%%%%%%%%%%
\address[A]{Industrial Engineering and Operations Research,
Columbia University \printead[presep={ ,\ }]{e1,e3}}

\address[B]{Statistics and Data Science,
Cornell Tech, Cornell University\printead[presep={,\ }]{e2}}

\address[C]{Operations Research and Information Engineering,
Cornell Tech, Cornell University\printead[presep={,\ }]{e4}}
\end{aug}

\begin{abstract}
We study the problem of distributional matrix completion: Given a sparsely observed matrix of empirical distributions, we seek to impute the true distributions associated with both observed and unobserved matrix entries.
This is a generalization of traditional matrix completion, where the observations per matrix entry are scalar-valued.
To do so, we utilize tools from optimal transport to generalize the nearest neighbors method to the distributional setting.
Under a suitable latent factor model on probability distributions, we establish that our method recovers the distributions in the Wasserstein metric. 
We demonstrate through simulations that our method (i) provides better distributional estimates for an entry compared to using observed samples for that entry alone, (ii) yields accurate estimates of distributional quantities such as standard deviation and value-at-risk, and (iii) inherently supports heteroscedastic distributions. 
In addition, we demonstrate our method on a real-world dataset of quarterly earnings prediction distributions.
We also prove novel asymptotic results for Wasserstein barycenters over one-dimensional distributions.
\end{abstract}

% \begin{keyword}[class=MSC]
% \kwd[Matrix completion problems]{}
% \kwd{15A83}
% \kwd[; Optimal transportation]{}
% \kwd{49Q22}
% \end{keyword}

% \begin{keyword}
% \kwd{Matrix completion}
% \kwd{Wasserstein space}
% \end{keyword}

\end{frontmatter}
%%%%%%%%%%%%%%%%%%%%%%%%%%%%%%%%%%%%%%%%%%%%%%
%% Please use \tableofcontents for articles %%
%% with 50 pages and more                   %%
%%%%%%%%%%%%%%%%%%%%%%%%%%%%%%%%%%%%%%%%%%%%%%
%\tableofcontents

\section{Introduction}\label{sec:intro}
Matrix completion is the broad problem of imputing missing entries in a matrix. 
Algorithms for this problem have found widespread use in recommendation systems \citep*{su2009survey, ramlatchan2018survey, kang2016top} used at companies such as Netflix, Amazon, and Meta, system identification \citep{liu2010interior}, traffic sensing \citep*{zhou2017accurate, du2015effective, du2013vanet}, device location sensing \citep*{nguyen2019localization, xie2019active}, and patient-level predictions in healthcare \citep*{dwivedi2022counterfactual, yang2012online}. 
Although the theory and practice of matrix completion is thoroughly researched, there has been little to no work on matrix completion over distributions of numbers. 
We refer to this new problem as \emph{distributional matrix completion}.

Distributions naturally model the case where multiple measurements are taken per matrix entry. To impute missing matrix entries using prior algorithms, the data analyst would first have to collapse multiple measurements into scalars by, for instance, averaging. However, by collapsing the distributions into scalars, we lose all information about other useful distributional properties, such as standard deviation, quantiles, and extrema. For example, in \cref{fig:distributions}, we show three distributions with very different supports and other properties that all have the same means and variances. To alleviate this information loss problem, we propose and analyze a matrix completion algorithm to estimate entire distributions.

\begin{figure}
    \centering
    \includegraphics[width=0.75\linewidth]{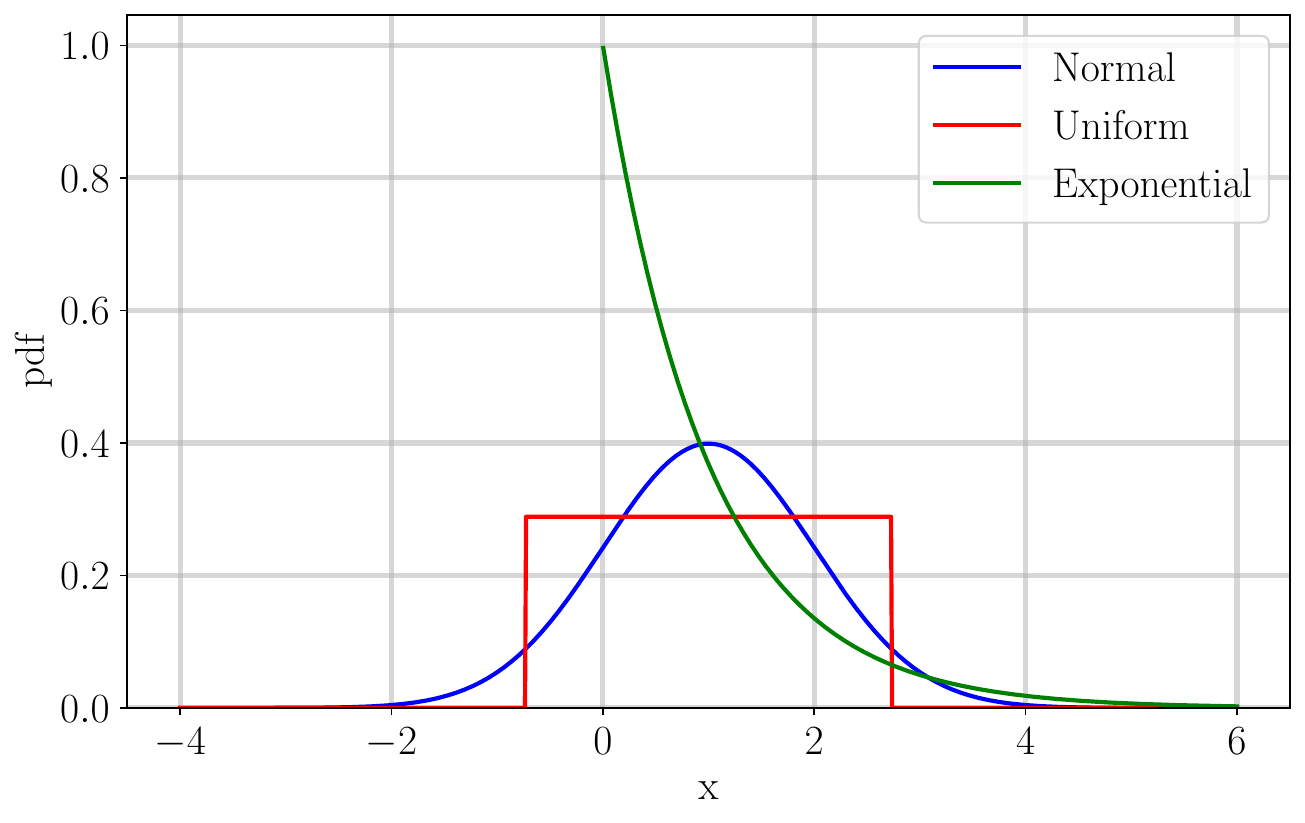}
    \caption{\textbf{Probability density functions (PDF's) of Gassian, continuous uniform, and exponential distributions with the same mean ($\mu=1$) and variance ($\sigma^2=1$)} While these distributions share first and second moments, they have very different properties. For instance, the Gaussian distribution's support is $(-\infty,\infty)$, the uniform distribution's support is $[-\sqrt{3},\sqrt{3}]$, and the exponential distribution's support is $(0,\infty)$.}
    \label{fig:distributions}
\end{figure}

The goal of our paper is to explore how we can better exploit repeated measurements to (i) learn the underlying distributions associated with each matrix entry and (ii) better predict distributional quantities other than the mean such as median, variance, and value-at-risk.
Distributional matrix completion's difficulty stems from two information losses: we only observe a subset of the distributions, and for the distributions we do observe, we only have access to an empirical estimate of the distribution, not the true distributions. Additionally, distributions can exist in infinite-dimensional spaces, adding to the difficulty of extending the formal scalar matrix completion setup.

We propose both a formal setup for distributional matrix completion and an estimation method to recover the unobserved true distribution per matrix entry. Using tools from optimal transport, our method is able to generate synthetic distributions that closely approximate the respective true distributions. Furthermore, perhaps surprisingly, the estimates consistently recover the true distributions more accurately compared to using just the empirical distribution for an observed matrix entry ---- see Figure \ref{fig:random-samples}. This allows for more accurate estimation of downstream distributional quantities such as variance or value-at-risk compared to simply using the observed empirical distribution.

\begin{figure}
    \begin{center}
    \resizebox{\linewidth}{!}{
        \begin{tabular}{cc}
            \includegraphics[width=0.45\linewidth]{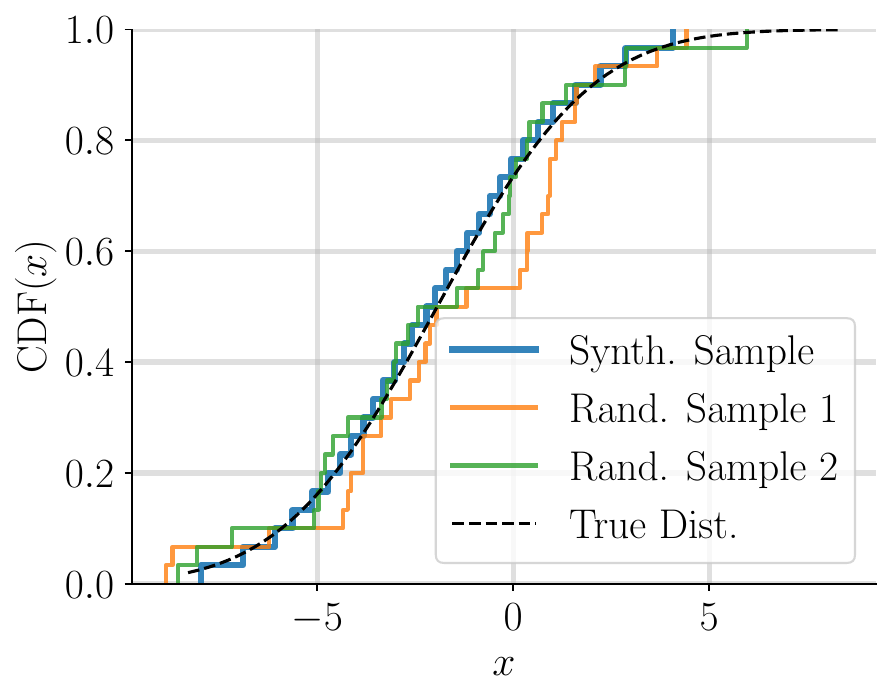} &
            \includegraphics[width=0.45\linewidth]{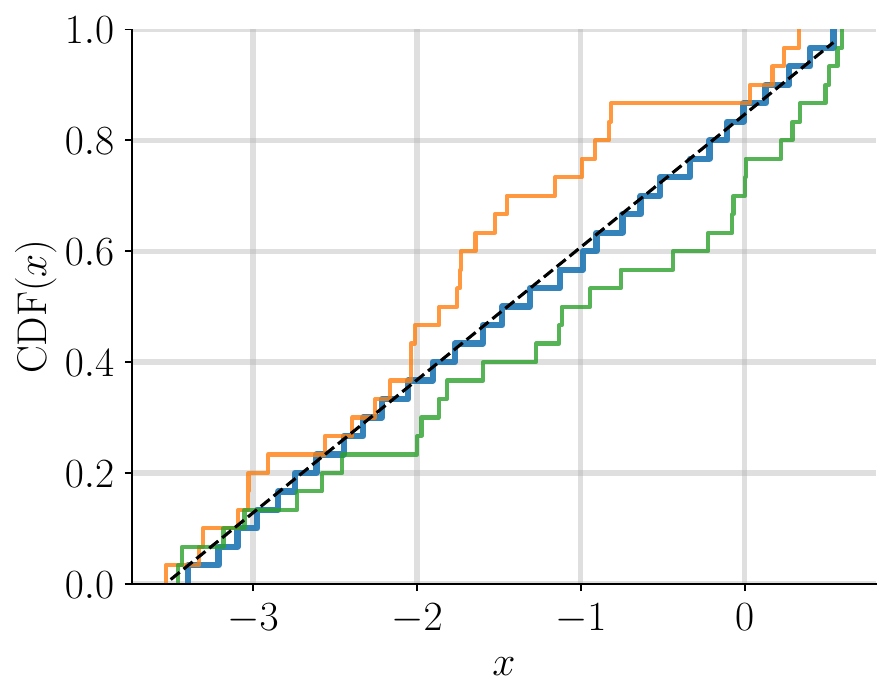}
            \\
            \parbox[c]{0.33\textwidth}{\centering (a) Gaussian} & 
            \parbox[c]{0.33\textwidth}{\centering (b) Continuous uniform}
        \end{tabular}
    }
    \end{center}
    
    \caption{\tbf{Cumulative distribution functions (CDF's) of random empirical distributions vs.\ our synthetic distribution from matrix completion.} We simulate two distributional matrix completion examples, one with all empirical Gausian distributions and one with empirical continuous uniform distributions. In both cases, only the matrix entry we seek to estimate is unobserved. Our method's synthetic distributions, shown in the thick blue lines, provide much better estimates of the true distribution's CDF, shown in black dotted lines, than simply using the empirical distribution of the observed matrix entries alone. See \cref{sec:sim} for more details on our simulated tests.}\label{fig:random-samples}
\end{figure}

\subsection{A motivating example: Quarterly earnings estimates}\label{sec:motivating-example}

To motivate distributional matrix completion with a real-world example, consider quarterly earnings estimates for public companies. Each fiscal quarter, public companies report their earnings (i.e. net income after taxes) for the previous quarter. Prior to this public release, analysts from various banks and other companies predict what they think the earnings will be for public companies. The distribution of predictions provides valuable insight for investors and traders into how a company is expected to perform and how much uncertainty there is in that predicted performance. On top of that, earnings estimates for one company often provide useful information on the performance of other companies. For instance, if earnings estimates for a part supplier to Nvidia suggest the part supplier is selling more components, then Nvidia might be selling more computer chips.

To systematically compare earnings estimate distributions, the predictions can be grouped by company and fiscal quarter to create a matrix of predictions where each matrix entry has a distribution of earnings predictions. Earnings predictions for a company are by definition a distribution of numbers because multiple analysts provide predictions. However, the time-dependent nature of the data induces missing entries in our distributional matrix in two ways: (i) companies do not follow the same fiscal quarter schedule and (ii) analysts for a single company can release their predictions over a month apart.

Consider the case where we are predicting Amazon's earnings estimates for the second fiscal quarter of 2012 ahead of time. This data is useful because analyst estimates directly affect market prices by changing earnings expectations. However, two months prior to the first company  releasing its actual earnings results, only a handful of analysts have released their predictions for any company. In this quarter, though, Apple released earnings before Amazon, and Apple's full distribution of earnings estimates is observed before Amazon's distribution. So, we can use the earnings distribution for Apple and other companies to predict Amazon's future earnings estimate distribution. We propose to do this systematically using distributional matrix completion where the matrix has companies along the columns and fiscal quarters along the rows. For our tests in \cref{sec:empirical}, we utilize data from 2010 through 2024 ($\sim$ 60 rows) for around 2,000 public companies ($\sim$ 2,000 columns) based in the U.S.

% With most machine learning models, any missing data must be imputed in some way before fitting and using the model. Thus, we require a way to impute missing distributions in our earnings estimates matrix, motivating our proposed method for imputing missing distributions in a matrix.

% {\color{red} Anish: Explain how the matrix is induced here - giving some detail of the size of it. And forward point to where in the paper the results are. Also, the connection to machine learning models seems abrupt - I would kill it and say that imputing distribution of earnings accurately before they happen is valuable in of itself (and can potentially be used for downstream tasks like applying ML models)}
 
\subsection{Related work}

Our work seeks to bridge two disparate topics, matrix completion and optimal transport, in order to provide a way to estimate unobserved probability distributions. Here, we provide a brief overview of the relevant literature in both areas.

\paragraph{Matrix completion} There are numerous algorithms for matrix completion that broadly fall into two categories: empirical risk minimization (ERM) and matching. 
Empirical risk minimization (ERM) methods seek to minimize both the distance between estimated matrix entries and observed matrix entries along with a regularization term \citep{cai2013matrix,agarwal2023causal}. The regularizer seeks to prioritize less complex matrices and is sometimes replaced with a hard constraint, such as the matrix being low-rank \citep{chi2018low}. 
%
% While ERM-based algorithms, such as spectral-based methods, have shown remarkable success in scalar matrix completion (e.g. recommendation systems \citep{bokde2015matrix}, panel data settings \citep{athey2021matrix}, and compressed sensing \citep{chen2013spectral}), they do not generalize in a straightforward manner to the distributional setting. Thus, we focus on matching methods for our new setting.}
%
Matching methods, or nearest neighbor methods, are popular for large-scale recommendation systems due to their simplicity and scalability \citep{dou2016survey, khan2017collaborative}. These algorithms estimate a missing entry by finding "similar" rows (users) or columns (items) and then use their average as the estimate for a missing entry. These algorithms not only work well in practice, but have been shown to have strong theoretical properties under suitable latent-factor models \citep{li2019, dwivedi2022doubly}. To implement matching methods, one needs to generalize a notion of similarity and averaging between matrix entries to when they are distributions. We show how we do this in \cref{sec:algo}.
The matrix completion literature has also grown to include noisy matrix completion \citep{candes2010, li2019}, panel data settings for causal inference \citep{dwivedi2022doubly, agarwal2023causal}, and even matrix completion over more exotic spaces such as finite fields \citep{soleymani2023matrix}. These areas, though, still assume the matrix has scalar values.

\paragraph{Optimal transport} Optimal transport (OT), a field initially developed to solve logistics problems \citep{monge1781memoire}, provides efficient computational and theoretical tools to compute the distance between two probability distributions and the average between a collection of distributions. 
OT has found widespread use from areas such as shape and image registration \citep{feydy2017optimal} to systems control \citep{chen2021optimal}. 
Most work in OT has focused on efficiently finding the map to transport one distribution to another at the lowest cost for a given cost function \citep{kolouri2017optimal, thorpe2018introduction}.
In this paper, we focus on the 2-Wasserstein distance, which is the cost of transporting one distribution to another when the cost function is the Euclidean distance.
The Wasserstein distance also lends itself well to calculating distributional barycenters (i.e. averages) \citep{cuturi14, bigot2013consistent, Bigot2013}. 

Wasserstein barycenters preserve the geometric properties of the input distributions. For instance, the barycenter of multiple Gaussian distributions is also Gaussian, unlike a mixture of Gaussians \citep{bigot2020}. 
% Thus, in our setting, if every observed distribution is Gaussian, we would expect the estimated distribtuions to also be Gaussian and not a mixture of Gaussians. 
While Wasserstein barycenters have a closed-form solution for one-dimensional distributions, they have been proven to be NP-Hard to calculate in high dimensions \citep{altschuler2022wasserstein}.
In classical optimal transport literature, it is assumed that distributions are known completely, instead of sampled. The statistics of OT, an area that has become increasingly popular, focuses on comparing how empirical distributions generated from samples differ from their respective true distributions \citep*{bigot2020, le2022fast, fournier2015rate}. 
% This literature works with empirical distributions as objects as opposed to using a noise model. 
In the setup studied in this paper, we only have access to the empirical distributions. Hence the sample-size convergence rates of empirical distributions to true distributions proven in \citep{bobkov2019one} are of particular use for us. In this work, we provide novel asymptotic results on empirical distributions in the Wasserstein space.

% \paragraph{Matrix completion over distribution functions.} 
To the best of our knowledge, we know of one previous work using tools from OT for estimating unobserved distributions in the matrix setting: \citep{gunsilius2023distributional} present a method for learning counterfactual distributions in a synthetic control setting \citep{abadieSynthetic} using a quantile-based generalization of the synthetic control method. They perform a regression over quantile functions to construct their estimate, whereas we use a nearest neighbors method. \citep{gunsilius2023distributional} also assumes a linear mapping between latent factors and distributions, whereas we allow our mapping to be nonlinear. They make assumptions similar to ours on the underlying distributions to establish their theoretical results such as each distribution's density being differentiable and lower bounded by a positive constant. However, we assume additional  regularity conditions, such as bounded support and continuous density functions on the distributions, which enable us to prove faster rates of decay for the error rate (\cref{assum:regular}).

\subsection{Organization and notation}

In \cref{sec:setup}, we propose a Lipschitz latent function model for matrix completion over the space of one-dimensional probability distributions. In \cref{sec:algo}, we provide a distributional nearest neighbor estimation method which utilizes the geometry of the 2-Wasserstein space. In \cref{sec:main-thm}, we provide asymptotic error bounds for our estimator and establish its consistency as the number of rows and columns of the matrix, and the number of samples for a given observed matrix entry grow. We also establish the asymptotic distribution for the error of the estimand. In \cref{sec:proof-techniques}, we prove our first main theorem and highlight our new results in optimal transport which have relevance outside of of this problem. In \cref{sec:sim}, we use simulations to empirically verify our theoretical error decay rates. We also demonstrate our method's accuracy in estimating distributional quantities such as mean, standard deviation, and quantiles. In \cref{sec:empirical}, we demonstrate our method on the real-world earnings estimates example introduced in \cref{sec:motivating-example}. Finally, in \cref{sec:discussion}, we conclude and discuss future research directions.

\paragraph{Notation} We refer to a probability measure $\mu$'s cumulative distribution function as $F_\mu$, its quantile function as $\quantile{\mu}$, and its density function as $f_\mu$. For a generic function $g$ of two parameters $n$ and $m$, we write $g(n,m)=\mathcal{O}(h(n,m))$ if there exist positive constants $c$, $n_0$, and $m_0$ such that for all $n \geq n_0$ and $m \geq m_0$, $g(n,m) \leq c\, h(n,m)$ \citep{cormen2022introduction}. We write $\Tilde{\mathcal{O}}$ to hide any logarithmic factors of the function parameters. We write $X_n = \mathcal{O}_p(a_b)$ when $X_n/a_b$ is bounded in probability. We write $X_n = o_p(a_b)$ when $X_n/a_b$ converges to 0 in probability. We denote the set $\{1,\dots,m\}$ as $[m]$. We denote the $(i,j)$-th entry of a matrix $M$ as $M_{ij}$. When referring to a function $g$ (e.g., cumulative distribution function, quantile function) of the empirical distribution from samples $\mathbf{X}=\{x_1,\dots,x_n\}$, we write $g_{\mathbf{X}}$. 
% We abuse notation slightly by referring to the referring to a set of samples $\{y_1,\dots,y_n\}$ and its respective empirical distribution with mass $1/n$ at each data point with the same symbol when there is no ambiguity as to which object we are referring to.
We abbreviate the term \emph{almost surely} to \emph{a.s.} and write $\overset{d}{=}$ to denote equality in distribution. We use $\defeq$ to denote that we are defining a new symbol. We also denote two random variables $X$ and $Y$ as independent by writing ${X \indep Y}$. Finally, in regards to our algorithm, we use the words \emph{user} and \emph{row} interchangeably. We do the same with the words \emph{item} and \emph{column}. This originates from the literature for recommendation systems which typically have users along rows and items along columns.
\section{Setup and data-generating process}\label{sec:setup}

In this section, we describe the general setup of distributional matrix completion. We then review some necessary background on optimal transport, which is used in our error calculations and estimation procedure. We then propose a data-generating process (DGP) that allows us to provide an error bound and asymptotic distribution for our method in \cref{sec:main-thm}.

\subsection{Problem setup and generic nearest neighbors}
In our setup, we analyze a partially observed $\numrows+1$ by $\numcols+1$ matrix, denoted $Y$, where each observed matrix entry contains an array of scalars; we add 1 to the matrix size to simplify the notation in our main theorem.. Within each matrix entry, $Y_{ij}$, the scalars are assumed to be drawn independently and identically (i.i.d.) according to some law $\mu_{ij}$. For each matrix entry in a column $j$, we assume that number of samples is $n_j$. However, our algorithm and theoretical guarantees can easily generalize to unequally sized arrays within columns. If a matrix entry $(i,j)$ is observed, then we denote the samples in that matrix entry as ${\{y_{ij,k}\}}_{k=1}^{n_j} \sim \mu_{ij}$. Note that for observed entries, we only have access to empirical data, not the true distribution.  Using the observed empirical distributions, our goal is to estimate the true distributions, $\mu_{ij}$, of the unobserved and observed matrix entries, i.e., for all rows and columns. 

\paragraph{Missingness.} Let $A$ be an $\numrows+1$ by $\numcols+1$ binary matrix representing which entries are observed and which are not. Then, we have
\begin{align}
    \qtext{for} i \in [\numrows+1], j\in[\numcols+1]:\qquad
    Y_{ij} = \begin{cases}
        [y_{ij,1},\dots,y_{ij,n_j}] &\quad \text{if} \, A_{ij} = 1 \\
        \text{missing} &\quad \text{if} \, A_{ij} = 0
    \end{cases}
\end{align}
where ${\{y_{ij,k}\}}_{k=1}^{n_j} \sim \mu_{ij}$. 

Next, we outline the generic nearest neighbors algorithm that we generalize to solve distributional matrix completion.

\paragraph{Generic user-user scalar nearest neighbors algorithm.} The nearest neighbors algorithm requires a distance threshold, $\distThreshold$, to define a neighborhood. The nearest neighbors algorithm then proceeds in two main steps impute a matrix entry $(i,j)$:
\begin{enumerate}[leftmargin=0.6cm]
    \item[\tbf{Step 1:}] \tbf{Find the set of nearest neighbors for row $i$.}\\
    For each other row $u \neq i$ where $A_{uj} = 1$, define the columns that are observed in both rows $i$ and $u$ as 
    \begin{align}
        \sharedcol{i}{u} \triangleq \{v \in [\numcols+1] \setminus\{j\}: A_{iv} = 1, A_{uv} = 1\}.
    \end{align}
    Then, find the average distance between the row $i$ and row $u$ as:
    \begin{align}\label{eq:scalar-nn-dist}
        \rowdist{i}{u} \defeq \begin{cases}
            \lvert \sharedcol{i}{u}\rvert^{-1} \sum_{v \in \sharedcol{i}{u}} (Y_{iv} - Y_{uv})^2&\quad \text{if $\lvert \sharedcol{i}{u}\rvert \geq 1$} \\
            \infty &\quad \text{if $\lvert \sharedcol{i}{u}\rvert = 0$}.
        \end{cases}
    \end{align}
    Finally, define row $i$'s $\eta$-nearest neighbors as 
    \begin{align}
        \nn{\eta}{i} \triangleq \{u \in [\numrows+1] : A_{uj} = 1, \rowdist{i}{u} \leq \eta\}.
    \end{align}
    
    \item[\tbf{Step 2:}] \tbf{Find the average of the nearest neighbors in column $j$.}\\
    If $\nn{\eta}{i} > 0$, then estimate the missing entry at $(i,j)$ as
    \begin{align}\label{eq:scalar-nn-avg}
        \Hat{Y}_{ij} = \frac{1}{\abss{\nn{\eta}{i}}}\sum_{u \in \nn{\eta}{i}} Y_{uj}.
    \end{align}
    If we found no nearest neighbors, then return we could not find any neighbors for row $i$.
\end{enumerate}

Step 1 finds rows which have matrix values close to the observed entries in the same row we are trying to estimate in. It then defines the nearest neighbors as the rows with distance below $\distThreshold$. Step 2 returns the average of the nearest neighbors, $\nn{\distThreshold}{i}$, in column $j$. To generalize the nearest neighbors method from scalar matrix completion to the distributional setting, we need to provide distributional analogs of \cref{eq:scalar-nn-dist,eq:scalar-nn-avg}, i.e., we require a notion of \emph{distance} and \emph{average} in the probability distribution space. While there are many distributional distances such as total variation and Kullback–Leibler divergence, we utilize the Wasserstein distance and barycenter from optimal transport for this new setting as our notion of \emph{distance} and \emph{average}, respectively. It remains an interesting line of future research to explore the statistical and computational properties of other distributional distances.

Note that in this paper, we analyze user-user nearest neighbors. However, our work can be easily extended to user-item or item-item nearest neighbors. In item-item nearest neighbors, distances are calculated between columns and averages are taken over rows. In user-item nearest neighbors, distance and averages are taken both over rows and columns. See~\cite{dwivedi2022doubly} for an example of how a user-item nearest neighbors algorithm is used to construct a doubly-robust estimator.

\subsection{Distributional nearest neighbor similarity via Wasserstein distance}\label{sec:wasserstein-distance}

The Wasserstein distance is a natural choice for distributional nearest neighbors because (i) it satisfies the properties of a metric, (ii) it has a closed-form solution in the one-dimensional case, and (iii) it behaves well when distributions do not share supports. For instance, for total variation, denoted $\mathrm{TV}$, $\mathrm{TV}(U(0,1), U(2,3))=\mathrm{TV}(U(0,1),U(4,5)) = 0$ because these continuous uniform distributions do not share supports. The 2-Wasserstein distance, denoted $W_2$, however, has $W_2(U(0,1), U(2,3)) < W_2(U(0,1),U(4,5))$. For other examples of the Wasserstein distance's useful geometric properties, see \citep{wasserman2017optimal}.

In the one-dimensional setting, the 2-Wasserstein distance can be written as an $L^2$ norm between quantile functions: For two probability measures on $\R$, $\mu$ and $\nu$ with finite second moment, we have \citep[Eq.~2]{bigot2020}
\begin{align}\label{eq:wasserstein-1dim}
    W_2(\mu,\nu) = \bigg(\int_0^1 |F_\mu^{-1}(x) - F_\nu^{-1}(x)|^2 dx\bigg)^{1/2} = \left\lVert F_\mu^{-1} - F_\nu^{-1}\right\rVert_{L^2(0,1)}
\end{align}
where $F_\mu^{-1}$ and $F_\nu^{-1}$ are the respective quantile functions of $\mu$ and $\nu$. For empirical distributions $\mu_n$ and $\nu_n$ with the same number of samples, $n$, generated from samples $\{X_i\}_{i=1}^n$ and $\{Y_i\}_{i=1}^n$, respectively, we have a simpler formula \citep[Lemma~4.2]{bobkov2019one}:
\begin{align}\label{eq:wasserstein-1dim-emp}
    W_2(\mu_n,\nu_n) = \bigg(\frac{1}{n}\sum_{i=1}^n \big(X^{(i)} - Y^{(i)}\big)^2\bigg)^{1/2}
\end{align}
where $X^{(i)}$ and $Y^{(i)}$ are the $i$-th order statistic of their respective empirical distributions $\mu_n$ and $\nu_n$. Thus, the Wasserstein distance can be calculated in $\mathcal{O}(n\log(n))$ time between any two empirical distributions with the same number of samples, $n$, with the asymptotic runtime being dominated by the sorting operation. Note that we interchangeably use ``Wasserstein'' and ``2-Wasserstein'' as we only consider the 2-Wasserstein distance in this work. We denote $W_2(\R)$, the \emph{Wasserstein space}, as the space of one-dimensional probability distributions on $\R$ with finite second moment equipped with the 2-Wasserstein metric.

\subsection{Distributional nearest neighbor averaging via Wasserstein barycenter}\label{sec:wasserstein-barycenter}
Consider the following probability distributions: $\mu_1,\dots,\mu_N \in W_2(\R)$. The Wasserstein barycenter is defined as the probability distribution $\mu$ that minimizes $\sum_{i=1}^N W_2^2(\mu,\mu_i)$, similar to how an average over scalars minimizes the sum of the squared distance to each scalar. 
% The Wasserstein barycenter retains geometric information about the input probability distributions much better than a Euclidean averaging of density functions \citep[Figure~3]{bigot2020}. 
The Wasserstein barycenter also has a simple closed-form solution as the measure with quantile function \citep[Eq.~8]{bigot2020}:
\begin{align}\label{eq:barycenter-quantile}
    F_\mu^{-1} = \frac{1}{N}\sum_{i = 1}^N F_{\mu_i}^{-1}.
\end{align}
When each distribution $\mu_j$ is an empirical distribution derived from order statistics $\{X_{\mu_j}^{(i)}\}_{i=1}^n$, where $X_{\mu_j}^{(i)}$ is the $i$-th order statistic, then the Wasserstein barycenter's distribution is an empirical distribution derived from order statistics given by \citep[Section 2.4]{bigot2018upper}: 
\begin{align}\label{eq:barycenter-order-stats}
    X_\mu^{(i)} = \frac{1}{N} \sum_{j=1}^N X_{\mu_j}^{(i)}.
\end{align}
In other words, the Wasserstein barycenter is a discrete distribution with mass $1/n$ at each point $X_\mu^{(i)}$. To calculate the $k$-th order statistic of the Wasserstein barycenter, we first sort each distribution's data into respective order statistics, and then average the $k$-th order statistics of the input distributions. Ordering each entry's samples takes $\mathcal{O}(N \cdot n\log(n))$ time and calculating the order statistic average takes $\mathcal{O}(Nn)$ time. So, the runtime to calculate the barycenter is $\mathcal{O}(N\cdot n\log(n))$. The Wasserstein barycenter has several desirable properties: (i) it is computationally fast to calculate for empirical distributions, (ii) it behaves well with location-scale distributions such as Gaussian and continuous uniform, and (iii) it has a closed-form solution in the one-dimensional case which facilitates theoretical analysis. As an example of the Wasserstein barycenter's geometric properties, it is shown in \citep{bigot2020} that the barycenter of multiple Gaussian distributions is also Gaussian.

\subsection{Data-generating process}

Under no assumptions about the matrix, matrix completion is ill-posed since there are too many valid ways to impute missing entries. Therefore, matrix entries are often assumed to share some latent structure which reduces the degrees of freedom. One popular way to encode this latent structure is by assuming the matrix is low rank \citep{chi2018low}. A more general framework to encode latent structure is given in \citep{chatterjee2015} and contains low-rank matrices as a special case in the scalar setting \citep[Sec. D]{li2019}. We utilize the more general framework here as described below.

\begin{assumption}[Lipschitz latent factor model on Wasserstein space]\label{assum:latent-factors-lipschitz}
 vb    Let the following latent structure hold: (i) There exists latent bounded metric spaces $\parenth{\Hrow, d_\mathrm{row}(\cdot\, ,\cdot)}$ and $\parenth{\Hcol, d_\mathrm{col}(\cdot\, ,\cdot)}$ for rows and columns, respectively, (ii) each row $i$ has a latent vector $\latRow{i} \in \Hrow$, each column $j$ has a latent vector $\latCol{j} \in \Hcol$, and (iii) there exists a function $f: \Hrow \times \Hcol \to W_2(\R)$ such that for $i \in [\numrows+1], j\in[\numcols+1],\, \mu_{ij} = f\big(\latRow{i},\latCol{j}\big).$
    We assume that $f$ is $L$-Lipschitz with respect to its row argument: For all $x_1,x_2 \in \Hrow, y\in \Hcol$, we have 
    \begin{align}\label{eq:lipschitz-latent-factors}
        W_2\parenth{f(x_1, y), f(x_2, y)} \leq L\, d_\mathrm{row}(x_1, x_2).
    \end{align}
\end{assumption}

This assumption means that the true distributions vary smoothly in the Wasserstein space as we vary the respective latent row vectors. Thus, rows which are close together in the latent row space will have similar distributions within the same column. 
%
% In the scalar case, this latent-factor model reduces the degrees of freedom from the full $(\numrows+1) \times (\numcols+1)$ to $(\numrows+1) + (\numcols+1)$, thus allowing us to impute missing entries in the partially observed matrix.
%
Since our method is a user-user nearest neighbors algorithm, we only require the latent function, $f$, to be Lipschitz with respect to the row argument. However, our model can be easily extended to user-item and item-item nearest neighbors by restricting $f$ to be Lipschitz with respect to the column argument as well.

Before discussing two examples that satisfy \cref{assum:latent-factors-lipschitz} we must review the concept of \emph{location-scale families} of distributions. A location-scale family of distributions is a set of distributions parameterized by a location parameter, $\alpha \in \R$, and a scale parameter, $\sigma \in \R_{\geq 0}$. For example, the Gaussian family of distributions is location-scale, as are the continuous and discrete uniform distribution families. Next, if $\mu$ is from a location-scale family, then for any random variable $X \sim \mu$ and for any $\sigma > 0$, $\alpha \in \R$, the random variable $Y \defeq \sigma X + \alpha$ has a distribution from the same family as $\mu$, and $F_Y^{-1} = \sigma F_X^{-1} + \alpha$. Thus, the quantile function of $Y$ is linear in terms of the quantile function of $X$. This property facilitates analysis in the Wasserstein space because the quantile functions are used to calculate both the Wasserstein distance \cref{eq:wasserstein-1dim} and the Wasserstein barycenter \cref{eq:barycenter-quantile}. Note that one can interpret the noisy scalar matrix completion setup within this framework: if the noise in every matrix cell is from the same location-scale family, then this noisy scalar matrix completion case is covered under our distributional setup when we observe only one sample in each observed matrix cell.

% {\color{red} Finally, note that noisy scalar matrix completion with i.i.d.\,noise is covered by the distributional setting if each entry's distribution is from the same location-scale family and we observe only 1 sample per entry. Anish: This sentence is too clumsily written. What does it mean to be covered by the distributional setting? Why is it restricted to the location-scale family? We need to be more precise.}

Now, we can introduce two examples with defined latent row and column distributions which we utilize later to explain our theoretical results:
\begin{example}[Homoscedastic location-scale]\label{ex:location-scale-homo}
    Let both the latent row and column factors be distributed uniformly over $[0,1]^d$. For ${\latRow{i}, \latCol{j} \in [0,1]^d}$, let the quantile function be ${\quantile{\mu_{ij}} = \sigma^2 F^{-1} + \langle \latRow{i}, \latCol{j} \rangle}$ where $F^{-1}$ is a quantile function corresponding to a distribution from a location-scale family.
\end{example}
\begin{example}[Heteroscedastic location-scale]\label{ex:location-scale-hetero}
    Let both the latent row and column factors be distributed uniformly over $[0,1]$. For ${\latRow{i}, \latCol{j} \in [0,1]}$, let the quantile function be $F^{-1}_{\mu_{ij}} = \latCol{j} F^{-1}_\mu + \latCol{i}$ for some quantile function $F^{-1}$ from a location-scale family.
\end{example}
\cref{ex:location-scale-homo} induces a low-rank structure on the first moment of the distributions while keeping the scale parameter constant across matrix entries. \cref{ex:location-scale-hetero} generates a matrix where each row has a different location and each column has a different scale. \cref{ex:location-scale-homo} satisfies the Lipschitz condition in \cref{assum:latent-factors-lipschitz} with $L = \sqrt{d}$, and \cref{ex:location-scale-hetero} satisfies the Lipschitz condition with $L = 1$.

Next, we make an assumption about the missingness structure.
\begin{assumption}[MCAR]\label{assum:mcar}
    We assume the missing-completely-at-random (MCAR) case where each matrix entry's missingness $A_{ij}\sim Bernoulli(p)$, is i.i.d.\,across matrix entries, and is independent of the latent factors of the rows and columns.
\end{assumption}
MCAR is a standard missingness pattern studied in the matrix completion literature, where the missingness is independent of both observed and unobserved factors. 
We believe that this MCAR assumption can be relaxed to the missing not-at-random (MNAR) case, but it is beyond the scope of this paper, and hence leave it as important future work.

% \begin{assumption}[Compact measures]\label{assum:compact}
%     Each matrix entry's measure, $\mu_{ij}$, is supported on a compact subset of the real line and the union of these subsets is bounded.
% \end{assumption}
% Note that this assumption does not reduce our modeling power much since for most real-world distributions such as Sub-Gaussian distributions, exponential, or $t$-distributions, this holds with high probability. 
% A consequence of the union of the measures support being bounded is that the diameter of the space of measures in Wasserstein distance is bounded. Lastly, this assumption does not affect our estimation method.
% {\color{red} Let's explain this part better.}

\section{Estimation method}\label{sec:algo}

In this section, we propose a generalization of scalar nearest neighbors to the distributional setting. As discussed earlier, Nearest neighbors scales well to very large datasets often encountered in recommendation systems and panel-data settings, making it very popular in practice. On top of that, nearest neighbors only requires a notion of \emph{distance} and \emph{average} to be implemented, which makes it a suitable choice for our setting with complex infinite-dimensional objects; in particular, using the notion of 2-Wasserstein distance and barycenter, which we discussed as natural choices for computing similarity and averaging for one-dimensional distributions. We detail the distributional nearest neighbor algorithm below. 

% Singular-value based methods do not generalize as easily to distributional matrix completion because there is no similar notion of singular values for our setting. It remains interesting future work to explore if it possible to generalize singular-value based methods to the distributional setting.

% In \cref{sec:wasserstein-distance,sec:wasserstein-barycenter}, we motivated why the 2-Wasserstein distance and barycenter are natural choices for comparing and averaging one-dimensional distributions. Thus, in the generic nearest neighbors algorithm, we can simply replace the squared difference in \cref{eq:scalar-nn-dist} with the Wasserstein distance and the scalar average in \cref{eq:scalar-nn-avg} with the Wasserstein barycenter.

\subsection{Distributional user-user nearest neighbors method}\label{subsec:nn}
The inputs to our method are a data matrix, $Y$, a masking matrix $A$, and a distance threshold parameter, $\distThreshold \geq 0$. For each entry $(i,j)$, we calculate $\Hat{\mu}_{ij}$ as an estimate of $\mu_{ij}$. We propose a nearest neighbors (NN) method, denoted \distnn, for our setting below:\\
\tbf{{\distnn}$(Y,A,i,j,\distThreshold)$:}
\begin{enumerate}[leftmargin=0.6cm]
    \item[\tbf{Step 1:}] \tbf{Find the set of nearest neighbors for row $i$.}\\
    For each other row $u \neq i$ where $A_{uj} = 1$, define the columns that are observed in both rows $i$ and $u$ as 
    \begin{align}\label{eq:shared-col-def}
        \sharedcol{i}{u} \triangleq \{v \in [\numcols+1] \setminus\{j\}: A_{iv} = 1, A_{uv} = 1\}.
    \end{align}
    Then, find the average distance between the row $i$ and row $u$ as:
    \begin{align}\label{eq:avg-distance-def}
        \rowdist{i}{u} \defeq \begin{cases}
            \lvert \sharedcol{i}{u}\rvert^{-1} \sum_{v \in \sharedcol{i}{u}} W_2^2(Y_{iv}, Y_{uv})&\quad \text{if $\lvert \sharedcol{i}{u}\rvert \geq 1$} \\
            \infty &\quad \text{if $\lvert \sharedcol{i}{u}\rvert = 0$}.
        \end{cases}
    \end{align}
    Finally, define row $i$'s $\eta$-nearest neighbors as 
    \begin{align}\label{eq:neighbors-def}
        \nn{\eta}{i} \triangleq \{u \in [\numrows+1] : A_{uj} = 1, \rowdist{i}{u} \leq \eta\}.
    \end{align}
    
    \item[\tbf{Step 2:}] \tbf{Find the Wasserstein barycenter of the nearest neighbors in column $j$.}\\
    If $\nn{\eta}{i} > 0$, then estimate the quantile function of $\mu_{ij}$ as:
    \begin{align}\label{eq:dist-nn-barycenter}
        \quantile{\hat{\mu}_{ij}} = \frac{1}{\abss{\nn{\eta}{i}}}\sum_{u \in \nn{\eta}{i}} \quantile{Y_{uj}}.
    \end{align}
    If we found no nearest neighbors, then return that we could not find any neighborhood for row $i$.
\end{enumerate}
In step 1, we calculate pairwise Wasserstein distances between row $i$ and every other row that is observed in column $j$ to estimate row $i$'s neighbors, which we denote $\nn{\eta}{i}$. Once we have row $i$'s neighbors, in step 2, we find the Wasserstein barycenter of the observed distributions in column $j$ for the nearest neighbors. This barycenter can be calculated using its quantile function, which has a closed-form solution from \cref{eq:barycenter-quantile}. Next, we present our theoretical guarantees for $\distnn$.
% {\color{red} Anish: these sentences above on enhancements are raising more questions than answers. What will they enhance? Why aren't we doing these enhancements now? Is it sufficient to not do these enhancements?}
% \rd{For lower bound -- I would say for theory we do blah but in experiments (see Sec blah) its often better to restrict blah. For DR -- I would not mention here and mention as a future / follow up in discussion / extension sections later.}

\section{Main results}\label{sec:main-thm}

In this section, we present our main results showing that under certain regularity conditions on the probability distributions, our nearest neighbors method produces estimates close to their respective true probability distributions with high probability.

\subsection{Error decay rate}
To state our main results, we require several regularity conditions on the underlying distributions.
\begin{definition}[Regular measure]\label{def:regular-measure}
    We say that a measure $\nu$ with distribution function $F$ and density $f$ is regular if (i) $F$ is twice-differentiable and continuous on $(a,b)$ where $-\infty < a < b < \infty$, (ii) there exists a universal $C > 0$ such that for all $x \in (a,b), \, f(x) \geq C$,  (iii) $\nu$ has a finite second moment, (iv) $(F^{-1})'$ is $L'$-Lipschitz, (v) $f$ is non-decreasing in a right-neighborhood of $a$ and non-increasing in a left-neighborhood of $b$, and (vi)
\begin{align}\label{eq:regularity-F}
    \sup_{x \in (a,b)} \frac{F(x)\parenth{1-F(x)}}{f^2(x)} \abss{f'(x)} < 2.
\end{align}
% From \citep[Eq. 5.4]{bobkov2019one}, we can write this equivalently in terms of the $I$-function, $I(t) \defeq f(F^{-1}(t))$, as
% \begin{align}\label{eq:regularity-I}
%         \sup_{t \in (0,1)} \frac{t(1-t)}{I(t)} \abss{I'(t)} < 2.
% \end{align}
\end{definition}
\begin{assumption}[Regularity conditions]
\label{assum:regular}
    For $(i,j) \in [\numrows + 1] \times [\numcols + 1]$, the measure $\mu_{ij}$ is regular.
    
% {\color{red} Anish: This is a definition, not an assumption. What are you assuming is ``regular''?} \rd{Agreed. I would define "Regular measure" and then write an assumption 3 which says that Each measure in the matrix $\sbraces{\nu_{ij}}$ satisfies definition blah.}
\end{assumption}
All continuous uniform distributions automatically satisfy this regularity condition because their probability density functions are constant on their respective supports. However, since we assume that our densities are uniformly lower bounded and compact, Gaussian distributions do not satisfy \cref{assum:regular}. This is a common issue when analyzing the asymptotic behavior of the Wasserstein distance. See \citep[Remark 1]{martinet2022variance} for a detailed discussion. We do note, however, that truncated Gaussian distribution satisfies these regularity conditions. Furthermore, in our simulations in \cref{sec:sim}, we find that when we apply our method to a matrix of Gaussian distributions, the empirical error rates are close to what our theoretical guarantees predict even though Gaussian distributions are not regular. Finally, the right-hand side in \cref{eq:regularity-F} merely simplifies the analysis and can be raised without loss of generality to some ${\gamma < \infty}$.

We now provide our first result, the error rate of the estimate \distnn~(proven in \cref{sec:proof-thm-main-asymptotic}).

\begin{theorem}[Rate of error decay for $\Hat{\mu}_{ij}$]\label{thm:main-asymptotic}
    Let \cref{assum:latent-factors-lipschitz,assum:mcar,assum:regular} hold. Let $\vert\nn{\eta}{i}\vert$ be the number of neighbors for row $i$ with distance threshold $\eta$. Let $\numrows$ and $\{n_v\}_{v \neq j}$ be fixed. Without loss of generality, let $\abss{\nn{\eta}{i}} \geq 1$. Then, we have that as ${n_j, \numcols \to \infty}$
    \begin{align}\label{eq:main-bound}
        W_2^2(\Hat{\mu}_{ij},\mu_{ij}) = \mathcal{O}_p\parenth{\eta + \frac{1}{p \sqrt{\numcols}} + \frac{1}{n_j \abss{\nn{\eta}{i}}} + \frac{\log^2 n_j}{n_j^2} }.
    \end{align} 
    % {\color{red} Anish: Why is the number of overlapping entries between our target row i and it's neighbors not showing up? Are we lower bounding by saying the overlapping columns between a neighboring row and row i is just 1? If so, @Raaz, should we make a comment about this? I feel this will come up for someone who is familiar with the nearest neighbor literature.} \rd{This is not an instance-based analysis so we should immediately comment that $p\sqrt{M}$ is a proxy for the lower bound on overlapping count.}

    % {\color{red} Anish: This bound is not going to $0$. @Raaz don't we need to set $\eta$ to be something decreasing in the growing parameters of the model to prove we have a consistent estimator?}

    % \rd{Agreed -- and that's why we need a corollary for consistency  / CLT which shows what scalings of various things are needed.}
\end{theorem}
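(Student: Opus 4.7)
The plan is to split the error by inserting an oracle barycenter and then bound the resulting \emph{bias} and \emph{variance} pieces separately. Define the oracle distribution $\wtil{\mu}_{ij}$ via
\[ \quantile{\wtil{\mu}_{ij}} \defeq \frac{1}{\abss{\nn{\eta}{i}}} \sum_{u \in \nn{\eta}{i}} \quantile{\mu_{uj}}, \]
i.e., the Wasserstein barycenter of the \emph{true} distributions of the selected neighbors. The triangle inequality (with $(a+b)^2 \leq 2a^2 + 2b^2$) then gives
\[ W_2^2(\wh{\mu}_{ij}, \mu_{ij}) \leq 2\, W_2^2(\wh{\mu}_{ij}, \wtil{\mu}_{ij}) + 2\, W_2^2(\wtil{\mu}_{ij}, \mu_{ij}), \]
so it suffices to bound each of these two terms by $\mathcal{O}_p$ of two of the four summands in the stated rate.

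For the bias piece $W_2^2(\wtil{\mu}_{ij}, \mu_{ij})$, we apply Jensen's inequality on quantile functions to reduce to the row-wise average $\frac{1}{\abss{\nn{\eta}{i}}} \sum_{u \in \nn{\eta}{i}} W_2^2(\mu_{uj}, \mu_{ij})$. The key step is then to bridge from the \emph{empirical} distance $\rowdist{i}{u} \leq \eta$ (the only handle on neighbors) to the population row-distance $D(\latRow{i}, \latRow{u}) \defeq \mathbb{E}_y[W_2^2(f(\latRow{i}, y), f(\latRow{u}, y))]$ and finally to the column-$j$ distance $W_2^2(\mu_{uj}, \mu_{ij})$. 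Under \cref{assum:mcar}, $\abss{\sharedcol{i}{u}}$ concentrates around $p^2 \numcols$, and the average of $W_2^2(Y_{iv}, Y_{uv})$ over $v \in \sharedcol{i}{u}$ converges to $D(\latRow{i}, \latRow{u})$ at rate $1/(p\sqrt{\numcols})$, yielding $D(\latRow{i}, \latRow{u}) \leq \eta + \mathcal{O}_p(1/(p\sqrt{\numcols}))$. Since $\latCol{j}$ is an independent copy of the column latents over which $D$ averages, a Markov-type argument over the randomness in $\latCol{j}$ transfers this bound, giving $W_2^2(\mu_{uj}, \mu_{ij}) = \mathcal{O}_p(\eta + 1/(p\sqrt{\numcols}))$ for each $u \in \nn{\eta}{i}$.

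For the variance piece, expand in quantile coordinates,
\[ W_2^2(\wh{\mu}_{ij}, \wtil{\mu}_{ij}) = \Big\lVert \frac{1}{\abss{\nn{\eta}{i}}} \sum_{u \in \nn{\eta}{i}} \bigl( \quantile{Y_{uj}} - \quantile{\mu_{uj}} \bigr) \Big\rVert_{L^2(0,1)}^2. \]
Because $\nn{\eta}{i}$ is measurable with respect to the samples in columns $v \neq j$, the summands are independent across $u$ conditional on $\nn{\eta}{i}$, so taking expectations splits into a pooled-variance term plus a squared-bias term. Under \cref{assum:regular}, Bobkov--Ledoux-style asymptotics give $\mathrm{Var}(\quantile{Y_{uj}}(t)) \approx \frac{t(1-t)}{n_j f_{\mu_{uj}}^2(\quantile{\mu_{uj}}(t))}$, whose integral over $t \in (0,1)$ is finite by regularity, producing the $1/(n_j \abss{\nn{\eta}{i}})$ term. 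The per-neighbor quantile bias is $O(1/n_j)$ and does \emph{not} cancel under averaging; integrating the corresponding higher-order expansion---carefully handling behavior near the boundary of the support---then yields the $\log^2 n_j / n_j^2$ contribution.

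The main technical hurdles will be (i) establishing the concentration of $\rowdist{i}{u}$ around $D(\latRow{i}, \latRow{u})$ at the sharp rate $1/(p\sqrt{\numcols})$, which requires bounds on Wasserstein distances between \emph{pairs} of empirical distributions under \cref{assum:regular}, and (ii) the refined bias-variance analysis of the \emph{averaged} empirical quantile function needed to separate the $1/(n_j \abss{\nn{\eta}{i}})$ and $\log^2 n_j / n_j^2$ rates. It is here that the paper's promised new asymptotic results for Wasserstein barycenters over one-dimensional distributions are expected to play the decisive role.
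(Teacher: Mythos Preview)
Your decomposition via the oracle barycenter $\wtil{\mu}_{ij}$ is exactly the paper's. For the bias term both arguments concentrate $\rowdist{i}{u}$ via Hoeffding over the $\sim p^2\numcols$ shared columns; the paper then transfers to column $j$ by asserting a deterministic constant $c_f$ with $W_2^2(\mu_{uj},\mu_{ij})\le c_f\, D(\latRow{i},\latRow{u})$, whereas you use Markov in the randomness of $\latCol{j}$---since $\numrows$ is fixed, a union bound over $u\in\nn{\eta}{i}$ makes your route clean. One correction: $\rowdist{i}{u}$ concentrates not around $D$ but around $\mathbb{E}[W_2^2(Y_{iv},Y_{uv})]\ge D$ (the inequality holds because the expected empirical $W_2^2$ between two samples dominates the population $W_2^2$); you only need that one-sided bound, and it still delivers $D\le\eta+\mathcal{O}_p(1/(p\sqrt{\numcols}))$.

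For the variance term the routes genuinely diverge. The paper does \emph{not} take moments: it invokes the Cs\"org\H{o}--R\'ev\'esz strong approximation to write, uniformly in $t$ and in $u$, $\sqrt{n_j}\bigl(\quantile{Y_{uj}}-\quantile{\mu_{uj}}\bigr)=\mathbb{B}_{u,n_j}/(f_{\mu_{uj}}\!\circ\!\quantile{\mu_{uj}})+O(n_j^{-1/2}\log n_j)$, and then controls the $L^2$-norm of the averaged weighted Brownian bridges by Borell--TIS plus a Dudley entropy integral. The $1/(n_j|\nn{\eta}{i}|)$ term comes from the Gaussian-process piece; the $\log^2 n_j/n_j^2$ term is simply the squared strong-approximation error, not a quantile-bias contribution. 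Your pooled-variance-plus-squared-bias decomposition is more elementary, and since \cref{assum:regular} (density bounded below, $(\quantile{})'$ Lipschitz) makes the pointwise quantile bias uniformly $O(1/n_j)$, it would in fact yield $1/n_j^2$ without the log---your boundary hedging is over-cautious. What the paper's heavier machinery buys is that the very same Brownian-bridge coupling is reused verbatim to prove the pointwise asymptotic normality in \cref{thm:main-brownian-bridge}.
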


\cref{thm:main-asymptotic} shows that each estimated distribution $\Hat{\mu}_{ij}$ closely approximates its respective true distribution $\mu_{ij}$ asymptotically with high probability. The first two terms are due to the bias in estimating the distance between two rows' latent factors. The last two terms originate from the Wasserstein distance between the empirical and true distributions in column $j$. The term $p\sqrt{M}$ serves as a high-probability lower bound on the number of overlapping observed columns between two rows when calculating row distances in \cref{eq:avg-distance-def}. The third and fourth terms originate from the Wasserstein distance between the empirical and true distributions in column $j$ and how that error propagates through the Wasserstein barycenter calculation in \cref{eq:neighbors-def}.

% To prove \cref{thm:main-asymptotic}, we break up the bound into ``bias'' (first two terms) and ``variance'' terms (third and fourth terms), and then independently bound each term. 
% % Note that these terms are not, in fact, the bias and variance of our estimate.
% The bias term captures how close the nearest neighbors barycenter is to the true distribution we are trying to estimate. This bias is due to the fact that we can only provide an approximation of the distance between row latent factors from a finite number of columns. The variance term is the sampling error in each matrix entry from observing a finite number of samples.

There is an implicit tradeoff captured in \cref{thm:main-asymptotic} between reducing the distance threshold $\distThreshold$ and increasing the number of neighbors $\abss{\nn{\eta}{i}}$. To prove consistency in the data parameters alone, we require further conditions on the latent row and column spaces. Specifically, we prove consistency for \cref{ex:location-scale-homo,ex:location-scale-hetero} discussed previously in \cref{sec:setup}. In the following corollary, let $\Tilde{\mathcal{O}}_p$ suppress logarithmic dependencies and constants besides $(M,N,p,n_j,d)$. Then, we have the bounds (proven in \cref{sec:corr-proofs}):

\begin{corollary}[Location-scale consistency]\label{cor:location-scale}
    Let \cref{assum:latent-factors-lipschitz,assum:regular,assum:mcar} hold. 
    % Let $\nn{\distThreshold}{i}$ denote the number of neighbors for row $i$ given distance threshold $\distThreshold$. 
    Let $p$ be fixed. For the columns besides $j$, $v \neq j$, let the number of samples per entry in those columns be fixed and define their minimum as $n_{-j}$. Let the dimension of the latent row and column spaces be $d$. Let $\distThreshold = \Omega(\frac{1}{n_{-j}} + \sqrt{\frac{\log \numrows}{\numcols p^2}})$. Then, we have
    % Then, conditioned on the event ${\mathcal{E}=\{\abss{\nn{\distThreshold}{i}} \geq \frac{1}{4} {(\numrows p)}^{2/(d+2)}\}}$, we have 
    \begin{enumerate}[(a)]
        \item For the setting in \cref{ex:location-scale-homo}, we have as $n_j, \numrows,\numcols \to \infty$
        \begin{align}
        \label{eq:w2_bound_cor_1}
            W_2^2(\Hat{\mu}_{ij},\mu_{ij})
            &= \Tilde{\mathcal{O}}_p\parenth{\frac{1}{n_{-j}} + \frac{1}{p\sqrt{\numcols}} + \frac{1}{n_j{(\numrows p)}^{\frac{2}{d+2}}} + \frac{\log^2 n_j}{n_j^2}}.
        \end{align}
        % Furthermore, ${\probc*{\mathcal{E}} \geq 1 - 2\exp\big(-{(\numrows p)}^{2/(d+2)}/16}\big)$.
        
        \item For the setting in \cref{ex:location-scale-hetero}, the same bound~\cref{eq:w2_bound_cor_1} applies with the latent dimension $d = 1$.
        % Furthermore, ${\probc*{\mathcal{E}} \geq 1 - 2\exp\big(-{(\numrows p)}^{2/3}/16}\big)$.
    \end{enumerate}
\end{corollary}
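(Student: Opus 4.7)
The plan is to invoke \cref{thm:main-asymptotic} after choosing $\eta$ at the stated $\Omega$-lower bound and establishing $\abss{\nn{\eta}{i}} \geq (\numrows p)^{2/(d+2)}$ with high probability under the latent structure of \cref{ex:location-scale-homo,ex:location-scale-hetero}. Substituting these into \eqref{eq:main-bound} then yields \eqref{eq:w2_bound_cor_1}.

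To characterize $\rowdist{i}{u}$, I use that under \cref{ex:location-scale-homo}, formula \eqref{eq:wasserstein-1dim} gives $W_2^2(\mu_{iv}, \mu_{uv}) = \langle \latRow{i} - \latRow{u}, \latCol{v}\rangle^2$, so the population analogue of $\rowdist{i}{u}$ is a quadratic form in $\latRow{i} - \latRow{u}$ with a positive-definite matrix (the second-moment matrix of the uniform column factors) whose eigenvalues are bounded below by a positive constant. Under \cref{ex:location-scale-hetero}, $W_2^2(\mu_{iv},\mu_{uv}) = (\latRow{i} - \latRow{u})^2$ is independent of $v$, making this step immediate. The sampling deviation of $\rowdist{i}{u}$ from its population version has two ingredients: the empirical-to-true Wasserstein error per cell, which is $\mathcal{O}_p(1/n_{-j})$ under \cref{assum:regular} by standard sample-rate results (e.g.,\citep[Theorem 7.11]{bobkov2019one}); and the empirical-average deviation across shared columns, which is $\mathcal{O}_p(\sqrt{\log \numrows/\abss{\sharedcol{i}{u}}})$ by Bernstein after conditioning on the high-probability event $\abss{\sharedcol{i}{u}} \gtrsim \numcols p^2$ via a Chernoff bound. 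The lower bound imposed on $\eta$ is exactly what dominates both sources, so uniformly over $u$, every row with $\norm{\latRow{i} - \latRow{u}}^2 \lesssim \eta/L^2$ and $A_{uj}=1$ is included in $\nn{\eta}{i}$ with high probability.

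To count such rows, I use that $\latRow{u}$ is uniform on $[0,1]^d$, so the probability of the ball condition is of order $\eta^{d/2}$, and combining with the independent Bernoulli missingness yields an expected admissible count of $\Theta(\numrows p \eta^{d/2})$. Since $n_{-j}$ is fixed, $\eta \geq 1/n_{-j}$ is bounded below by a positive constant, so this expectation is $\Theta(\numrows p)$, and a multiplicative Chernoff bound gives $\abss{\nn{\eta}{i}} = \Omega(\numrows p)$ with probability $1 - \numrows^{-c}$. Since $\numrows p \geq (\numrows p)^{2/(d+2)}$ whenever $\numrows p \geq 1$, this immediately supplies the desired lower bound $\abss{\nn{\eta}{i}} \geq (\numrows p)^{2/(d+2)}$.

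Substituting $\eta = \Theta(\tfrac{1}{n_{-j}} + \sqrt{\log \numrows/(\numcols p^2)})$ and $\abss{\nn{\eta}{i}}^{-1} \leq (\numrows p)^{-2/(d+2)}$ into \cref{thm:main-asymptotic} produces $\eta + \frac{1}{p\sqrt{\numcols}} + \frac{1}{n_j (\numrows p)^{2/(d+2)}} + \frac{\log^2 n_j}{n_j^2}$, which matches \eqref{eq:w2_bound_cor_1} once the $\sqrt{\log \numrows}$ factor in $\eta$ is absorbed into $\tilde{\mathcal{O}}_p$. Part (b) follows by setting $d=1$ in the same argument. The main technical obstacle will be the uniform control of $\rowdist{i}{u}$ across all $u \in [\numrows+1]$: the union bound contributes the $\sqrt{\log \numrows}$ factor appearing in the $\eta$ lower bound, and the Bernstein step must be applied with careful conditioning on the random denominator $\abss{\sharedcol{i}{u}}$, whose concentration in turn rests on the well-conditioning of the empirical column second-moment matrix in \cref{ex:location-scale-homo}.
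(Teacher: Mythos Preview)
Your proposal is correct and follows the same overall strategy as the paper: invoke \cref{thm:main-asymptotic}, control the deviation of $\rowdist{i}{u}$ from its population value via Hoeffding/Bernstein plus the $O(1/n_{-j})$ empirical-to-true Wasserstein bias, and lower-bound $\abss{\nn{\eta}{i}}$ by a ball-volume argument in $[0,1]^d$ combined with a Chernoff bound. The paper packages the same ingredients through \cref{lemma:nn-lower-bound,lemma:neighbor-bound-simplified} and the general $\phi(x,r)$ function rather than working out the quadratic form explicitly for \cref{ex:location-scale-homo}.

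The one substantive difference is in the neighbor-count step: the paper introduces an auxiliary threshold $\eta'$ and tunes it to $\Theta\big(\tfrac{1}{n_{-j}}+(\numrows p)^{-2/(d+2)}\big)$ so that $\abss{\nn{\eta}{i}}\gtrsim(\numrows p)^{2/(d+2)}$ exactly, whereas you observe that fixed $n_{-j}$ already forces $\eta\geq c>0$, so the ball has constant volume and $\abss{\nn{\eta}{i}}=\Omega(\numrows p)$, which dominates $(\numrows p)^{2/(d+2)}$. Your shortcut is legitimate and in fact sharper under the corollary's hypotheses; the paper's route has the advantage of extending to regimes where $n_{-j}\to\infty$. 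Two minor remarks: the positive-definiteness of the column second-moment matrix is not actually needed for the inclusion direction you use (only the Lipschitz upper bound is), and the ``well-conditioning of the empirical column second-moment matrix'' you flag as an obstacle does not arise in either proof.
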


\cref{cor:location-scale} provides a high-probability consistency result using only data parameters and no longer relies on the distance threshold parameter $\distThreshold$. The $1/n_{-j}$ term corresponds to a lower bound on $\distThreshold$ which allows us to still guarantee a sufficient number of neighbors. The $(\numrows p)^{2 / (d+2)}$ term is a high-probability lower bound on the number of neighbors that holds as long as $\distThreshold$ satisfies its respective lower bound. The second and fourth terms play similar roles as discussed above in \cref{thm:main-asymptotic}. Note that similar to other non-parametric nearest neighbor results such as \citep{dwivedi2022counterfactual,dwivedi2022doubly,li2019}, \cref{cor:location-scale} suffers from the curse of dimensionality in the third term and provides worse scaling with $\numrows$ if $d$ is large. However, unlike scalar nearest neighbors, our bound also improves by increasing the number of samples we collect per matrix entry instead of only relying on the number of neighbors growing.

% Part (a) of \cref{cor:location-scale} corresponds to the setting of bilinear factor models with homoscedastic noise. This setting is commonly assumed in causal panel data settings with doubly robust estimators \citep{dwivedi2022doubly,agarwal2023causal}. Part (b) in the corollary corresponds to \cref{ex:location-scale-hetero}. We examine this setting in our simulations because it is easier to interpret in the distributional setting.
% {\color{red} Anish: I have no idea what to do with this result. It is not interpreted nor is it compared with Theorem 1.} \rd{and no efforts made to contextualize in any way with prior work or prior chit chat.}

%
% The first two terms go to 0 as $n_j \to \infty$. The third term goes to 0 as $\distThreshold \to 0$ and the final term goes to 0 as the number of columns, $\numcols \to \infty$.
%
% Comparing this bound to the rate of convergence of the nearest neighbors algorithm for scalar matrix completion from \citep{li2019} or \citep{dwivedi2022doubly}, we see one similar component: the dependence on the number of columns is also on the order of $\mathcal{O}((\sqrt{\numcols} p)^{-1})$. 
%
% However, the following remark is one major difference between noisy matrix completion and distributional matrix completion.

\paragraph{Discussion on \cref{thm:main-asymptotic}}
We discuss several important aspects of our theoretical results such as important differences with scalar nearest neighbor guarantees as well as novel results for Wasserstein barycenters. First, the number of neighbors and rows do not both have to increase to infinity for the error to go to 0. This is because as $n_j$ increases, we get a better estimate of the true distributions in the $j$-th column, and these true distributions can be used to more accurately construct $\mu_{ij}$. This is in contrast to scalar nearest neighbor methods, \cite{li2019,dwivedi2022counterfactual,dwivedi2022doubly}, which require the number of neighbors  to go to infinity to achieve consistency. If $n_j$ is finite, then the error will be bounded away from 0 because the Wasserstein barycenter of empirical distributions is a quantization of the Wasserstein barycenter of the true barycenter distribution. This is empirically shown in our simulations and follows from the definition of the Wasserstein barycenter of empirical distributions shown in \cref{eq:wasserstein-1dim-emp}. This is captured in the $\log^2 n_j / n_j^2$ term, which stems from the uniform error between an empirical quantile function and its respective true quantile function.

% \rd{okay I will recommend adding some other remarks / discussions around thm 1 -- the below is more about a technical result + proof techniques, and I think they should be separated out into a either a separate section (like Sec 4 of \url{https://arxiv.org/pdf/1810.00828}) or a subsection (like Sec 3.5 of \url{https://arxiv.org/pdf/2202.06891}). AoS will want some technical proof in the main paper. There are quite a few new results that you have derived to prove. Lay them out in the main paper -- the main steps and key results. Given that we have 6 more pages in the main paper left, I will dedicate at least 2-3 pages to this sub/section. And perhaps half to a page to proof sketch of  theorem 2. I am not sure but I think this proof presentation can be more like the EM paper -- and you have 6 pages -- why don't you make a plan/outline and then discuss with me -- but first read both the references I gave above.}

\subsection{Asymptotic normality of quantiles}

Before stating our next result, we define a new function from $[0,1]$ to $\R$ parameterized by a neighborhood $\nn{\eta}{i}$ in column $j$:
\begin{align}\label{eq:neighbor-sigma}
    \sigma^2_{\nn{\eta}{i}}(t) &\defeq \frac{1}{\abss{\nn{\eta}{i}}} \sum_{u \in \nn{\eta}{i}} \frac{t-t^2}{f_{\mu_{uj}}^2 (\quantile{\mu_{uj}}(t))}.
\end{align}
One can interpret this function as an analog of the variance term from the one-dimensional central limit theorem to the space of quantile functions. With this, we establish the following asymptotic normality result for any quantile of our estimator (proven in \cref{sec:proof-thm-brownian-bridge}):

\begin{theorem}[Asymptotic normality of $\quantile{\Hat{\mu}_{ij}}(t)$]\label{thm:main-brownian-bridge}
    Let \cref{assum:latent-factors-lipschitz,assum:mcar,assum:regular} hold. Let the sequence ${\{n_{j,\numcols},\numrows_\numcols, \eta_\numcols,\nn{\eta_\numcols}{i}\}}_{\numcols=1}^\infty$ satisfy
    \begin{align}\label{eq:bb-assum1}
        \sqrt{\abss{\nn{\eta_{M}}{i}}}\frac{\log n_{j,\numcols}}{\sqrt{n_{j,\numcols}}} = o_p(1), \quad \text{and}\quad n_{j,\numcols} \abss{\nn{\eta_\numcols}{i}}  \bigg(\eta_\numcols + \frac{\log(2\numrows_\numcols)}{\numcols p^2}\bigg) = o_p(1).
    \end{align} 
    Then, we have that for almost all $t \in (0,1)$
    \begin{align}\label{eq:normality}
        \frac{\sqrt{n_{j,\numcols} \abss{\nn{\eta_\numcols}{i}}}}{\sigma_{\nn{\eta_\numcols}{i}}(t)}
        \big(\quantile{\Hat{\mu}_{ij}}(t) - \quantile{\mu_{ij}}(t)\big)
        \overset{d}{\to} \mathcal{N}(0,1) \quad \text{as} \quad \numcols\to\infty.
    \end{align}
\end{theorem}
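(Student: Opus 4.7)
The plan is to decompose the estimator error into a purely stochastic term and a bias term, control the bias using the Lipschitz structure of the latent factor model together with concentration of the empirical row distances, and then apply a conditional central limit theorem (CLT) to the stochastic term via the Bahadur representation of sample quantiles.

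First I would condition on the $\sigma$-algebra $\mathcal{G}_M$ generated by all observations outside column $j$ together with all latent factors. Under this conditioning, the neighborhood $\nn{\eta_M}{i}$ is fixed and the samples $\{Y_{uj}\}_{u \in \nn{\eta_M}{i}}$ remain conditionally independent draws of size $n_{j,M}$ from $\mu_{uj}$. Using \cref{eq:dist-nn-barycenter}, I write
\[
\quantile{\hat\mu_{ij}}(t) - \quantile{\mu_{ij}}(t) = V_M(t) + B_M(t),
\]
where the stochastic (variance) term and the neighborhood-bias term are respectively
\[
V_M(t) = \frac{1}{\abss{\nn{\eta_M}{i}}}\sum_{u \in \nn{\eta_M}{i}}\bigl(\quantile{Y_{uj}}(t) - \quantile{\mu_{uj}}(t)\bigr), \quad B_M(t) = \frac{1}{\abss{\nn{\eta_M}{i}}}\sum_{u \in \nn{\eta_M}{i}}\bigl(\quantile{\mu_{uj}}(t) - \quantile{\mu_{ij}}(t)\bigr).
\]

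\textbf{Controlling the bias $B_M$.} For each $u \in \nn{\eta_M}{i}$, the empirical row distance $\rowdist{i}{u}$ satisfies $\rowdist{i}{u} \leq \eta_M$. Using a Hoeffding-type concentration of $\rowdist{i}{u}$ around its (latent-function–driven) conditional mean, together with the Lipschitz bound \cref{eq:lipschitz-latent-factors}, I obtain that uniformly over $u \in \nn{\eta_M}{i}$, $W_2^2(\mu_{uj}, \mu_{ij}) = \mathcal{O}_p\bigl(\eta_M + \log(2N_M)/(M p^2)\bigr)$. Since \cref{assum:regular} forces the quantile functions $\quantile{\mu_{uj}}$ to be $C^1$ with derivatives uniformly bounded away from $0$ and $\infty$ on a compact support, a Gagliardo–Nirenberg interpolation converts the $L^2$ quantile difference into a pointwise bound valid for Lebesgue a.e. $t \in (0,1)$; combined with Jensen's inequality, this gives $|B_M(t)|^2 = \mathcal{O}_p\bigl(\eta_M + \log(2N_M)/(M p^2)\bigr)$. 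The second condition in \cref{eq:bb-assum1} then ensures $\sqrt{n_{j,M}\,\abss{\nn{\eta_M}{i}}}\cdot B_M(t) = o_p(1)$.

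\textbf{Stochastic term via Bahadur representation and conditional CLT.} Using the Bahadur representation (whose remainder is $\mathcal{O}_p(\log n_j/n_j)$ under the regularity in \cref{assum:regular}),
\[
\quantile{Y_{uj}}(t) - \quantile{\mu_{uj}}(t) = -\frac{F_{Y_{uj}}\bigl(\quantile{\mu_{uj}}(t)\bigr) - t}{f_{\mu_{uj}}\bigl(\quantile{\mu_{uj}}(t)\bigr)} + R_{u,M},
\]
I split $V_M(t)$ into a leading linear functional of the empirical CDFs plus an averaged remainder. The first condition in \cref{eq:bb-assum1}, $\sqrt{\abss{\nn{\eta_M}{i}}}\log n_{j,M}/\sqrt{n_{j,M}} = o_p(1)$, ensures that $\sqrt{n_{j,M}\abss{\nn{\eta_M}{i}}}$ times the averaged remainder vanishes. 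The leading term is, conditional on $\mathcal{G}_M$, a sum of $n_{j,M}\abss{\nn{\eta_M}{i}}$ independent bounded mean-zero random variables whose conditional variance equals $\sigma_{\nn{\eta_M}{i}}^2(t)/(n_{j,M}\abss{\nn{\eta_M}{i}})$ by construction of $\sigma^2_{\nn{\eta_M}{i}}$ in \cref{eq:neighbor-sigma}. Verifying the Lindeberg condition is straightforward (all summands are uniformly bounded by a constant that depends only on the compact support and lower bound on the density from \cref{assum:regular}), giving a conditional CLT. Passing to the unconditional statement uses dominated convergence on the characteristic functions together with the fact that $\sigma_{\nn{\eta_M}{i}}^2(t)$ is $\mathcal{G}_M$-measurable.

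\textbf{Main obstacle.} The most delicate step is the bias control: converting the Wasserstein bound on $W_2^2(\mu_{uj}, \mu_{ij})$ into a pointwise quantile bound that holds for a.e. $t$ at the rate dictated by \cref{eq:bb-assum1}. This requires carefully leveraging the smoothness assumptions in \cref{assum:regular} (compact support, bounded density from below, and Lipschitz $(F^{-1})'$) so that the interpolation inequality is applicable uniformly over the random set of neighbors. A secondary concern is ensuring the Bahadur remainder is controlled uniformly over the (random) neighborhood; this is handled by the observation that, given the latent factors, every $\mu_{uj}$ lies in the same regularity class, so a single uniform remainder bound applies to all $u \in \nn{\eta_M}{i}$ simultaneously.
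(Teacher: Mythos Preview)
Your decomposition into $V_M(t)+B_M(t)$ and your treatment of the bias term are essentially what the paper does: bound $W_2^2(\mu_{uj},\mu_{ij})$ via concentration of $\rowdist{i}{u}$, then pass from the $L^2$ quantile bound to a pointwise one using the uniform Lipschitz property of the quantile functions (the paper phrases this as equicontinuity plus a lemma from Garsia rather than Gagliardo--Nirenberg, but the content is the same).

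The substantive difference is in the variance term, and here your argument has a gap. You invoke the Bahadur representation with remainder $R_{u,M}=\mathcal{O}_p(\log n_j/n_j)$, but this rate is not available: Kiefer's theorem gives the exact order of the Bahadur remainder as $n_j^{-3/4}$ up to iterated-log factors, and this is optimal. With the correct remainder, after scaling by $\sqrt{n_{j,M}\,|\nn{\eta_M}{i}|}$ and averaging, you get a term of order $\sqrt{|\nn{\eta_M}{i}|}\cdot n_{j,M}^{-1/4}$, which vanishes only when $|\nn{\eta_M}{i}|=o_p(\sqrt{n_{j,M}})$. That is strictly stronger than the first hypothesis in \cref{eq:bb-assum1}, which permits $|\nn{\eta_M}{i}|$ as large as $n_{j,M}/\log^2 n_{j,M}$. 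So the Bahadur route, as written, does not close under the stated assumptions.

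The paper avoids this by using the Cs\"org\H{o}--R\'ev\'esz strong approximation instead: it couples $\sqrt{n_j}\,\hat q_{uj}$ directly to a weighted Brownian bridge $\mathbb{B}_{u,n_j}/(f_{\mu_{uj}}\circ F^{-1}_{\mu_{uj}})$ with a uniform $O(n_j^{-1/2}\log n_j)$ error. That remainder, after the same scaling and averaging, is exactly $\sqrt{|\nn{\eta_M}{i}|}\,\log n_{j,M}/\sqrt{n_{j,M}}$, matching the first condition in \cref{eq:bb-assum1}. A further bonus of the coupling is that the leading term is already exactly Gaussian at each fixed $t$ (a finite sum of independent $\mathcal{N}(0,t-t^2)$ variables with deterministic weights, conditional on the neighborhood), so no Lindeberg verification is needed. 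Your Lindeberg step is fine in itself, but the Bahadur linearization feeding into it is too coarse for the theorem as stated; replacing it with the Brownian-bridge coupling would repair the argument.
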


The two regularity conditions in \cref{eq:bb-assum1} ensure that the data and algorithmic parameters scale in a way that ensures asymptotic normality. The first condition requires the number of samples per entry in column $j$, $n_j$ grows fast relative to the number of neighbors, $\abss{\nn{\eta_\numcols}{i}}$. Each distribution used in the barycenter calculation is observed with some error because we only see a finite number of samples per matrix entry. This error propagates through the barycenter calculation, and thus this condition is required to control that error propagation. The second condition requires that the bias in estimating the distance between row latent factors goes to 0 faster than the the product $n_j \abss{\nn{\eta_\numcols}{i}}$. This bias contributes to our error in estimating the neighborhood set of a row, which propagates through our barycenter calculation. Thus, similar to the first condition, we require that the bias goes to 0 at a reasonable rate. Note that similar to \cref{thm:main-asymptotic}, this result does not require the number of neighbors to grow to $\infty$. However, this result is invalid if the number of neighbors is 0.

\begin{corollary}[Location-scale asymptotic normality of $\quantile{\Hat{\mu}_{ij}}(t)$]\label{cor:confidence-intervals}
    Let \cref{assum:latent-factors-lipschitz,assum:mcar,assum:regular} hold. Let $p$ be fixed. For the columns besides $j$, $v \neq j$, let the number of samples per entry in those columns be fixed and define their minimum as $n_{-j}$. Let the dimension of the latent row and column spaces be $d$. Consider the sequence ${\{n_{j,\numcols},n_{-j,\numcols},\numrows_\numcols\}}_{\numcols=1}^\infty$. Let $\distThreshold = \Omega\big(\frac{1}{n_{-j,\numcols}} + \sqrt{\frac{\log \numrows_\numcols}{\numcols p^2}}\big)$. Then, we have:
    \begin{enumerate}[(a)]
        \item For the setting in \cref{ex:location-scale-homo}, \cref{eq:normality} holds with $\nn{\eta}{i}$ being any subset of the neighbors of size $\order(\numrows_{\numcols}^{2/(d+2)})$.
        % \begin{align}\label{eq:cor2-assum1}
        %     \frac{\log n_{j,\numcols}}{\sqrt{n_{j,\numcols}}}(p\numrows_\numcols)^{-d-2} = o_p(1), \quad \frac{n_{j,\numcols} (p \numrows_{\numcols})^{2/(d+2)}}{n_{-j,\numcols}} = o_p(1), \qtext{and}
        % \end{align}
        % \begin{align}
        %     \frac{n_{j,\numcols} (p \numrows_{\numcols})^{2/(d+2)}\sqrt{\log \numrows_{\numcols}}}{p\sqrt{\numcols}} = o_p(1).
        % \end{align}
        % \rd{Does it have to be exactly this size? That is weird.} \jf{Fixed it. Just needs to be that order to not blow up the bias.}
        \item For the setting in \cref{ex:location-scale-hetero}, we have the same result, but with the latent dimension $d=1$.
    \end{enumerate}
\end{corollary}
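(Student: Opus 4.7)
The plan is to reduce \cref{cor:confidence-intervals} to \cref{thm:main-brownian-bridge} by specializing its hypotheses to the location-scale models of \cref{ex:location-scale-homo,ex:location-scale-hetero}. The work splits into (i) verifying \cref{assum:latent-factors-lipschitz} with explicit Lipschitz constants, (ii) establishing a high-probability lower bound of order $N_M^{2/(d+2)}$ on the neighborhood size under the prescribed $\eta_M$, and (iii) selecting an arbitrary subset of neighbors of that size and plugging into \cref{thm:main-brownian-bridge}.

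For (i), I would use \cref{eq:wasserstein-1dim} directly. In \cref{ex:location-scale-homo},
\begin{align*}
    W_2^2(f(x_1, y), f(x_2, y)) = \langle x_1 - x_2, y\rangle^2 \leq \|y\|^2\,\|x_1-x_2\|^2 \leq d\,\|x_1-x_2\|^2,
\end{align*}
since $y \in [0,1]^d$, so \cref{assum:latent-factors-lipschitz} holds with $L = \sqrt{d}$; in \cref{ex:location-scale-hetero} the analogous calculation gives $L=1$. For (ii), the prescribed lower bound $\eta_M = \Omega(1/n_{-j,M} + \sqrt{\log N_M / (M p^2)})$ dominates the bias in the empirical row distance $\rho_{iu}$ relative to its latent analog $L^2 d_{\mathrm{row}}^2(\latRow{i}, \latRow{u})$. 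The two bias sources are (a) finite-sample Wasserstein estimation error on each shared column, which is $O(1/n_{-j,M})$ under \cref{assum:regular}, and (b) the Bernstein-controlled deviation from the random overlap of observed columns, which is $O(\sqrt{\log N_M/(M p^2)})$ on an event of probability $1 - O(N_M^{-c})$. Hence with high probability $\nn{\eta_M}{i}$ contains every row whose latent point lies in a Euclidean ball of radius $\Theta(\sqrt{\eta_M}/L)$ around $\latRow{i}$; uniformity of the latent factors on $[0,1]^d$ combined with a Chernoff bound on the resulting Binomial count then gives $|\nn{\eta_M}{i}| = \Omega(N_M \eta_M^{d/2}) = \Omega(N_M^{2/(d+2)})$ at the minimal prescribed order of $\eta_M$. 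These arguments reproduce those used in the proof of \cref{cor:location-scale}.

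For (iii), taking any subset $\mathcal{N} \subseteq \nn{\eta_M}{i}$ of size $s_M = O(N_M^{2/(d+2)})$ in place of $\nn{\eta_M}{i}$, the two conditions \cref{eq:bb-assum1} reduce to the implicit scaling requirements
\begin{align*}
    s_M^{1/2}\,\frac{\log n_{j,M}}{\sqrt{n_{j,M}}} = o_p(1) \quad\text{and}\quad n_{j,M}\, s_M\,\Bigl(\eta_M + \tfrac{\log(2 N_M)}{M p^2}\Bigr) = o_p(1),
\end{align*}
which the corollary inherits directly from the sequence $\{n_{j,M}, n_{-j,M}, N_M\}_{M=1}^\infty$ it postulates. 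Under these, \cref{thm:main-brownian-bridge} delivers \cref{eq:normality} verbatim with $\sigma_{\mathcal{N}}$ given by \cref{eq:neighbor-sigma}; the case $d=1$ yields \cref{ex:location-scale-hetero}. The main obstacle is step (ii): one must simultaneously control the concentration of $\rho_{iu}$ (combining uniform control of empirical quantile processes from \cref{assum:regular} with Bernstein on the overlap) and execute the volumetric Chernoff argument. Once that is in place, everything else is a direct reduction to \cref{thm:main-brownian-bridge}.
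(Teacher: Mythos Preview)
Your proposal is correct and mirrors the paper's own route: the paper does not give a standalone proof of \cref{cor:confidence-intervals} but instead defers to the neighborhood lower-bound machinery in \cref{sec:corr-proofs} (Lemmas on $|\nn{\eta}{i}|$ together with the hypercube volume argument of \cref{cor:unif-hypercube}) and then invokes \cref{thm:main-brownian-bridge}, which is exactly your steps (ii)--(iii). The only cosmetic discrepancy is that the paper uses Hoeffding rather than Bernstein for the deviation of $\rho_{iu}$, and it separates $\eta$ from an auxiliary $\eta'$ to absorb the $O(1/n_{-j})$ bias before applying the volumetric bound, but these are implementation details that do not change the structure of the argument.
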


% \jf{
\cref{cor:confidence-intervals} illustrates two settings in which an asymptotic normality result is applied, considering only the data parameters provided the distance threshold $\distThreshold$ satisfies a suitable scaling. The asymptotic normality result then holds for any subset of the neighborhood set that is of the size given in \cref{cor:confidence-intervals}. The neighborhood set is guaranteed to satisfy the constraints with high probability, as proven in \cref{sec:corr-proofs}. Notably, like in \cref{thm:main-brownian-bridge}, we do not require the number of neighbors to diverge to infinity, and can in fact be capped at a constant value with the normality result still holding.
% }

The asymptotic distributions in \cref{thm:main-brownian-bridge,cor:confidence-intervals} allow us to provide approximate confidence bands for the quantile function of our estimate. Note that this is a pointwise, and not necessarily uniform, convergence in distribution. 
However, using Bonferroni's correction \citep{weisstein2004bonferroni}, we can use this pointwise result to provide uniform confidence bands for the quantile function, as shown in \cref{sec:sim}. To exactly calculate $\sigma_{\nn{\eta_\numcols}{i}}(t)$, we require access to the true distributions of row $i$'s neighborhood because we need the density and quantile functions for each neighbor. However, we show in \cref{sec:sim} that a bootstrap estimate provides a reasonable approximation. Kernel density estimates (KDE's) can also be used to estimate $\sigma_{\nn{\eta_\numcols}{i}}(t)$. See \citep[Eq. (9)]{martinet2022variance} for an example of using KDE's to estimate quantities like $\sigma_{\nn{\eta}{i}}(t)$. Next, we prove \cref{thm:main-asymptotic} and highlight an intermediate result which has applications in optimal transport. 
% \rd{What important applications does Prop 1 have? It would be good to have a paragraph / remark for it.} \jf{I added a paragraph after prop 1 discussing how it bridges the previous literature on empirical barycenters and quantile functions, which gives more tools to optimal transport ppl.}

\section{Proof of \protect\cref{thm:main-asymptotic}}\label{sec:proof-techniques}
Here, we provide key parts of the proof of \cref{thm:main-asymptotic}. A notable feature of our proof is a new result on distance between barycenter of a collection of measures and the barycenter of the empirical counterparts of that collection (\cref{prop:barycenters}), that can be of independent interest.

% of measures and theirand highlight the novel aspects of our proof. We then review previous work on bounding the ``variance'' component which yields an additive error bound between the number of neighbors and the number of samples per entry. Next we discuss how this additive bound fails to capture how increasing the number of neighbors produces a better estimate of a distribution's quantile function. Finally, we present our new multiplicative error bound on the ``variance'' term which captures this behavior.

\subsection{Overview of the proof of \protect\cref{thm:main-asymptotic}}
We start with a basic decomposition of the error, and identify quantities that are akin to ``bias'' and ``variance'' in a typical decomposition of squared error. To do so, we first define $\Bar{\mu}_{ij}$,  the Wasserstein barycenter for the \emph{true} distributions in the neighboring rows in column $j$:
\begin{align}\label{eq:mu_bar_def}
    \Bar{\mu}_{ij} 
    &\defeq \argmin_{\mu} \sum_{u \in \nn{\eta}{i}} W_2^2(\mu, \mu_{uj}).
    \label{eq:true_mean}
\end{align}
From \citep[Eq.~8]{bigot2020}, this minimum has a closed form solution in 1-dimension where $\Bar{\mu}_{ij}$ is the measure with quantile function given by
\begin{align}
    \quantile{\Bar{\mu}_{ij}} = \frac{1}{\lvert \nn{\eta}{i} \rvert} \sum_{u \in\nn{\eta}{i}} \quantile{\mu_{uj}}.
\end{align}
Next, we find that
\begin{align}\label{eq:bias-variance}
    W_2^2\parenth{\Hat{\mu}_{ij}, \mu_{ij}}
    &\seq{\cref{eq:wasserstein-1dim}} \lnorm{\quantile{\Hat{\mu}_{ij}} - \quantile{\mu_{ij}}}^2 \\
    &= \lnorm{\quantile{\Hat{\mu}_{ij}} - \quantile{\Bar{\mu}_{ij}} + \quantile{\Bar{\mu}_{ij}} - \quantile{\mu_{ij}}}^2 \\
    &\overset{(a)}{\leq} \parenth{\lnorm{\quantile{\Hat{\mu}_{ij}} - \quantile{\Bar{\mu}_{ij}}} + \lnorm{\quantile{\Bar{\mu}_{ij}} - \quantile{\mu_{ij}}}}^2\\
    &\overset{(b)}{\leq} 2\bigg[\underbrace{ W_2^2\parenth{\Hat{\mu}_{ij}, \Bar{\mu}_{ij}}}_{\defeq \mathcal{V}} + \underbrace{ W_2^2\parenth{\Bar{\mu}_{ij}, \mu_{ij}}}_{\defeq \mathcal{B}}\bigg],
    \label{eq:basic_decomp}
\end{align}
where $(a)$ follows from the Minkowski inequality \citep[Thm. 198]{hardy1952inequalities} and $(b)$ follows from the Cauchy-Schwarz inequality \citep[Thm. 7]{hardy1952inequalities}. We refer to  $\mathcal{B}$ as the ``bias'' because it captures the error from estimating the distance between rows. Similarly, we refer to $\mathcal{V}$ as the ``variance'' because it captures the error from having a finite number of samples per cell. Next, we bound the terms $\mathcal{B}$ and $\mathcal{V}$ with two intermediate claims: 
% First, we claim the following bound on the variance term $\mathcal{V}$:
\begin{align}\label{claim:asym-bias-bound}
    \mc B = W_2^2\parenth{\Bar{\mu}_{ij}, \mu_{ij}} &= \order_p \big(\eta + (\numcols p^2)^{-1/2}\big) \quad \text{as} \quad \numcols \to \infty, \qtext{and}
 \\ 
    % Fix $(i,j)$ and let $\Bar{\mu}_{ij}$ be defined as in \cref{eq:mu_bar_def}, $\hat{\mu}_{ij}$ be the distribution with quantile function defined in \cref{eq:dist-nn-barycenter}, and $\nn{\eta}{i}$ defined as in \cref{eq:neighbors-def}. Then, the measures in column $j$, $\sbraces{\mu_{s, j}}_{s=1}^{N}$, satisfy the conditions of \cref{prop:barycenters}, and thus
    % \begin{align}\label{eq:variance-bound}
        \mc V = W_2^2\parenth{\Hat{\mu}_{ij}, \Bar{\mu}_{ij}}
        &= 
        \order_p\parenth{\frac{1}{n_j |\nn{\eta}{i}|} + \frac{\log^2 n_j}{n_j^2} }
        \qtext{as} n_j \to\infty.
        \label{claim:barycenters-ok}
    % \end{align}.
\end{align}
which when put together with \cref{eq:basic_decomp} yields the desired claim in \cref{thm:main-asymptotic}.

The proof of \cref{claim:asym-bias-bound} is provided in \cref{sec:proof-asym-bias-bound}, and it proceeds by using Minkowski's inequality and Hoeffding concentration to generalize the proof from scalar setting to the distributional setting. The proof of \cref{claim:barycenters-ok} is technically more involved and relies on new results for Wasserstein barycenters, and we dedicate the next two subsections to contextualize our derivation with respect to prior work.

% Finally, in \cref{sec:proof-asym-bias-bound}, using we derive the following asymptotic bound on the bias term $\mathcal{B}$,
% \begin{align}
%     W_2^2(\hat{\mu}_{ij},\mu_{ij}) &\sless{\cref{eq:bias-variance}} 2(W_2^2\parenth{\Hat{\mu}_{ij}, \Bar{\mu}_{ij}} + W_2^2\parenth{\Bar{\mu}_{ij}, \mu_{ij}}) \\
%     &= \mathcal{O}_p\parenth{\frac{1}{n_j \abss{\nn{\eta}{i}}} + \frac{\log^2 n_j}{n_j^2} + \eta + (\numcols p^2)^{-1/2}} \quad \text{as} \quad n_j, \numcols \to \infty.
% \end{align}

\subsection{Prior additive error bound for empirical Wasserstein barycenter}
To control the variance term $ W_2^2\parenth{\Hat{\mu}_{ij}, \Bar{\mu}_{ij}}$, one can leverage the rich literature on the Wasserstein distance bounds between empirical and true distributions for the Wasserstein barycenters in \citep{bobkov2019one,bigot2013consistent,bigot2018upper,bigot2020,le2022fast}. Specifically, consider a setting with $k$ distributions $\{\mu_i\}_{i=1}^k$ with corresponding empirical distributions $\{\hat{\mu}_i\}_{i=1}^k$, where $\hat \mu_i$ denotes the empirical measure of $n$ i.i.d.\,samples from $\mu_i$. Let $\Bar{\mu}$ and $\Bar{\mu}_{n}$ denote the Wasserstein barycenters of the collections $\{\mu_i\}_{i=1}^k$ and $\{\hat{\mu}_i\}_{i=1}^k$. Then \citep[Sec. 3.2.2]{bigot2020} shows that
\begin{align}
    \expect*{W_2^2(\Bar{\mu},\Bar{\mu}_{n})} = \order\parenth{\frac{1}{k} + \frac{1}{n}}
    \label{eq:prior_result}
\end{align}
where the expectation is taken over the randomness of sampling for each distribution. However, this scaling only provides an upper bound, e.g., when each $\mu_i$ is a uniform distribution on (possibly distinct but uniformly) bounded interval, then one can easily derive an improved and tight error rate (for completeness we state a formal result in \cref{sec:unif}):
\begin{align}
    \expect*{W_2^2\parenth{\Bar{\mu},\Bar{\mu}_{n}}} = \Theta\parenth{\frac{1}{nk} + \frac{1}{n^2}}.
    \label{eq:w2_ours}
\end{align}
The scaling in \cref{eq:w2_ours} illustrates that depending on the number of measure $k$, the error rate scaling with respect to the number of samples $n$ can range from $n^{-1}$ to $n^{-2}$. This phenomenon is consistent with the empirical trends in \cref{sec:sim,fig:error-samples-rows}, even though the distributions are not uniform. The remainder of our proof establishes that the improved error rate \cref{eq:w2_ours} in fact holds as long as each underlying distribution is regular.
% improve on prior work to achieve a rate closer to the one proven in \cref{eq:w2_ours}.

% \rd{Unclear how much the next few lines are helping. They need to elaborated much more and perhaps stated as a remark.} \jf{I'm mainly trying to show that all of these different threads of literature are all touching on the same topic. So, the techniques and theorems in each can be applied across the areas.} \rd{the sentences are confusing and I still do not understand.}
% One way to interpret empirical quantile functions is as an approximate quantization of the underlying distribution's quantile function, as shown in \cref{fig:quantile}. It is not exactly a quantization because the points in the empirical quantile function are not always on the true quantile function, as demonstrated at $p\in [1/5, 3/5]$ in the figure for the empirical distribution with 5 samples. With this quantization interpretation in mind, though, it is possible to provide a lower bound on the best possible rate from vector quantization literature: $1/n^2$ as shown in \cite[Thm. 2.1]{pages2015introduction}. 

\begin{figure}
    \centering
    \includegraphics[width=0.5\linewidth]{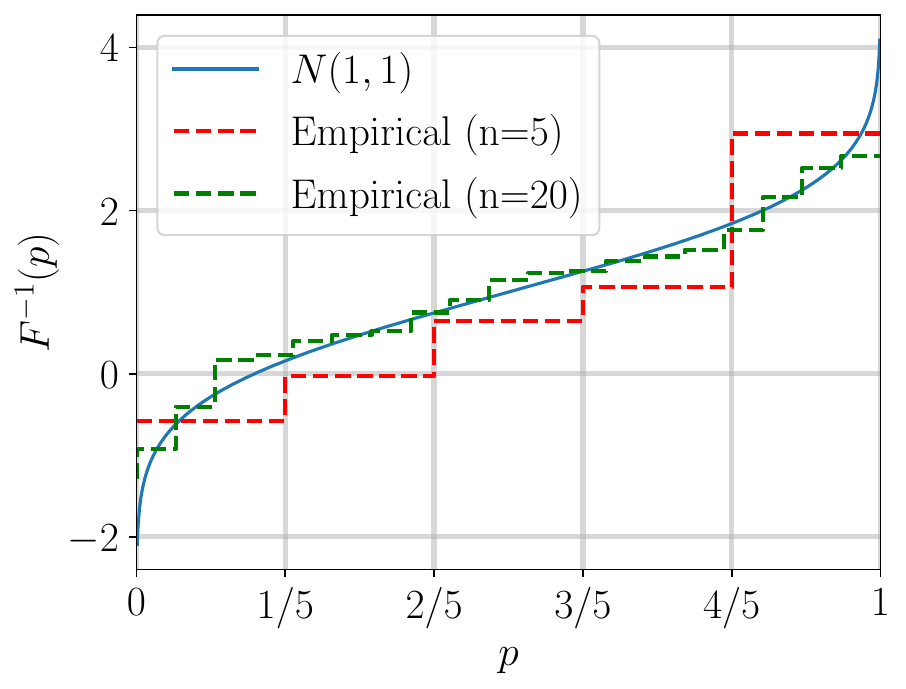}
    \caption{\textbf{True and empirical quantile functions for a Gaussian distribution with mean 1 and variance 1.} The red empirical distribution is generated from 5 random samples drawn from the Gaussian distribution. The green empirical distribution is generated from 20 random samples. Increasing the number of samples provides a closer approximation to the true quantile function.}
    \label{fig:quantile}
\end{figure}

\subsection{New multiplicative error bound for empirical Wasserstein barycenter}
Our next result, proven in  \cref{proof_of_empirical_convergence_barycenter}, characterizes the convergence of empirical barycenter to the population barycenter as a function of both the sample size and the number of distributions:
\begin{proposition}[Convergence of the barycenter of empirical measures]\label{prop:barycenters}
    % Consider a collection of measures $\sbraces{\nu_j}_{j=1}^{k}$ each of which satisfies \cref{assum:regular} \rd{this is incorrect way of citing assumption 3 -- you can read and see why; I rewrote}.
    Consider a collection of regular (\cref{def:regular-measure}) measures $\sbraces{\nu_j}_{j=1}^{k}$ and let $\what \nu_{j, n}$ denote the empirical distribution obtained from $n$ i.i.d. samples from $\nu_j$ for each $j \in [k]$. Define the two barycenters
    \begin{align}\label{eq:barrycenters}
        \overline{\nu} 
        \defeq \argmin_{\nu} \sum_{j =1}^k W_2^2(\nu, \nu_{j})
        \qtext{and}
        \what{\overline{\nu}}_n 
        \defeq \argmin_{\nu} \sum_{j =1}^k W_2^2(\nu, \hat\nu_{j,n}).
    \end{align}
    Then we have
    \begin{align} 
    W_2^2(\what{\overline{\nu}}_n,\overline{\nu} ) = \order_p\parenth{\frac{1}{n k} + \frac{\log^2 n}{n^2}} \qtext{as} n\to\infty,
    \label{eq:our_w2_rate_prop1}
    \end{align}
    uniformly with constants in $\order_p(\cdot)$ that depend neither on the set $\{\nu_j\}_{j=1}^k$ or its size $k$.
\end{proposition}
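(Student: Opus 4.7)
The plan is to exploit the closed-form representation of one-dimensional Wasserstein barycenters via quantile functions (\cref{eq:barycenter-quantile}). This lets me rewrite the squared distance as
\begin{align*}
W_2^2(\what{\overline{\nu}}_n, \overline{\nu}) = \int_0^1 \left(\frac{1}{k}\sum_{j=1}^k e_j(t)\right)^2 dt, \qquad e_j(t) \defeq F^{-1}_{\what\nu_{j,n}}(t) - F^{-1}_{\nu_j}(t),
\end{align*}
and then to separately control the contributions of a zero-mean linear part of $e_j$ (which enjoys an averaging gain across $j$) and a higher-order remainder (which does not). This cleanly mirrors the gap between the naive bound \cref{eq:prior_result} and the sharper \cref{eq:w2_ours}: the cross-$j$ cancellations are what convert the $1/k$ in the naive bound into $1/(nk)$.

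\textbf{Decomposition and the two terms.} Under the regularity conditions of \cref{def:regular-measure}, I would invoke a Bahadur-type representation $e_j(t) = L_j(t) + R_j(t)$, where
\begin{align*}
L_j(t) \defeq -\frac{F_{\nu_j, n}(F^{-1}_{\nu_j}(t)) - t}{f_{\nu_j}(F^{-1}_{\nu_j}(t))}
\end{align*}
is a zero-mean linear statistic with variance $t(1-t)/[n f^2_{\nu_j}(F^{-1}_{\nu_j}(t))]$, and $R_j$ is a higher-order remainder. Since the samples generating different $\what\nu_{j,n}$ are independent, the $L_j$'s are independent across $j$, so expanding the square and invoking the uniform lower bound $f_{\nu_j} \geq C$ from \cref{def:regular-measure} gives
\begin{align*}
\mathbb{E}\left[\int_0^1 \left(\frac{1}{k}\sum_{j=1}^k L_j(t)\right)^2 dt\right] = \frac{1}{k^2 n}\sum_{j=1}^k \int_0^1 \frac{t(1-t)}{f^2_{\nu_j}(F^{-1}_{\nu_j}(t))} dt = \mathcal{O}\!\left(\frac{1}{nk}\right).
\end{align*}
For the remainder, I would appeal to integrated second-moment bounds for the Bahadur remainder in the Bobkov-Ledoux framework, which under \cref{def:regular-measure} yield $\mathbb{E}[\int_0^1 R_j(t)^2 dt] = \mathcal{O}(\log^2 n / n^2)$ uniformly in $j$. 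By Cauchy-Schwarz,
\begin{align*}
\int_0^1 \left(\frac{1}{k}\sum_{j=1}^k R_j(t)\right)^2 dt \leq \frac{1}{k}\sum_{j=1}^k \int_0^1 R_j(t)^2 dt = \mathcal{O}_p\!\left(\frac{\log^2 n}{n^2}\right).
\end{align*}
Markov's inequality promotes the first-moment bound on the linear term to an $\mathcal{O}_p$ bound, and one more Cauchy-Schwarz application shows the $L_j$-$R_j$ cross term is dominated by the sum of the two contributions. Assembling the pieces yields \cref{eq:our_w2_rate_prop1}.

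\textbf{Main obstacle.} The chief difficulty is securing a Bahadur representation whose remainder bounds are (a) at the $\log^2 n / n^2$ scale in integrated second moment, and (b) uniform in the measure $\nu_j$ with constants depending only on the parameters of \cref{def:regular-measure} (and independent of both $k$ and the specific collection $\{\nu_j\}$). This is exactly what the regularity package in \cref{def:regular-measure} is engineered to provide: the bounded support combined with monotone density near the endpoints controls the boundary behavior; the uniform lower bound $f \geq C$ makes $1/f^2$ integrable; the Lipschitz condition on $(F^{-1})'$ supplies the second-order smoothness that cuts the remainder to order $n^{-2}$ (modulo logs); and the weighted-ratio condition \cref{eq:regularity-F} is the Bobkov-Ledoux condition that delivers the sharp $n^{-1}$ rate in $\mathbb{E}[W_2^2(\what\nu_{j,n}, \nu_j)]$ with constants controlled by the same parameters. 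Carefully propagating this uniformity through the integrated remainder bound — and thereby preserving the averaging gain $1/(nk)$ from the linear term while matching it with the non-averaging remainder $\log^2 n / n^2$ — is the main technical work.
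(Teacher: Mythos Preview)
Your overall strategy---rewrite via quantile functions, split each $e_j$ into a mean-zero linear piece that averages across $j$ and a higher-order remainder---is exactly the paper's. Your treatment of the linear term (direct second moment, using independence across $j$ and the density lower bound) is correct, and in fact simpler than the paper's route for that piece: the paper couples to Brownian bridges and then controls the weighted average of bridges via Borell--TIS and Dudley's entropy integral, whereas your second-moment computation gets the same $\mathcal{O}_p(1/(nk))$ with less machinery.

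The gap is in the remainder. With the Bahadur linearization $L_j(t) = -[F_{\nu_j,n}(F^{-1}_{\nu_j}(t)) - t]/f_{\nu_j}(F^{-1}_{\nu_j}(t))$, the remainder $R_j$ is the Bahadur--Kiefer remainder, whose size is $\Theta(n^{-3/4})$ up to iterated logs in sup norm (Kiefer's exact rate), and $\mathbb{E}[R_j(t)^2] \asymp n^{-3/2}$ pointwise. Hence $\mathbb{E}\int_0^1 R_j(t)^2\,dt = \Theta(n^{-3/2})$, not $\mathcal{O}(n^{-2}\log^2 n)$. The Bobkov--Ledoux monograph does not supply a sharper bound here: their $J_2/n + O(n^{-2})$ expansions concern $\mathbb{E}[W_2^2(\hat\nu_{j,n},\nu_j)]$ itself, not the integrated square of the Bahadur remainder. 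With your Cauchy--Schwarz step on the remainder (which forfeits any cross-$j$ averaging), your argument as written yields only $W_2^2 = \mathcal{O}_p(1/(nk) + n^{-3/2})$, which is strictly weaker than \cref{eq:our_w2_rate_prop1}.

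The paper closes this gap by choosing a different linear part: not the empirical-CDF-based Bahadur term, but a \emph{coupled} Brownian bridge $\mathbb{B}_{j,n}/(f_{\nu_j}\circ F^{-1}_{\nu_j})$, furnished by the Cs\"org\H{o}--R\'ev\'esz strong approximation (a quantile-process version of the KMT/Hungarian construction). That coupling achieves
\[
\sup_{t\in(0,1)}\Big|\,f_{\nu_j}(F^{-1}_{\nu_j}(t))\,\sqrt{n}\big(F^{-1}_{\hat\nu_{j,n}}(t)-F^{-1}_{\nu_j}(t)\big)-\mathbb{B}_{j,n}(t)\,\Big| \;\stackrel{a.s.}{=}\; \mathcal{O}\!\left(\frac{\log n}{\sqrt{n}}\right),
\]
so after dividing by $f_{\nu_j}\geq C$ and squaring one gets the $n^{-2}\log^2 n$ remainder. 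It is conditions (v) and \cref{eq:regularity-F} in \cref{def:regular-measure}---rather than Lipschitzness of $(F^{-1})'$---that drive this rate, and the paper checks that the Cs\"org\H{o}--R\'ev\'esz constants are uniform in $j$ (the key step being a union bound in the underlying KMT inequality). In short, the decomposition you need is KMT-based, not Bahadur-based; the extra factor of $n^{-1/4}$ in the remainder is exactly what the Hungarian coupling buys, and without it the stated rate cannot be reached by your Cauchy--Schwarz bound on the remainder.
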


% Because this result is potentially of independent interest, we separate it out into its own result with separate notation from our algorithm. 
When compared to the additive error scaling $1/k + 1/n$ in \cref{eq:prior_result} from prior work, the error scaling $1/(nk) + \log^2 n/n^2$ from \cref{prop:barycenters} is a strict and significant improvement. Moreover, note that the error rate in \cref{eq:our_w2_rate_prop1} is uniform in two ways: (1) over the domain of the quantile function, $[0,1]$, and (2) over the space of quantile functions that can be generated by the latent function $f$ and the latent row and column spaces. Consequently, this rate can also be the random measures generated by the distribution over the latent spaces. In the context of our nearest neighbor guarantee, the former uniformity allows us to invoke the faster error rate for a fixed set of distributions (i.e., conditional on the neighboring distributions for a given distribution); the latter uniformity allows us to extend the argument to unconditionally, to a random collection of neighboring distributions. 

\paragraph{Bridge between empirical Wasserstein barycenters and empirical quantile functions} To the best of our knowledge, \cref{prop:barycenters} provides the tightest error bound for empirical Wasserstein barycenters approaching their respective true distributions. We prove this bound by bridging two research threads: error bounds for empirical Wasserstein barycenters and empirical quantile functions. In particular, the improved guarantee relies on two additional regularity conditions, standard in the literature on empirical quantile functions~\citep{csorgo1978strong}, imposed on the distributions in comparison to previous work in Wasserstein barycenters, namely parts (iv) and (v) in \cref{def:regular-measure}. This bridge provides researchers in optimal transport with more tools and proof techniques to utilize when analyzing empirical Wasserstein barycenters.

With \cref{prop:barycenters}, we can now establish the bound~\cref{claim:barycenters-ok} on the variance term $\mc V$ from the decomposition~\cref{eq:basic_decomp}:

\subsubsection{Proof of \cref{claim:barycenters-ok}}
% \rd{This needs to be rewritten -- we just need to argue why we can invoke Prop 1. And say blah blah why. There are two steps: Conditioned on the neighbors, why? And then unconditioned on the neighbors, why? And we should just walk through those steps. Currently its a bit convoluted as to what is going on and then eventually it comes out.}
With \cref{prop:barycenters} at hand, let us proceed to proving the claim~\cref{claim:barycenters-ok}. Our key step is to first condition on the latent row and column factors, so that the distributions in the neighborhood set $\nn{\eta}{i}$ are fixed objects, and \cref{prop:barycenters} can be applied. In the remainder of the proof, we abuse the notation and let $Y_{uv}$ denote the samples in the matrix entry $(u,v)$ regardless of whether $(u,v)$ is observed or not.
    % \jf{Here, we prove how we can apply \cref{prop:barycenters} to our matrix completion setting by conditioning on the latent row and column factors. We do this so that we can analyze the distributions in the neighborhood set as fixed, not random, objects, which is what \cref{prop:barycenters} requires. Note that in this proof, we abuse notation slightly by letting $Y_{uv}$ be the samples in matrix entry $(u,v)$ regardless if $(u,v)$ is observed or not.}
    
    For $u \in [\numrows + 1]$, let $I_u$ be the indicator random variable that $\rowdist{i}{u} \leq \eta$. Recall the definition of the row-wise distance from \cref{eq:avg-distance-def}: 
    \begin{align}
        \rowdist{i}{u} &\defeq \begin{cases}
            \lvert \sharedcol{i}{u}\rvert^{-1} \sum_{v \in \sharedcol{i}{u}} W_2^2(Y_{iv}, Y_{uv})&\quad \text{if $\lvert \sharedcol{i}{u}\rvert \geq 1$} \\
            \infty &\quad \text{if $\lvert \sharedcol{i}{u}\rvert = 0$}
        \end{cases}, \quad \text{where} \\
    % \end{align}
    % \begin{align}
    \label{eq:shared-col-def}
        \sharedcol{i}{u} &\defeq \{v \in [\numcols+1] \setminus\{j\}: A_{iv} = 1, A_{uv} = 1\}.
    \end{align}
    We have $Y_{uj} \indep I_u$ since (i) the samples in column $j$ are not used to calculate $\rowdist{i}{u}$ and (ii), from \cref{assum:mcar}, the missingness of an entry is generated independently of the samples in that entry and all the latent factors. Next, we condition on the latent row and column factors $\mc{U}_{\mathrm{row}}$ and $\mc{U}_{\mathrm{col}}$, making each distribution corresponding to the rows in the neighborhood set fixed. 
    % $A_{uj}$ is dependent on $\rowdist{i}{u}$ because $\sharedcol{i}{u}$ is only defined if $A_{uj}=1$. But, we already showed by MCAR that ${Y_{uj} \indep A_{uj}}$.
    
    Now, we are ready to apply \cref{prop:barycenters}, and then subsequently remove the conditioning. In column $j$, we assume that each observed matrix entry has $n_j$ samples that are drawn i.i.d. from \cref{sec:setup}. Thus, we can apply \cref{prop:barycenters} to the empirical distribution set $\{Y_{uj}\}_{u \in \nn{\eta}{i}}$. Finally, since the bound in \cref{prop:barycenters} is uniform with universal constants that do not depend on the distributions or $\abss{\nn{\eta}{i}}$, then we can remove the conditioning on the latent row and column factors $\mc{U}_{\mathrm{row}}$ and $\mc{U}_{\mathrm{col}}$.

\subsubsection{Proof of \protect\cref{prop:barycenters}: Convergence of the barycenter of empirical measures}
\label{proof_of_empirical_convergence_barycenter}
From \citep[Eq. 8]{bigot2020}, the quantile functions of each barycenter has an explicit formula:
\begin{align}
    \quantile{\overline{\nu}} = \frac1k \sum_{j=1}^k \quantile{\nu_{j}} \quad \text{and} \quad 
    \quantile{\what{\overline{\nu}}} = \frac1k \sum_{j=1}^k \quantile{\hat\nu_{j,n}}.
\end{align}
So, we can write the Wasserstein distance between the two barycenters as:
\begin{align}\label{eq:wass-barycenter}
    W_2^2(\what{\overline{\nu}},\overline{\nu})
    \seq{\cref{eq:wasserstein-1dim}} \big\Vert\quantile{\what{\overline{\nu}}} - \quantile{\overline{\nu}}\big\Vert^2 
    &\seq{\cref{eq:barycenter-quantile}} \lnorm{\frac1k \sum_{j=1}^k \parenth{\quantile{\hat\nu_{j,n}} - \quantile{\nu_{j}}}}^2 \\
    &= \frac1n \lnorm{\frac1k \sum_{j=1}^k \sqrt{n}\parenth{\quantile{\hat\nu_{j,n}} - \quantile{\nu_{j}}}}^2 \\ 
    &\defeq \frac1n \bigg\Vert{\frac1k \sum_{j=1}^k \sqrt{n}\hat{q}_{\nu_j,n} } \bigg\Vert^2_{L^2(0,1)},
\end{align}
where we have defined $\hat{q}_{\nu_j,n} \defeq \sqrt{n}(\quantile{\Hat{\nu}_{j,n}} - \quantile{\nu_j})$ in the last step.
% Next, for a measure $\nu_j$ and its respective empirical measure on $n$ samples, $\Hat{\nu}_{j,n}$, define \rd{slightly ugly parenth on RHS}
% \begin{align}\label{eq:emp-quantile}
%     \hat{q}_{\nu_j,n} \defeq \sqrt{n}\parenth{\quantile{\Hat{\nu}_{j,n}} - \quantile{\nu_j}}.
% \end{align}
To complete the proof, we next derive the asymptotic distribution of $\hat{q}_{\nu_j,n}$. While it is well known that $\hat{q}_{\nu_j,n}$ converges to a weighted Brownian bridge in distribution \citep[Ch. 18]{shorack2009empirical},\footnote{A stochastic process $\mathbb{B}$ is a standard Brownian bridge if it is a Gaussian process where for $s,t \in (0,1)$, $\expect*{\mathbb{B}(t)} = 0, \Cov\parenth{\mathbb{B}(s),\mathbb{B}(t)}=\min(s,t) - st$. \citep[Prop. 8.1.1]{ross1995stochastic}} the next lemma establishes a stronger result, namely a  convergence in probability for the barycenters. Its proof  builds on a strong uniform bound for empirical quantile functions provided in \cite{csorgo1978strong} and is provided in \cref{sec:proof-l2-strong-approx-Op1}.
% equire convergence in probability, and hence make use of a stronger result via \emph{Hungarian embeddings} \citep[Pg. 269]{van2000asymptotic}. In particular, using results from \citep[Pgs. 268-269]{van2000asymptotic}, we can construct an approximation of $\hat{q}_{\nu_j,n}$ via a sequence of standard Brownian bridges, our next lemma states.
\begin{lemma}[Approximation of barycenter]\label{lemma:l2-strong-approx-Op1}
    Consider a collection of measures $\sbraces{\nu_j}_{j=1}^{k}$ each of which satisfies \cref{assum:regular}. For each $j\in [k]$, let $\what \nu_{j, n}$ denote the empirical distribution obtained from $n$ i.i.d. samples from $\nu_j$.
    Then, for each $j \in [k]$, there exists a sequence of standard Brownian bridges $\{\mbb B_{j,n}\}_{n=1}^\infty$ and a universal constant $c$ such that almost surely
    \begin{align}\label{eq:lemma-l2-strong-approx}
        \frac{1}{n}\lnorm{\frac{1}{k} \sum_{j=1}^k \bigg(\sqrt{n}(\quantile{\Hat{\nu}_{j,n}} - \quantile{\nu_j}) \!-\! \frac{\mbb{B}_{n,j}}{f_j \circ \quantile{j}}\bigg)}^2
        \!\leq\! \frac{c\log^2 n}{n^2}
        \!=\! \order\bigg(\frac{\log^2 n}{n^2}\bigg).
    \end{align}
\end{lemma}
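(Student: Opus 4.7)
The plan is to reduce the claim to a per-distribution strong approximation of the empirical quantile process and then combine the bounds via Minkowski's inequality in $L^2(0,1)$. For a single regular measure $\nu$ in the sense of \cref{def:regular-measure}, the strong approximation theorem of Csörgő and Révész (see e.g.~\cite{csorgo1978strong}, or \cite[Ch.~18]{shorack2009empirical}) provides, on a probability space carrying $n$ i.i.d.\ samples from $\nu$, a standard Brownian bridge $\mbb{B}_n$ such that almost surely
\[
\sup_{t \in (0,1)} \bigg|\sqrt{n}\big(F^{-1}_{\hat\nu_n}(t) - F^{-1}_\nu(t)\big) - \frac{\mbb{B}_n(t)}{f(F^{-1}_\nu(t))}\bigg| = O\bigg(\frac{\log n}{\sqrt{n}}\bigg).
\]
The hypotheses needed for this rate---compact support, density bounded away from zero, a Lipschitz derivative of the quantile function, monotonicity of $f$ near each endpoint, and the inequality in \cref{eq:regularity-F}---are precisely the conditions encoded in \cref{def:regular-measure}. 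The first step is therefore to invoke this theorem independently for each of the $k$ measures $\nu_j$, on a product space, to obtain independent bridges $\{\mbb{B}_{j,n}\}_{j=1}^k$ each satisfying the bound above.

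Next, define the per-$j$ remainder $g_{j,n}(t) \defeq \sqrt{n}(F^{-1}_{\hat\nu_{j,n}}(t) - F^{-1}_{\nu_j}(t)) - \mbb{B}_{j,n}(t)/(f_j \circ F^{-1}_{\nu_j})(t)$. Since $(0,1)$ has unit length, the sup-norm bound above implies $\|g_{j,n}\|_{L^2(0,1)} \leq \|g_{j,n}\|_{\infty} = O(\log n/\sqrt{n})$ almost surely. Applying Minkowski's inequality in $L^2(0,1)$ to the average then yields
\[
\bigg\|\frac{1}{k}\sum_{j=1}^k g_{j,n}\bigg\|_{L^2(0,1)} \leq \frac{1}{k}\sum_{j=1}^k \|g_{j,n}\|_{L^2(0,1)} = O\bigg(\frac{\log n}{\sqrt{n}}\bigg),
\]
and squaring and dividing by $n$ produces the stated $O(\log^2 n/n^2)$ bound, with no dependence on $k$.

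The main obstacle is to ensure that the constant hidden in the $O(\log n/\sqrt{n})$ term is \emph{universal}, i.e.\ depends only on the regularity package $(C, L', a, b)$ together with the uniform constant in \cref{eq:regularity-F}, and not on the individual distributions $\nu_j$. A careful tracking of the Csörgő--Révész construction shows this is indeed the case: the lower bound $f \geq C$ controls the inversion of the CDF, the Lipschitz bound on $(F^{-1})'$ controls the modulus of continuity of the quantile transform at the Brownian scale, and the ratio condition \cref{eq:regularity-F} together with the endpoint monotonicity in \cref{def:regular-measure}(v) controls the tail behaviour of the weighted empirical quantile process. This uniformity is precisely what \cref{prop:barycenters} needs so that the bound can subsequently be transferred to random collections of neighboring measures, and in turn allows removal of the conditioning on the latent row and column factors when proving \cref{claim:barycenters-ok}.
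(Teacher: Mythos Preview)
Your proposal is correct and follows essentially the same route as the paper: invoke the Cs\"org\H{o}--R\'ev\'esz strong approximation for each $\nu_j$ to obtain a sup-norm bound $O(\log n/\sqrt{n})$, then average via the triangle inequality and pass to the $L^2(0,1)$ norm. The paper is simply more explicit where you write ``a careful tracking of the Cs\"org\H{o}--R\'ev\'esz construction shows this''---it unpacks the proof into the constituent Theorems~B, C, D of \cite{csorgo1978strong} and verifies, step by step (via a union bound over $j$ for the uniform approximation of the uniform quantile process, and by reading off the explicit constants in the $\gamma$-dependent remainder of their Theorem~3), that the resulting constant depends only on the regularity parameters and not on the particular measures or on $k$.
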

% \rd{tell me where its proven.}
Using the Brownian bridges appearing in \cref{lemma:l2-strong-approx-Op1},  we obtain that
\begin{align}
    W_2^2(\what{\overline{\nu}},\overline{\nu})
    &\seq{\cref{eq:wass-barycenter}} \frac1n \lnorm{\frac1k \sum_{j=1}^k \hat{q}_{\nu_j,n}}^2 \\
    &= \frac1n \lnorm{\frac1k \sum_{j=1}^k \bigg(\hat{q}_{\nu_j,n} - \frac{\mathbb{B}_{j,n}}{f_{\nu_j} \circ \quantile{\nu_j}} + \frac{\mathbb{B}_{j,n}}{f_{\nu_j} \circ \quantile{\nu_j}}\bigg)}^2 \\
    &\sless{(i)} \frac2n \lnorm{\frac1k \sum_{j=1}^k \bigg(\hat{q}_{\nu_j,n} - \frac{\mathbb{B}_{j,n}}{f_{\nu_j} \circ \quantile{\nu_j}}\bigg)}^2 + \frac2n \lnorm{\frac1k \sum_{j=1}^k \frac{\mathbb{B}_{j,n}}{f_{\nu_j} \circ \quantile{\nu_j}}}^2,
    \label{eq:w2_decomp}
\end{align}
where $(i)$ follows from applying Minkowski's inequality \citep[Thm. 198]{hardy1952inequalities} followed by the Cauchy-Schwarz inequality \citep[Thm. 7]{hardy1952inequalities}. While \cref{lemma:l2-strong-approx-Op1} establishes that the first term on the RHS of the above display is bounded almost surely, the next result provides tight control on the second term in probability. 
\begin{lemma}[Norm of weighted average of Brownian bridges]\label{lemma:l2-strong-approx-bounded}
    For each $j \in [k]$, let $\{\mbb B_{j,n}\}_{n=1}^\infty$ be a sequence of independent standard Brownian bridges, and let $w_j:[0, 1] \to \real$ be $L$-Lipschitz.
    Then, we have
    \begin{align}
        \frac1n \lnorm{\frac1k \sum_{j=1}^k w_j \cdot \mathbb{B}_{j}}^2
        &= \order_p\parenth{\frac{1}{nk}}.
    \end{align}
\end{lemma}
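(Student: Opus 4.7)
The plan is to prove the bound via a direct second-moment computation followed by Markov's inequality. Define the averaged process
\[ S_k(t) \defeq \frac{1}{k} \sum_{j=1}^k w_j(t)\, \mathbb{B}_j(t), \]
so that the claim is equivalent to $\|S_k\|_{L^2(0,1)}^2 = \order_p(1/k)$, after which dividing by $n$ yields the stated rate. First, I would exchange expectation and integration via Fubini's theorem applied to the non-negative random field $S_k(t)^2$, reducing the problem to controlling the pointwise second moment:
\[ \expect*{\|S_k\|_{L^2(0,1)}^2} = \int_0^1 \expect*{S_k(t)^2}\, dt. \]

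For the pointwise second moment, I would expand the square and use that the bridges $\{\mathbb{B}_j\}_{j=1}^k$ are independent and mean-zero so that all cross terms vanish, then apply the standard identity $\expect*{\mathbb{B}_j(t)^2} = t(1-t)$ to obtain
\[ \expect*{S_k(t)^2} = \frac{t(1-t)}{k^2}\sum_{j=1}^k w_j(t)^2. \]
Since each $w_j$ is $L$-Lipschitz on the compact set $[0,1]$, it is continuous and hence bounded, and the argument requires a uniform-in-$j$ bound $M \defeq \sup_{t,j}|w_j(t)| < \infty$. In the intended application at \cref{proof_of_empirical_convergence_barycenter}, where $w_j = 1/(f_{\nu_j}\circ \quantile{\nu_j})$, the density lower bound in \cref{def:regular-measure}(ii) gives $\|w_j\|_\infty \leq 1/C$ uniformly across $j$. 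Substituting and using $\int_0^1 t(1-t)\,dt = 1/6$ then yields
\[ \expect*{\|S_k\|_{L^2(0,1)}^2} \leq \frac{M^2}{6k}. \]

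The final step is to convert this expectation bound into an in-probability statement via Markov's inequality applied to the non-negative random variable $\|S_k\|_{L^2(0,1)}^2$, giving $\|S_k\|_{L^2(0,1)}^2 = \order_p(1/k)$; dividing by $n$ delivers the claimed $\order_p(1/(nk))$ rate. The main (minor) obstacle is pinning down the uniform-in-$j$ control of the weights $w_j$: a bare $L$-Lipschitz hypothesis alone is insufficient, since it only gives $|w_j(t)| \leq |w_j(0)| + L$ on $[0,1]$ and leaves the anchor value $w_j(0)$ unconstrained across $j$. In practice one either strengthens the hypothesis with an explicit uniform sup bound or imports this control from the regularity conditions imposed on the underlying measures at the invocation site. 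Beyond this, the argument is a routine variance computation powered by the independence of the Brownian bridges.
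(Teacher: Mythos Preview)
Your argument is correct and takes a genuinely different, more elementary route than the paper. The paper rescales to $\mathbb{G}_k \defeq k^{-1/2}\sum_{j=1}^k w_j\,\mathbb{B}_j$ and shows $\sup_{t\in[0,1]}\mathbb{G}_k(t)=\order_p(1)$ uniformly in $k$: it bounds the pointwise variance of $\mathbb{G}_k$, applies the Borell--TIS inequality for concentration of the supremum of a Gaussian process, and controls $\mathbb{E}[\sup_t \mathbb{G}_k(t)]$ via Dudley's entropy integral, where the Lipschitz hypothesis on $w_j$ is used to verify the canonical-metric bound $d(s,t)^2\le \tilde C\,|s-t|$. It then passes from the sup bound to the $L^2$ bound by $\|\mathbb{G}_k\|_{L^2}^2\le(\sup_t\mathbb{G}_k(t))^2$. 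Your second-moment-plus-Markov computation bypasses all of this machinery and delivers exactly the $L^2$ statement the lemma asks for; in fact your argument never uses the Lipschitz property, only a uniform sup bound on the $w_j$, which (as you correctly flag) both approaches need to import from the regularity assumptions at the invocation site. The paper's route yields the stronger supremum conclusion $\sup_t\mathbb{G}_k(t)=\order_p(1)$, but that extra strength is not required by the lemma as stated nor by its use in \cref{prop:barycenters}.
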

We prove \cref{lemma:l2-strong-approx-bounded} in  \cref{sec:proof-l2-strong-approx-bounded} by showing that weighted sum of Brownian Bridges is a Gaussian process and then proving that this sum is uniformly bounded over all values of $k$. Finally, putting together \cref{eq:w2_decomp,lemma:l2-strong-approx-Op1,lemma:l2-strong-approx-bounded}, we obtain \cref{eq:our_w2_rate_prop1} as claimed in \cref{prop:barycenters}.
% \begin{align}
%     W_2^2(\what{\overline{\nu}},\overline{\nu})
%     &= \order_p\parenth{\frac{\log^2 n}{n^2} + \frac{1}{nk}}.
% \end{align}
% Next, we demonstrate \distnn\, numerically using simulations and a real-world example.
\section{Numerical results}\label{sec:sim}
Here, we empirically show that \distnn\, is able to recover missing distributions using both simulations and the real-world example motivated in \cref{sec:intro}. With our simulations, we empirically show error decay rates similar to those proven in \cref{sec:main-thm}.

\subsection{Simulation results}
We simulate the location-scale setting described in \cref{ex:location-scale-hetero}. The Python code to run our tests is available at our repository on \href{https://github.com/jacobf18/Dist-NN}{GitHub}. Our method is also available for use in \textbf{N}$^2$ (\href{https://github.com/aashish-khub/NearestNeighbors}{GitHub}), a Python package which presents a unified interface for nearest neighbor-based matrix completion \cite{chin2025n2unifiedpythonpackage}. In these simulations, we let the sample size per matrix entry be $n$ and thus equal across columns. In \cref{fig:error-samples-rows}, we show error decay rates with respect to the number of samples $n$, the number of rows $\numrows+1$, and the product of $n$ and the number of neighbors in the Gaussian location-scale case. We test the Gaussian case to connect with previous matrix completion literature, which primarily considers Sub-Gaussian noise. We also test our method on continuous uniform distributions in \cref{fig:random-samples,fig:confidence-regions}.

Each matrix entry is drawn from the same distribution family with only the location or scale being different between matrix entries like in \cref{ex:location-scale-hetero}. We draw locations from $Unif(-5,5)$ and scales from $Unif(1,5)$. {\em Without loss of generality}, our experiments have one missing entry in the matrix. Nearest neighbors estimates one entry at a time. Thus, if we can estimate one matrix entry, then we can estimate all of them. 

We use cross-validation on observed matrix entries to choose our threshold parameter $\eta$. Specifically, if we are trying to estimate $\mu_{ij}$, then we loop over each observed entry in row $i$, hold it out, and run our method to estimate that left our entry. We then compare our estimate with the observed entry. We choose the $\eta$ that minimizes the squared Wasserstein distance between our estimate and the observed entry. Since this problem is nonlinear and nonconvex, we use the Tree of Parzen Estimators (TPE) method \citep{bergstra2011algorithms} to choose $\eta$. This method is a Bayesian optimization method that uses a Gaussian process to model the objective function. We use the Python library \texttt{hyperopt} to run TPE and used the standard settings with a maximum of 50 iterations.

\paragraph{Error with respect to number of samples, $n$} As shown in \cref{fig:error-samples-rows}, as the number of samples, $n$, and the number of rows, $\numrows+1$, increase, our estimation error drops rapidly. For the error plot against the number of samples, we can see that our error decay rate improves from about $\mathcal{O}(n^{-1})$ to $\mathcal{O}(n^{-1.16})$ as the number of neighbors increases. This is supported by our theoretical result in \cref{thm:main-asymptotic} where as the number of neighbors increases, the dominant rate with respect to $n$ becomes $\mathcal{O}(\log^2 n / n^2)$. We also see that the error rate power with respect to $n \lvert\nn{\eta}{i}\rvert$ is around $\mathcal{O}(\parenth{n\abss{\nn{\eta}{i}}}^{-0.8})$, which is close to the asymptotic bound of $\mathcal{O}((n\abss{\nn{\eta}{i}})^{-1})$. 

\paragraph{Error with respect to number of rows, $\numrows$} For the error plot against the number of rows, we see that the error also drops rapidly with the number of rows. Again, we manage to achieve a better error decay rate than is predicted by our theoretical results. We also plot the expected error of an observed random sample in the dotted line to show that our method is able to produce an estimate that is far better than an observed random sample. Even for just 20 rows, our error is already significantly better than the expected error of a random sample. Thus, our synthetic sample is a much better estimate of the true distribution than an entry's random samples alone. We call this ability ``denoising'' because it mirrors the denoising ability of scalar nearest neighbors for noisy matrix completion.

\begin{figure}[ht!]
    \centering
    \resizebox{\linewidth}{!}{
        \begin{tabular}{ccc}
            \includegraphics[width=0.33\linewidth]{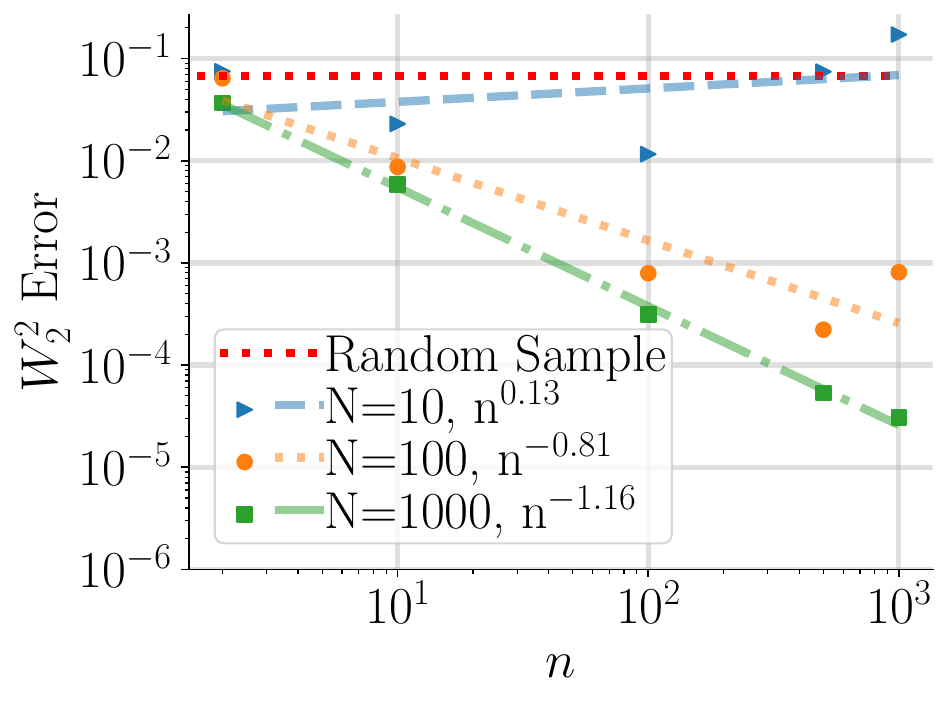} &
            \includegraphics[width=0.33\linewidth]{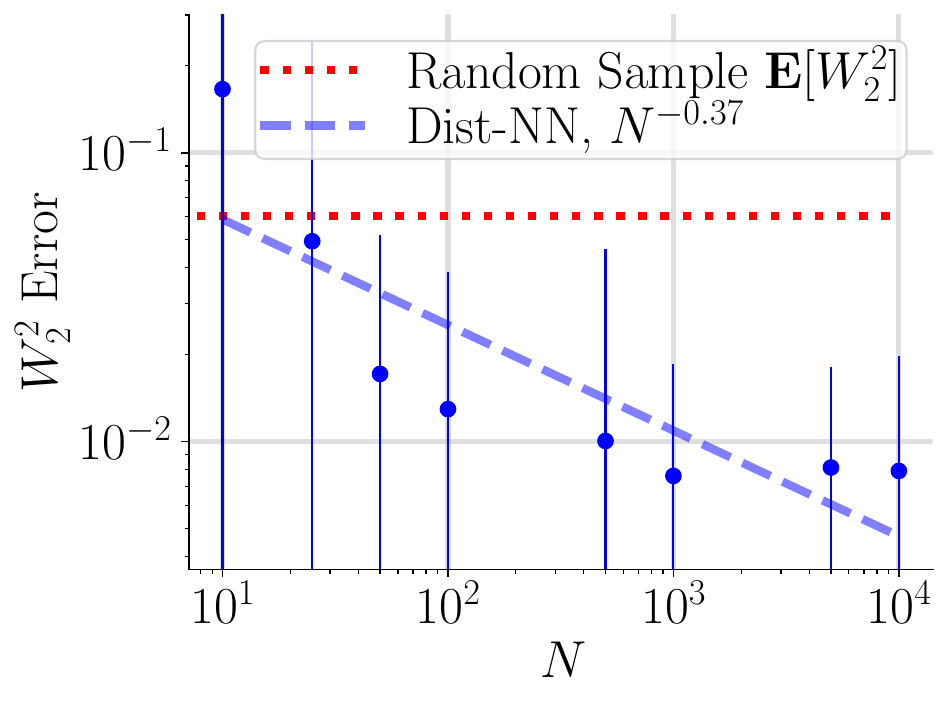} &
            \includegraphics[width=0.33\linewidth]{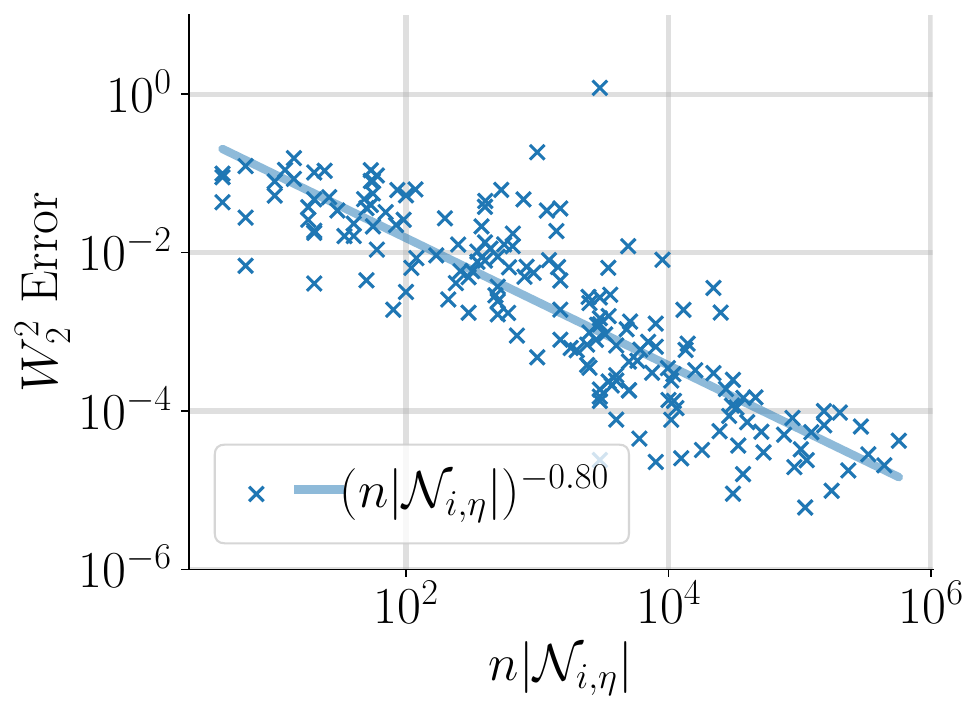}
            \\
            \parbox[c]{0.33\linewidth}{\centering (a) Scaling with \# samples $n$} & 
            \parbox[c]{0.33\linewidth}{\centering (b) Scaling with \# rows $\numrows$} & 
            \parbox[c]{0.33\linewidth}{\centering (c) Scaling with $n\abss{\nn{\eta}{i}}$}
        \end{tabular}
    }
    \caption{\tbf{Scaling of error with sample size $n$, number of rows $\numrows$, and effective sample size $n\abss{\nn{\eta}{i}}$.} Every distribution in the matrix is a Gaussian distribution. Each row has an expected value sampled from $Unif(-5,5)$. Each column has a standard deviation sampled from $Unif(1,5)$. In plot (a), we set the number of columns to 30. We also require at least 2 nearest neighbors. In plots (a) and (b), we draw a random sample 100 times to estimate the expected error of a random sample in one matrix entry by itself. In plot (c), we set the number of samples to 500 and the number of columns to 10. We also cut the plot off on the top at 0.4 so that the lower error samples can be better visualized. We simulated each setting 50 times. Each curve is fitted using least squares to the power function $f(x)=a x^b$.}\label{fig:error-samples-rows}
\end{figure}

\paragraph{Confidence bands} Using the results of \cref{thm:main-brownian-bridge}, we can provide confidence bands for the quantiles of our estimates. In this simulated setting, we have access to the true distributions of the matrix entries we take a barycenter over. In practice, the $\sigma_{i,\nn{\eta}{i}}$ quantity would need to be estimated. However, we show that a bootstrap estimate of the confidence bands using the empirical samples alone provides a good estimate of the true confidence bands.

In \cref{fig:confidence-regions}, we plot both the true and bootstrap confidence bands in the Gaussian and uniform location-scale cases. For our bootstrap method, we resample from both the individual neighboring distributions and resample over the neighbor set itself. We resample over samples and neighbors 10 times each for these simulations. In these settings, the bootstrap confidence bands are more conservative than the asymptotic confidence bands. Note that we provide estimates for the Gaussian case even though the Gaussian distribution does not satisfy our assumption that the true distribution has a continuous quantile function on [0,1] because it is undefined at the boundary. This is why our estimate is poor around the boundary. However, the continuous uniform distribution satisfies our assumption. So, our estimate is much better around the boundary.
Note that we provide estimates of simultaneous confidence bands as opposed to pointwise confidence bands for the quantile function, i.e., our confidence regions provide 95\% coverage at each of the $n$ points simultaneously. We provide simultaneous confidence bands using Bonferroni's correction \citep{weisstein2004bonferroni} by dividing the confidence level, $\alpha=0.05$, by the number of confidence intervals we plot, $n$.

\begin{figure}[ht!]
    \begin{center}
    \resizebox{\linewidth}{!}{
        \begin{tabular}{cc}
            \includegraphics[width=0.45\linewidth]{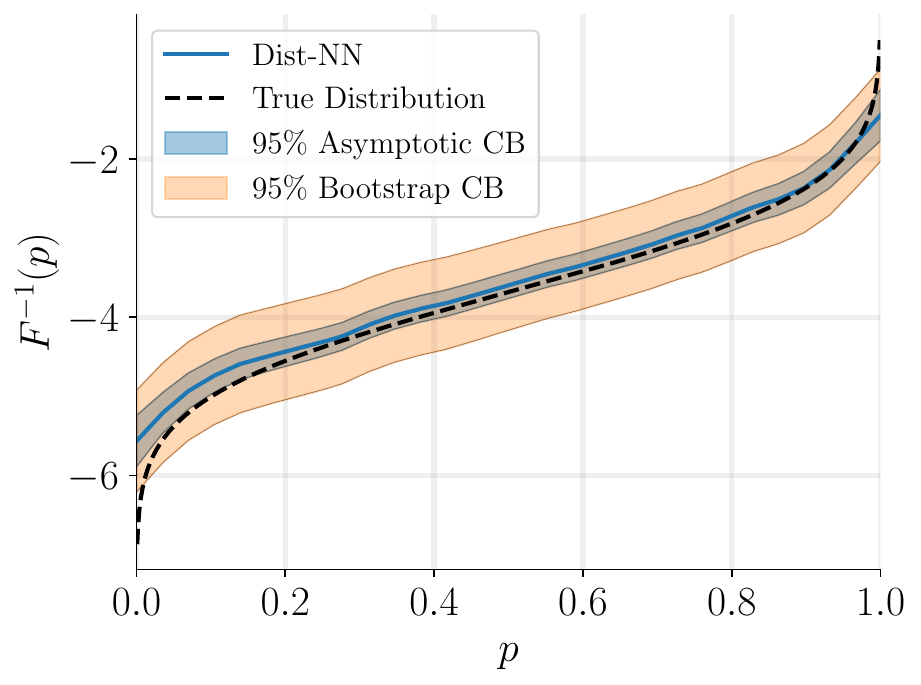} &
            \includegraphics[width=0.45\linewidth]{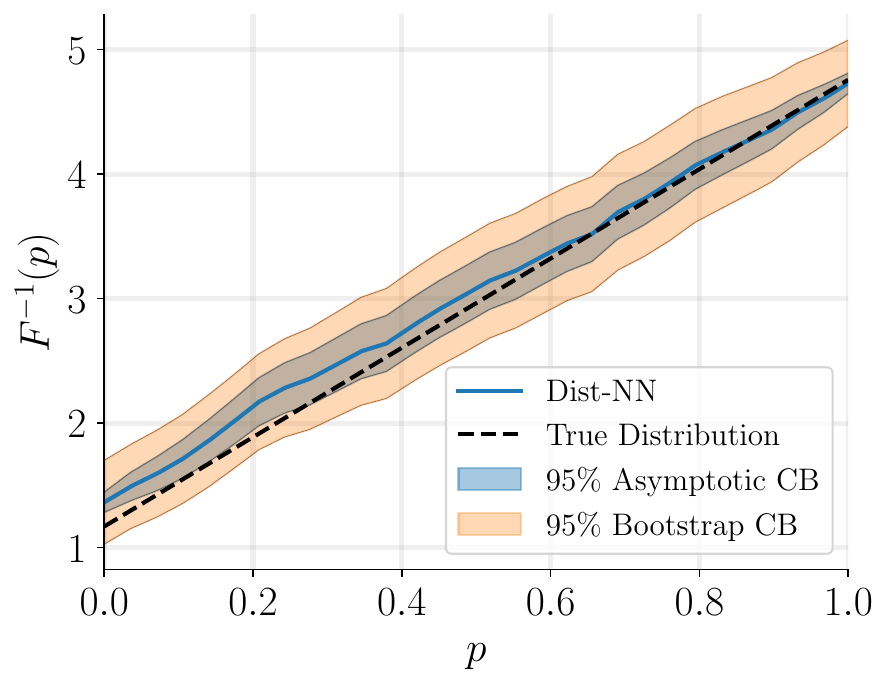}
            \\
            \parbox[c]{0.33\textwidth}{\centering (a) Gaussian} & 
            \parbox[c]{0.33\textwidth}{\centering (b) Continuous uniform}
        \end{tabular}
    }
    \end{center}
    \caption{\tbf{Asymptotic and bootstrap simultaneous confidence bands for Gaussian and continuous uniform location-scale case.} The bootstrap confidence bands are more conservative than the bands provided by our asymptotic result. However, the bootstrap estimate resamples the neighboring distributions as well whereas the asymptotic one does not, which could make the bootstrap confidence bands more accurate. Also note that for the Gaussian case, our estimate is worse around $p=0$ and $p=1$. This is expected, because our theoretical guarantees rely on the true distribution being supported on a compact interval.}\label{fig:confidence-regions}
\end{figure}

\paragraph{Denoising} We show the method's denoising ability through empirical CDF's in \cref{fig:random-samples}. This ability of nearest neighbors means that information can be shared across rows to achieve empirical distributions that are much closer to their respective true distributions than a random sample. This feature of our method is beneficial for downstream analysis since distributional quantities such as mean, variance, and value-at-risk can be estimated with a much higher accuracy than an isolated observed set of samples. Value-at-risk (VaR) is commonly used in financial modeling and is defined for $\alpha \in (0,1)$ as $\mathrm{VaR}_X(\alpha) = F^{-1}_{-X}(1 - \alpha)$ where $X$ is a random variable and $F^{-1}_{-X}$ is the quantile function of $-X$.

We show empirically that our method estimates distributional quantities well in \cref{fig:mean-std-var}. The synthetic sample produces estimates of the mean, standard deviation, value-at-risk, and median a lot closer to their respective true values than what an observed random sample baseline alone provides. Hence, running \distnn\, on observed distributions can potentially provide much better estimates of their true distributions than just using their random samples in isolation. 
% We also tested our method against scalar nearest neighbors, where the respective distributional quantity is calculated for each observed distribution beforehand to create the scalar matrix, and then we run scalar nearest neighbors. 
% For estimating these distributional quantities, \distnn\, performs just as well or better than scalar nearest neighbors. However, for scalar nearest neighbors, each distributional quantity must be calculated ahead of time before using the estimation procedure, whereas 
Note that new distributional quantities can be calculated from our method's output without re-running the entire estimation procedure because our method outputs a distribution. So, our method is estimating all of these quantities simultaneously.

\begin{figure}[ht!]
    \centering
    \includegraphics[width=1.0\linewidth]{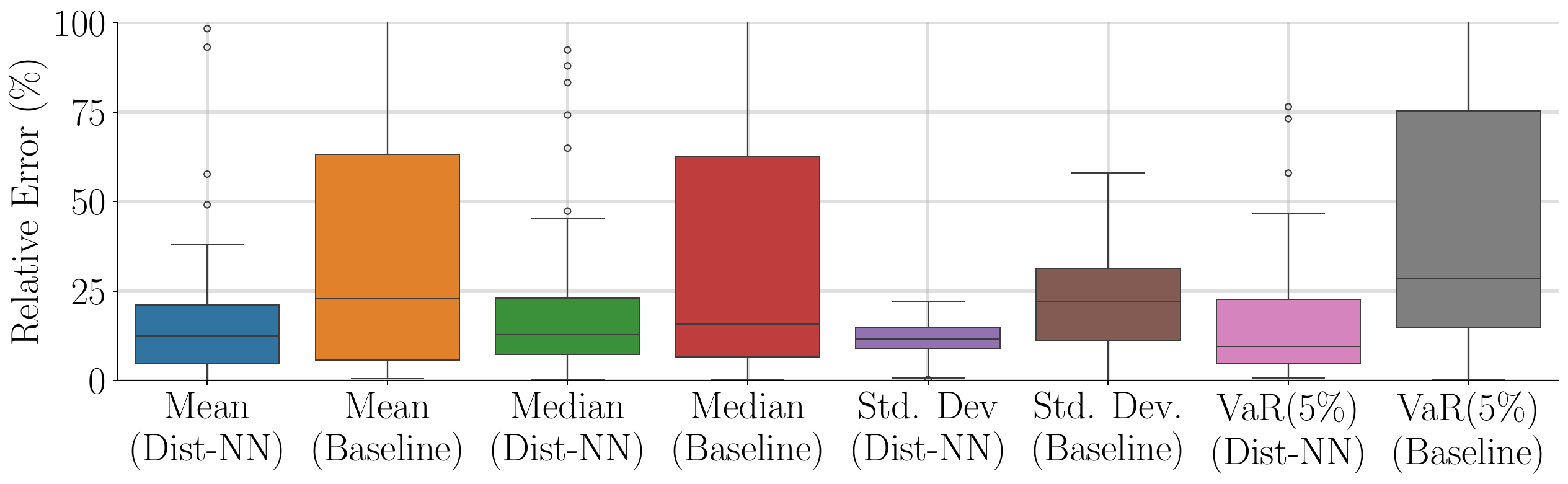}
    \caption{\tbf{Relative error for \distnn\, and a baseline of only using the distribution within a single matrix cell for estimating means, medians, standard deviations, and value-at-risk (VaR(5\%)).}  We use the same Gaussian location-scale setup \cref{fig:error-samples-rows}. \distnn \, is able to estimate all of these distributional quantities noticeably better the baseline. Thus, utilizing shared information across the matrix through \distnn\, helps to estimate all of these distributional quantities simultaneously. The box and whisker plots show the three quartiles in the shaded boxes, the range of the data, and any outliers. Note that we cut off the $y$-axis at 100\% for readability.}\label{fig:mean-std-var}
\end{figure}
\subsection{Real-world application: quarterly earnings estimates}\label{sec:empirical}

We now demonstrate \distnn\, on the real-world example introduced in \cref{sec:motivating-example}: imputing quarterly earnings estimate distributions for public companies. First, we review the importance of this dataset and how it is used in the real world. At the end of each fiscal quarter, public companies release their financial performance for the period. Prior to this public release, analysts from banks and other companies provide their own predictions for the quarterly results for each company that they track. These estimates are used by investors and traders to predict both company performance and gauge what the rest of the market believes about the company. Additionally, earnings estimates for one company provide useful information about other companies which are connected to it via business partnerships or by operating in the same market. For this empirical study, we focus on quarterly earnings (net income) estimates. The same procedure can be applied to yearly earnings or other financial metrics such as revenue.

Most investors, traders, and financial media companies partition earnings results and predictions by fiscal quarter. However, anyone analyzing earnings estimates between multiple companies in the same quarter will run into a \emph{missing-data problem}: While some analysts release their predictions weeks ahead of the actual earnings release, other analysts wait until just one day beforehand. Thus, investors might only see a fraction of analyst predictions at any given date. On top of this, companies release their earnings results on inconsistent schedules. For instance, some quarters, Amazon releases earnings before Apple and other times vice versa. To remedy this time-dependent missing data problem, we propose to use \distnn\, to impute any missing (future) analyst estimate distributions. But first, we discuss where we obtain our data from and how we structure it into a matrix format.

We utilize analyst earnings estimates from the Institutional Brokers' Estimate System (I/B/E/S) dataset via the Wharton Research Data Services (WRDS) platform.\footnote{\url{https://wrds-www.wharton.upenn.edu/pages/get-data/ibes-thomson-reuters/}} I/B/E/S contains detailed data on analyst estimates for over 19,000 analysts across more than 23,400 companies with data as far back as 1976.\footnote{\url{https://www.lseg.com/en/data-analytics/financial-data/company-data/ibes-estimates}} We analyze a subset of this dataset: analyst estimates for publicly-traded companies based in the United States with the top 2,000 market capitalizations (combined value of a company's outstanding common shares) from January, 2010 through December, 2024. We structure the data into a matrix with companies along the columns and quarter/year along the rows such as in \cref{fig:quarterly-earnings-example}. 
% Note that while each analyst has a unique code in the I/B/E/S dataset, these codes can randomly change from quarter to quarter.\footnote{\url{https://wrds-www.wharton.upenn.edu/pages/about/data-vendors/vendor-partner-ibes/}} 
Since there is no accurate way of tracking estimates at the analyst level, combining estimates into empirical distributions is the most granular object to represent analyst estimates. We plot histograms using the Seaborn Python library \citep{Waskom2021}.

\begin{figure}
    \centering
    \includegraphics[width=1.0\linewidth]{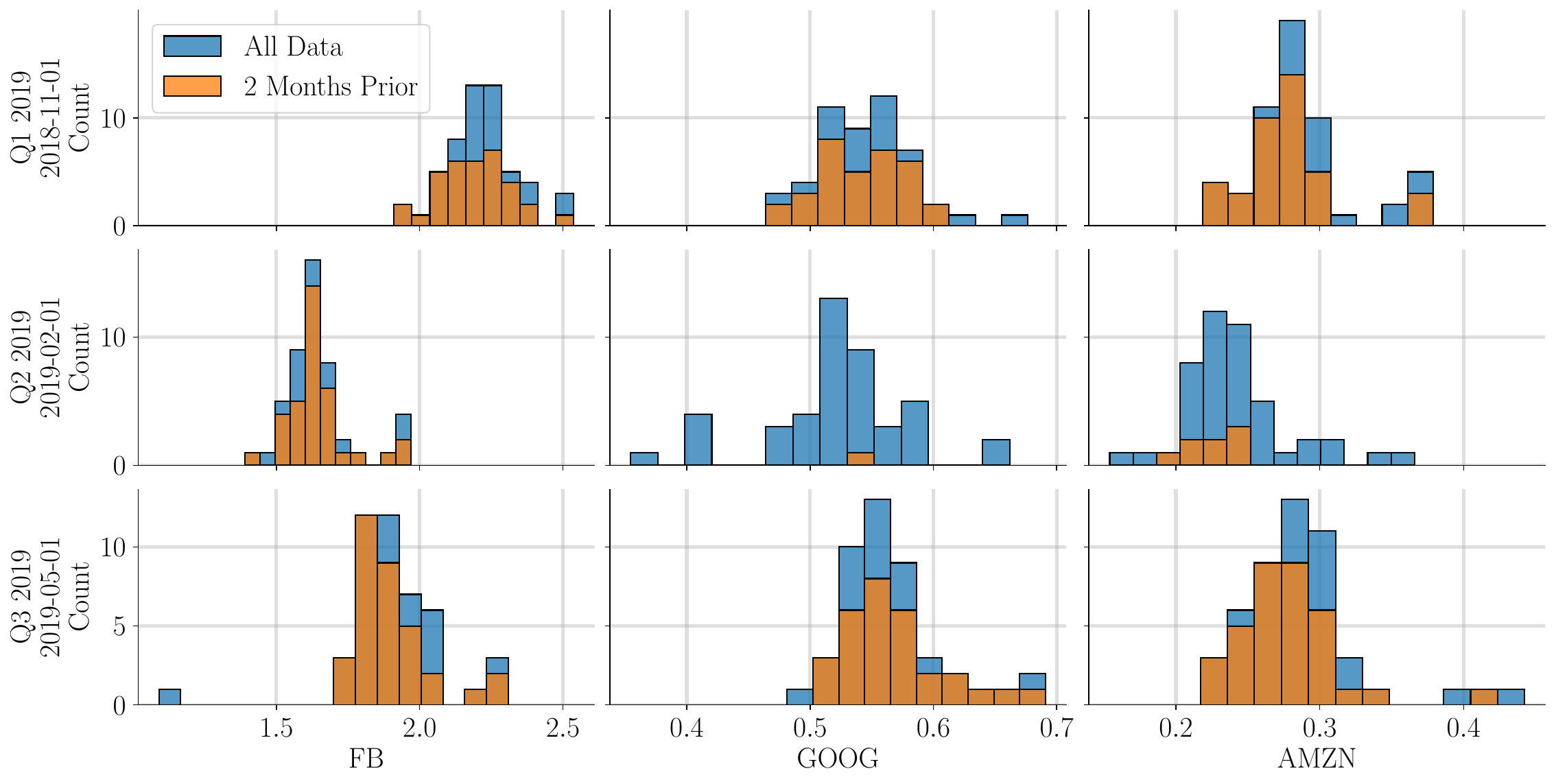}
    \caption{\textbf{Subset of analyst quarterly earnings estimates in the matrix format summarized as histograms.} The companies here are Meta (Facebook), Google, and Amazon, and the quarters are Q1, Q2, and Q3 in 2019. We plot both the full histogram of earnings estimates available within the quarter for each company (blue) and the earnings estimates released 2 months before the first public announcements of actual results (orange). The date cutoff for the ``2 Months Prior'' data is in the description for the row (e.g. 2018-11-01 in the first row). Here, we observe the partial time-series based missingness where earnings estimates arrive on inconsistent schedules. For instance, Google has only one estimate in the second row by the cutoff date, but has most estimates in the third row by the cutoff date.}
    \label{fig:quarterly-earnings-example}
\end{figure}

We apply \distnn\, to this dataset to impute partially or completely missing distributions within a row that have not yet been fully observed. Within a quarter, if we choose an earlier date, there is more likely to be less data in each matrix cell because earnings estimates are made public on different dates. To test \distnn\,, we apply the following procedure: (1) Choose a date where data is missing, (2) determine which companies have very few of their earnings estimates in by the chosen date (less than 20\% of their total estimates), (3) for each of these companies $c$, optimize a distance threshold $\eta_c$ for \distnn\, using the previous quarters for that company, and (4) use the optimized $\eta_c$ with \distnn\, to impute the missing entry. The returned barycenter for each (partially) missing company is constructed from the analyst estimate distributions from prior quarters. So, \distnn\, leverages other companies to compute the Wasserstein distance between distributions in the current quarter and distributions in previous quarters, and then computes the barycenter from similar previous quarters for the company of interest. After this procedure, we compare our imputed EPS estimates distribution with the full analyst estimate distributions reported at a future date. We also compare \distnn's performance with the distribution of estimates already seen by the chosen date (baseline). For example, in \cref{fig:quarterly-earnings-example}, the chosen dates for each row are 2018-11-01, 2019-02-01, and 2019-05-01. The baseline distributions are the orange histograms labeled "2 Months Prior," and the true distributions are blue and labeled "All Data."

In \cref{fig:wrds-performance}, we show numerically that \distnn\, is able to predict several distributional quantities of interest for partially missing distributions much better than only using the small amount of past data in one matrix cell, which we refer to as the baseline. \distnn\, particularly excels compared to the baseline for estimating VaR(5\%), which makes sense because VaR(5\%) is a quantile, which is a value targeted by using the Wasserstein geometry. In \cref{fig:wrds-examples} we provide two examples that visually show how \distnn\, better approximates the true future distributions. These figures demonstrate that \distnn\, is able to predict future analyst estimate distributions by leveraging past data across both columns (companies) and rows (time). Our box and whisker plots are created using the Seaborn Python library \citep{Waskom2021}.

\begin{figure}[ht!]
    \centering
    \includegraphics[width=\linewidth]{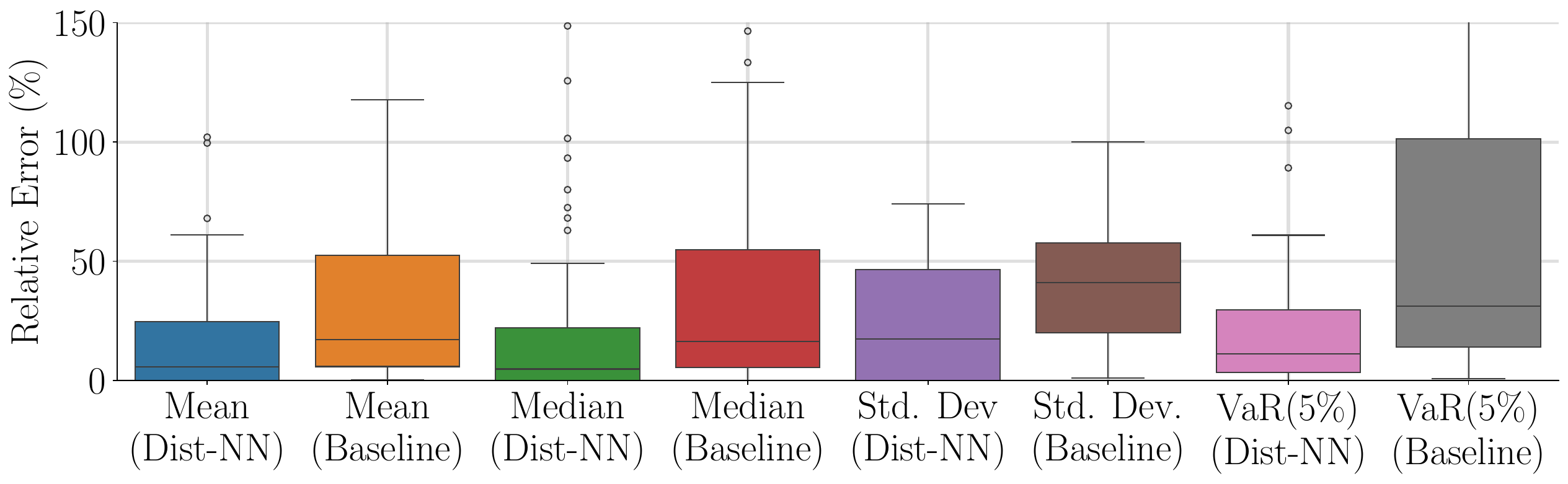}
    \caption{\tbf{Relative error for \distnn\, and the within-cell past data baseline in the I/B/E/S dataset for estimating means, medians, standard deviations, and value-at-risk numbers (VaR  (5\%)).} Here, we compare performance for estimating distributional quantities for missing (future) data for Q1 2019 using data from before November 1st, 2018. At this date, some quarterly estimates are available for Q1 2019, but not every estimate. The box and whisker plots show the three quartiles in the shaded boxes, the range of the data, and any outliers. We estimate distributions where less than 20\% of the estimates are in using \distnn\, and compare with the baseline of using the raw data in a matrix cell alone before November 1st, 2018. \distnn\, is able to estimate distributional quantities much better than the baseline, especially for estimating VaR(5\%). Note that we cut off the $y$-axis at 150\% for readability. }\label{fig:wrds-performance}
\end{figure}

\begin{figure}[ht!]
    \centering
    \resizebox{\linewidth}{!}{
        \begin{tabular}{cc}
            \includegraphics[width=0.45\linewidth]{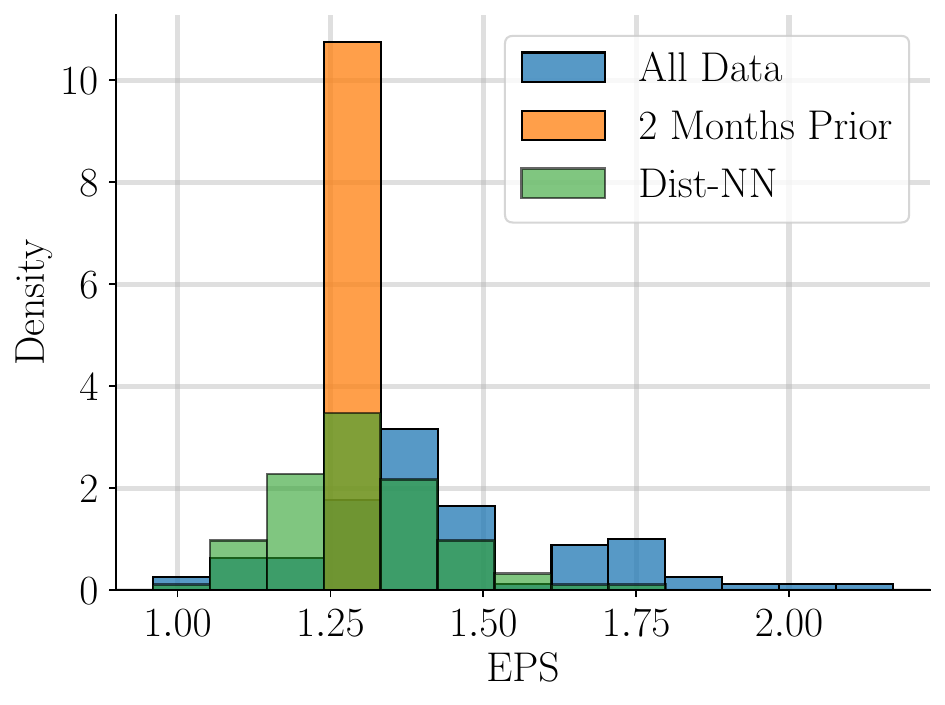} &
            \includegraphics[width=0.45\linewidth]{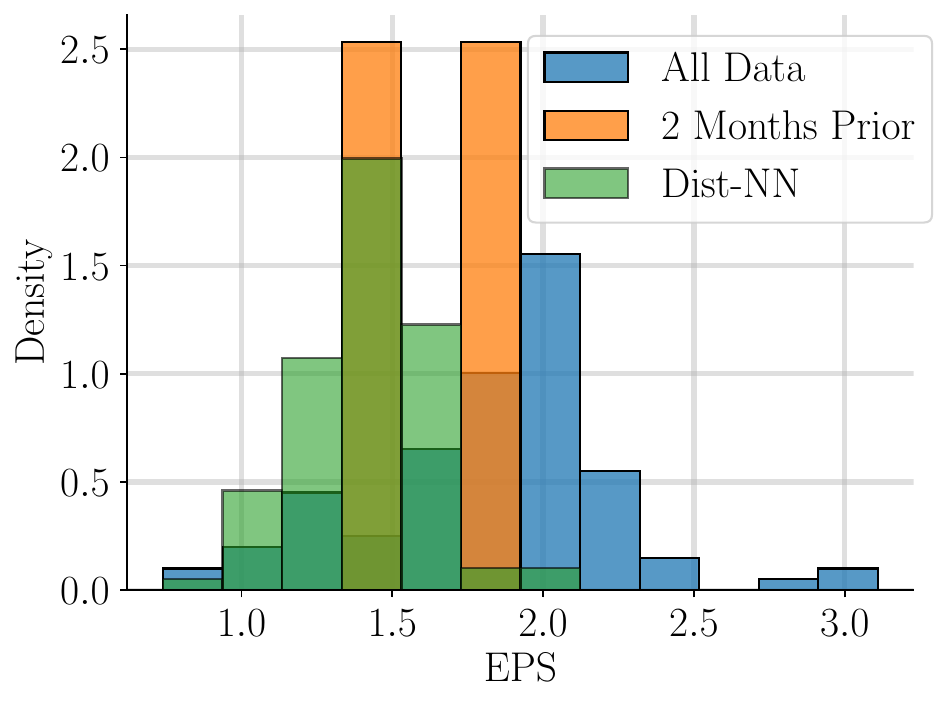}
            \\
            \parbox[c]{0.33\linewidth}{\centering (a) EOG Resources Inc.} & 
            \parbox[c]{0.33\linewidth}{\centering (b) Pioneer Natural Resources Co.}
        \end{tabular}
    }
    \caption{\tbf{Density plots of baseline past data and \distnn\, estimates along with the true future density plot.} Here, we show two density plots of baseline and estimated distributions for two companies from the data used in \cref{fig:wrds-performance}. The histograms are normalized so that the total area covered is equal to 1, which is why we label the $y$-axis "density." We do this so that we can compare the distributions even if the past baseline data has fewer estimates. Clearly, the \distnn-estimated distributions capture the future true distributions better than the baseline distributions.}\label{fig:wrds-examples}
\end{figure}
\section{Discussion}\label{sec:discussion}

In this paper, we study the distributional matrix completion problem, where matrix entries are one-dimensional empirical distributions. We propose a distributional variant of the nearest neighbors method to solve this new problem using tools from optimal transport and prove theoretical asymptotic bounds and distributions for the estimate's error. Our simulations showcase the ability of \distnn\, to not only recover the unobserved distributions but also create synthetic distributions that are consistently closer to their true distributions than an observed random sample alone. We also demonstrate \distnn's performance on the I/B/E/S dataset of quarterly earnings estimates and show that we can impute future distributions.

\distnn\, can be modified into a user-item nearest neighbors algorithm where distances and averages are calculated across both rows and columns to create a doubly-robust estimator like in \citep{dwivedi2022doubly}. However, we leave this for future work to explore a double robust extension of the \distnn\ algorithm. 
Another important direction of future study is how to efficiently extend \distnn\, to higher-dimensional probability distributions because calculating Wasserstein barycenters suffers from the curse of dimensionality \citep{altschuler2022wasserstein}. This slowdown is evident even on small 2D distributions such as grayscale images. One possible avenue is to restrict the class of probability distributions (see \citep{le2022fast} for one such result). Second, it would be interesting to extend the theoretical analysis to the missing-not-at-random (MNAR) setting where the missingness is not independent of the observed data.

%%%%%%%%%%%%%%%%%%%%%%%%%%%%%%%%%%%%%%%%%%%%%%
%% Example with multiple Appendixes:        %%
%%%%%%%%%%%%%%%%%%%%%%%%%%%%%%%%%%%%%%%%%%%%%%
\begin{appendix}
\section{Proof of Lemmas and Propositions for Thm. \protect\ref{thm:main-asymptotic}}\label{sec:proof-thm-main-asymptotic}

In this section, we provide detailed proofs for each supporting lemma and proposition used in the proof of \cref{thm:main-asymptotic} in \cref{sec:proof-techniques}.

\subsection{Remaining parts from the proof of \protect\cref{prop:barycenters}: Convergence of the barycenter of empirical measures}
\label{remainder_proof_of_empirical_convergence_barycenter}

We now finish the proof of \cref{prop:barycenters}, by proving\cref{lemma:l2-strong-approx-Op1,lemma:l2-strong-approx-bounded} in \cref{sec:proof-l2-strong-approx-Op1,sec:proof-l2-strong-approx-bounded} respectively.
\subsubsection{Proof of \protect\cref{lemma:l2-strong-approx-Op1}: Approximation of barycenter}\label{sec:proof-l2-strong-approx-Op1}
First, we will apply the following theorem (note that we only restate the part of the theorem on Brownian bridges and not on Kiefer processes):
\begin{namedthm}{A} [Thm. 6 in \citep{csorgo1978strong}]\label{thm:csorgo}
     Consider a measure $\nu$ satisfying \cref{assum:regular} and let $\what \nu_{n}$ denote the empirical distribution obtained from $n$ i.i.d. samples from $\nu$.
    % Let $X_1,\dots,X_n$ be drawn i.i.d. random variables with a continuous twice-differentiable distribution function $F$ supported on $(a,b)$ and $F'=f>0$ on $(a,b)$. Let $q$ be defined as in \cref{eq:emp-quantile}. One can define a 
    Then there exists a sequence of Brownian bridges $\mbb{B}_1, \ldots, \mbb B_n$ such that 
    % if the following regularity conditions hold:
    % \begin{enumerate}[(i)]
    %     \item There exists some $\gamma > 0$ where
    %         \begin{align}
    %             \sup_{a< x < b} \frac{F(x)\parenth{1-F(x)}}{f(x)^2} \abss{f'(x)} \leq \gamma
    %         \end{align}
    %         and
    %     \item $f$ is non-decreasing in a right-neighborhood of $a$ and non-increasing in a left-neighborhood of $b$.
    % \end{enumerate}
    % \rd{I am not sure what is $n$ denoting in the next result? What is $$}
    % Then, 
    \begin{align}
        &\, \sup_{0 < t < 1} \big \vert f(F^{-1}(t)) \sqrt{n}(\quantile{\Hat{\nu}_{n}(t)} - \quantile{\nu}(t)) - \mathbb{B}_n(t)\big \vert \\
        &\seq{a.s.} \begin{cases}
            \order\parenth{n^{-1/2}\log n } &\quad \text{if} \quad \gamma < 2 \\
            \order\parenth{n^{-1/2}(\log\log n)^\gamma (\log n)^{(1+\epsilon)(\gamma-1)}} &\quad \text{if} \quad \gamma \geq 2,
        \end{cases}
    \end{align}
    where $\epsilon > 0$ is arbitrary.
\end{namedthm}
From \cref{assum:regular}, we have $\gamma < 2$ for each distribution. So, we have that for each distribution $\nu_j$ and its respective approximation by a sequence of Brownian bridges, 
$\parenth{\mathbb{B}_{j,l}}_{l=1}^n$, 
\begin{align}
    \sup_{t \in (0,1)} \big \vert f_{\nu_j}\big(\quantile{\nu_j}(t)\big)\hat{q}_{\nu_j,n}(t) - \mathbb{B}_{j,n}(t) \big \vert \overset{a.s.}{=} \order\parenth{\frac{\log n}{\sqrt{n}}}.
\end{align}
This result holds for each measure, but we need it to hold uniformly for any finite set of measures. Thus, we will unpack the proof of this theorem to show that it holds uniformly. \cref{thm:csorgo} is proven by combining three theorems where $u_n$ be the quantile process $\hat{q_n}$ for a $Unif(0,1)$ random variable:
\begin{namedthm}{B}[Thm. 1 in \citep{csorgo1978strong}]\label{thm:csorgo-unif-strong}
    If the uniform (0,1) random variables $U_1,U_2,\dots$ are defined on a rich enough probability space, then one can define, for each $n$, a Brownian bridge $\{\mbb{B}_n(y): 0 \leq y \leq 1\}$ on the same probability space such that, for all $z$, we have
    \begin{align}
        \mbb{P}\bigg(\sup_{0 \leq y \leq 1} \abss{u_n(y) - \mbb{B}_n(y)} > n^{-1/2}(A \log n + z)\bigg) \leq B e^{-Cz}
    \end{align}
    for positive absolute constants $A,B,$ and $C$;
\end{namedthm}
\begin{namedthm}{C}[Thm. 2 in \citep{csorgo1978strong}]\label{thm:csorgo-unif-abs-bound}
    With $\delta_n = 25n^{-1} \log \log n$ we have
    \begin{align}\label{eq:csorgo-unif-abs-bound}
        \limsup_{n \to \infty} \sup_{\delta_n \leq y \leq 1-\delta_n} (y(1-y)\log\log n)^{-1/2} \abss{u_n(y)} \sless{a.s.} 4; \quad \text{and}
    \end{align}
\end{namedthm}
\begin{namedthm}{D}[Thm. 3 in \citep{csorgo1978strong}]\label{thm:csorgo-unif-approx}
    Let $X_1,X_2,\dots$ be i.i.d random variables with a continuous distribution function $F$ which is also twice differentiable on $(a,b)$ and $F'=f\neq 0$ on $(a,b)$. Let the quantile processes $\hat{q}_n(y)$ and respective $u_n(y)$ be defined in terms of the order statistics $X_{k:n}$ and $U_{k:n}=F(X_{k:n})$. Assume that for some $\gamma > 0$,
    \begin{align}\label{eq:regular-gamma}
        \sup_{a < x < b} F(x)(1-F(x)) \abss{\frac{f'(x)}{f^2(x)}} \leq \gamma,
    \end{align}
    and $f$ is nondecreasing (increasing) on an interval to the right of $a$ (to the left of $b$).
    Then, with $\delta_n$ as in \cref{thm:csorgo-unif-abs-bound}
    \begin{align}
        &\, \sup_{0 < y < 1} \abss{f(F^{-1}(y))q_n(y)-u_n(y)} \\ &\overset{a.s.}{=} \begin{cases}
            \order(n^{-1/2}\log \log n) &\quad \text{if} \quad \gamma < 1 \\
            \order(n^{-1/2}(\log \log n)^2) &\quad \text{if} \quad \gamma = 1 \\
            \order(n^{-1/2}(\log \log n)^{\gamma}(\log n)^{(1 + \varepsilon)(\gamma - 1)}) &\quad \text{if} \quad \gamma > 1
        \end{cases}
    \end{align}
    where $\varepsilon > 0$ is arbitrary. The constants in the $\order(\cdot)$ are respectively, $2(\max(45,25(2^\gamma / (1 - \gamma)))) + 40\gamma 10^\gamma$ if $\gamma < 1$, 102 if $\gamma = 1$, and $2\max(45, (2^\gamma / (\gamma - 1))25^\gamma)$ if $\gamma > 1$.
\end{namedthm}
First, \cref{thm:csorgo-unif-strong} can be easily extended to our setting using a simple union-bound:
\begin{align}\label{eq:csorgo-unif-strong-union}
    \mbb{P}\bigg(\sup_{i \in [k]}\sup_{0 \leq y \leq 1} \abss{u_{n,i}(y) - \mbb{B}_{n,i}(y)} > n^{-1/2}(A \log n + z)\bigg) \leq k B e^{-Cz}
\end{align}
So, as long as $k$ does not grow faster than $\exp(z)$, this statement will still hold and we thus have
\begin{align}
    \sup_{i \in [k]} \sup_{0 \leq y \leq 1} \abss{u_{n,i}(y) - \mbb{B}_{n,i}(y)} \overset{a.s.}{=} \order(n^{-1/2}\log n).
\end{align}
Next, for \cref{thm:csorgo-unif-abs-bound}, we have
\begin{align}\label{eq:unif-abs-bound-union}
    &\quad \limsup_{n \to \infty} \sup_{i \in [k]} \sup_{\delta_n \leq y \leq 1-\delta_n} (y(1-y)\log\log n)^{-1/2} \abss{u_{n,i}(y)} \\
    &= \sup_{i \in [k]} \limsup_{n \to \infty} \sup_{\delta_n \leq y \leq 1-\delta_n} (y(1-y)\log\log n)^{-1/2} \abss{u_{n,i}(y)} \\
    &\sless{\cref{eq:csorgo-unif-abs-bound}} \sup_{i \in [k]} 4 = 4.
\end{align}
For \cref{thm:csorgo-unif-approx}, we claim that we have the same asymptotic behavior in our case (proven at the end of this section)
\begin{align}\label{claim:csorgo-thm3}
    &\quad \sup_{i \in [k]}\sup_{0 < y < 1} \abss{f(F^{-1}(y))q_n(y)-u_n(y)} \\
    &\overset{a.s.}{=} 
    \begin{cases}
        \order(n^{-1/2}\log \log n) &\quad \text{if} \quad \gamma < 1 \\
        \order(n^{-1/2}(\log \log n)^2) &\quad \text{if} \quad \gamma = 1 \\
        \order(n^{-1/2}(\log \log n)^{\gamma}(\log n)^{(1 + \varepsilon)(\gamma - 1)}) &\quad \text{if} \quad \gamma > 1
    \end{cases}
\end{align}
Putting these pieces together, we get
\begin{align}
    &\quad \sup_{i \in [k]} \sup_{0 < y < 1} \abss{f_i(\quantile{i}(y))\hat{q}_{n,i}(y) - \mbb{B}_{n,i}(y)} \\
    &\sless{(b)} \sup_{i \in [k]} \sup_{0 < y < 1} \abss{f_i(\quantile{i}(y))\hat{q}_{n,i}(y) - u_{n,i}(y)} + \abss{u_{n,i}(y) - \mbb{B}_{n,i}(y)} \\
    &\seq{(c)} \order(n^{-1/2} \log n + n^{-1/2} (\log \log n)^2) \\
    &= \order(n^{-1/2} \log n)
\end{align}
where $(b)$ follows from the triangle inequality and $(c)$ follows because $\gamma < 2$ for each measure and we can set $\varepsilon < 1/(\gamma - 1) - 1$ if $1 < \gamma < 2$ in \cref{claim:csorgo-thm3}.

We can now finish the main proof. From \cref{assum:regular}, there exists a positive constant $C$ that lower bounds each density function. Thus, for $n$ large enough (which we have shown exists for any admissible value of $k$), we have for some universal constant $c$
\begin{align}\label{eq:brownian-bridge-uniform-bound}
    \sup_{j \in [k]} \sup_{0 < y < 1} \abss{f_j(\quantile{j}(y))\hat{q}_{n,j}(y) - \mbb{B}_{n,j}(y)} &\sless{a.s} c (n^{-1/2} \log n + kn^{-1/2} (\log \log n)^2) \\
    \sup_{j \in [k]} \sup_{0 < y < 1} \bigg \vert \hat{q}_{n,j}(y) - \frac{\mbb{B}_{n,j}(y)}{f_j(\quantile{j}(y))}\bigg \vert &\sless{a.s.} \frac{c}{C} (n^{-1/2} \log n + kn^{-1/2} (\log \log n)^2)
\end{align}
Then, for any $y \in (0,1)$, almost surely,
\begin{align}
    \bigg\vert\frac{1}{k} \sum_{j=1}^k \bigg(\hat{q}_{\nu_j,n}(y) - \frac{\mbb{B}_{n,j}(y)}{f_j(\quantile{j}(y))}\bigg)\bigg\vert
    &\sless{(c)} \frac{1}{k} \sum_{j=1}^k \bigg\vert\hat{q}_{\nu_j,n}(y) - \frac{\mbb{B}_{n,j}(y)}{f_j(\quantile{j}(y))}\bigg\vert \\
    &\sless{\cref{eq:brownian-bridge-uniform-bound}} \frac{c}{C} n^{-1/2} \log n
\end{align}
where $(c)$ follows the triangle inequality. Next, since this holds for all $y$, we can take the $L^2(0,1)$ of both sides to get
\begin{align}
    \frac{1}{n}\lnorm{\frac{1}{k} \sum_{j=1}^k \bigg(\hat{q}_{\nu_j,n}(y) - \frac{\mbb{B}_{n,j}(y)}{f_j(\quantile{j}(y))}\bigg)}^2
    \leq \frac{c}{n^2C} \log^2 n 
    = \order\bigg(\frac{\log^2 n}{n^2}\bigg).
\end{align}

\paragraph{Proof of claim~\cref{claim:csorgo-thm3}} We will only repeat the parts of the proof that differ in our case. Let $y \in ((l-1)/n, l /n]$ and $\xi$ be between $y$ and $U_{l:n} = y + \sqrt{n} u_n(y)$. Then, we have from \citep[Eq. 3.8]{csorgo1978strong}
\begin{align}
    \sup_{i \in [k]} \abss{f_i(F_i^{-1}(y))\hat{q}_{n,i}(y)-u_{n,i}(y)} \leq \sup_{i \in [k]} \frac12 n^{-1/2} u_{n,i}^2(y) f_i(F_i^{-1}(y)) \frac{\abss{f_i'(F_i^{-1}(\xi))}}{f_i^3(F_i^{-1}(\xi))}.
\end{align}
Next, from \cref{eq:unif-abs-bound-union} and since each term is nonnegative, we have for large enough $n$
\begin{align}
    &\, \sup_{i \in [k]} \abss{f_i(F_i^{-1}(y))\hat{q}_{n,i}(y)-u_{n,i}(y)} \\
    &\leq 8 n^{-1/2} (\log\log n) y(1-y) \sup_{i \in [k]} f_i(F_i^{-1}(y)) \frac{\abss{f_i'(F_i^{-1}(\xi))}}{f_i^3(F_i^{-1}(\xi))}.
\end{align}
From the proof of \citep[Thm. 3]{csorgo1978strong}, we have
\begin{align}
    \sup_{i \in [k]} \abss{f_i(F_i^{-1}(y))\hat{q}_{n,i}(y)-u_{n,i}(y)} \leq 8\gamma 5 \cdot 10^\gamma n^{-1/2} (\log\log n).
\end{align}
Next, from \citep[Eq. 3.10]{csorgo1978strong}, we have
\begin{align}
    \sup_{i \in [k]}\sup_{0 \leq y \leq \delta_n} \abss{u_n(y)} \sless{a.s.} 45 n^{-1/2} \log \log n.
\end{align}
From \citep[Eq. 3.13]{csorgo1978strong}, if $U_{l:n} \geq y$, then
\begin{align}
    \sup_{i \in [k]} \abss{f_i(\quantile{i}(y))q_{n,i}(y)} \leq u_n(y).
\end{align}
If $U_{k:n} < y$, then \citep[Eq. 3.14]{csorgo1978strong} establishes
\begin{align}
    \sup_{i \in [k]} \abss{f_i(\quantile{i}(y))q_{n,i}(y)} \leq
    \begin{cases}
        \frac{2^\gamma}{1-\gamma} n^{1/2} y &\quad \text{if} \quad \gamma < 1 \\
        \frac{2^\gamma}{\gamma-1} n^{1/2} y^\gamma U_{l:n}^{-(\gamma - 1)} &\quad \text{if} \quad \gamma > 1 \\
        2n^{1/2} y \log(y / U_{l:n}) &\quad \text{if} \quad \gamma = 1
    \end{cases}
\end{align}
Next, from the end of proof of \citep[Thm. 3]{csorgo1978strong}, we have our result.

\subsubsection{Proof of \protect\cref{lemma:l2-strong-approx-bounded}: Norm of average of Brownian bridges}\label{sec:proof-l2-strong-approx-bounded}
We need to show that \cref{prop:barycenters} holds with constants that do not depend on $k$. We have
\begin{align}
    \frac1n \lnorm{\frac1k \sum_{j=1}^k w_j \cdot \mathbb{B}_{j}}^2 = \frac{1}{nk} \lnorm{\underbrace{\frac{1}{\sqrt{k}} \sum_{j=1}^k w_j \cdot \mathbb{B}_{j}}_{\mbb{G}_k}}^2.
\end{align}
% \rd{use subscript $k$ on $\mbb G$ to make the whole point even clearer.}
Clearly, $\mbb{G}_k$ is a Gaussian process with $\expect*{\mbb{G}_k(t)}=0$ and $\mbb{G}_k(0)=\mbb{G}_k(1)=0$. We also know that it has continuous sample paths and is thus bounded. However, we wish to show that it is uniformly bounded over all values of $k$. For $s,t \in [0,1]$, we have
\begin{align}
    \Cov(\mbb{G}_k(s),\mbb{G}_k(t))
    &= \frac1k \Cov\bigg(\sum_{j=1}^k w_j(s)\mbb{B}_j(s),\sum_{j=1}^k w_j(t)\mbb{B}_j(t)\bigg) \\
    &\seq{(a)} \frac1k \sum_{j=1}^k \Cov(w_j(s)\mbb{B}_j(s),w_j(t)\mbb{B}_j(t)) \\
    &= \frac1k \sum_{j=1}^k w_j(s) w_j(t) (\min(s,t) - st)
\end{align}
where $(a)$ follows from the independence on the Brownian bridges. Next, we have for all $j\in [k]$ and $t\in[0,1]$, $w_j(t) < C'$ for some universal $C'$. Thus,
\begin{align}\label{eq:gaussian-var-bound}
    \Var(\mbb{G}_k(t))
    = \Cov(\mbb{G}_k(t),\mbb{G}_k(t))
    = \frac{1}{k} \sum_{j=1}^k (w_j(t))^2 (t - t^2)
    \leq (t - t^2) C'^2
    \leq (0.5) C'^2.
\end{align}

Let $\sigma^2_{\mbb{G}_k} \defeq \sup_{t \in [0,1]} \expect*{(\mbb{G}_k(t))^2}$. Since we have that almost surely, the paths of $\mbb{G}_k$ are bounded, then by the Borell–TIS inequality \citep[Thm. 2.1.1]{adler2009random}, we have that for $u > 0$,
\begin{align}
    \mbb{P}\bigg(\sup_{t \in [0,1]} \mbb{G}_k(t) - \mbb{E}\bigg[\sup_{t \in [0,1]} \mbb{G}_k(t)\bigg] > u\bigg) 
    \leq \exp(-u^2 / (2\sigma^2_{\mbb{G}_k}))
    \sless{\cref{eq:gaussian-var-bound}} \exp(-u^2 / C'^2).
\end{align}
Rewriting, we get
\begin{align}
    \mbb{P}\bigg(\sup_{t \in [0,1]} \mbb{G}_k(t) \leq u + \mbb{E}\bigg[\sup_{t \in [0,1]} \mbb{G}_k(t)\bigg]\bigg)
    &\geq 1 - \exp(-u^2 / C'^2).
\end{align}
The final step is to provide a uniform upper bound on $\mbb{E}\big[\sup_{t \in [0,1]} \mbb{G}_k(t)\big]$. From Dudley's theorem \citep[Thm. 1.3.13]{adler2009random}, there exists a universal constant $K$ such that
\begin{align}\label{eq:dudley}
    \mbb{E}\bigg[\sup_{t \in [0,1]} \mbb{G}_k(t)\bigg] \leq K \int_0^{\mathrm{diam}([0,1])/2} \sqrt{\log(\mathcal{N}([0,1],d,\varepsilon))} d\varepsilon
\end{align}
where $d(s,t) = \big(\mbb{E}\big[(\mbb{G}_k(s) - \mbb{G}_k(t))^2\big]\big)^{1/2}$, $\mathrm{diam}([0,1])$ is the maximum distance under $d$ between two points in $[0,1]$, and $\mathcal{N}([0,1],d,\varepsilon)$ is the smallest number of balls of length $\varepsilon$ that cover $[0,1]$ under $d$. Next, we claim that there is a constant $\Tilde{C} > 0$ such that
\begin{align}\label{claim:dudley-distance-bound}
    (d(s,t))^2 \leq \Tilde{C} \abss{s - t}.
\end{align}
We prove this claim in \cref{sec:proof-dudley-distance-bound}. So, we can provide the following upper bounds:
\begin{align}\label{eq:diam-bound}
    \mathrm{diam}([0,1]) 
    = \max_{s,t \in [0,1]} d(s,t)
    \leq \sqrt{\Tilde{C}}, \quad \text{and}
\end{align}
\begin{align}\label{eq:covering-number-bound}
    \mathcal{N}([0,1],d,\varepsilon)
    \leq \mathcal{N}([0,1],\vert \cdot \vert,\varepsilon^2/\Tilde{C})
    \sless{(a)} \begin{cases}
        \frac{3\Tilde{C}}{\varepsilon^2} &\quad \text{if} \quad \varepsilon \leq \sqrt{\Tilde{C}} \\
        1 &\quad \text{if} \quad \epsilon > \sqrt{\Tilde{C}}
    \end{cases}
\end{align}
where $(a)$ follows from \citep[Eq. 4.10]{vershynin2020high}. Plugging this into \cref{eq:dudley}, we get
\begin{align}
    \mbb{E}\bigg[\sup_{t \in [0,1]} \mbb{G}_k(t)\bigg] \leq K \int_0^{\sqrt{\Tilde{C}}/2} \sqrt{\log\bigg(\frac{3\Tilde{C}}{\varepsilon^2}\bigg)} d\varepsilon \defeq \Tilde{K} < \infty.
\end{align}
Going back to the probability term, we get
\begin{align}
    \mbb{P}\bigg(\sup_{t \in [0,1]} \mbb{G}_k(t) \leq u + \Tilde{K} \bigg)
    &\geq 1 - \exp(-u^2 / C'^2).
\end{align}

Thus, we have that $\sup_{t \in [0,1]} \mbb{G}_k(t) = \order_p(1)$ with constants that do not depend on $k$ or the functions $w_j$. Thus, we have that
\begin{align}
    \Vert \mbb{G}_k\Vert^2_{L^2(0,1)} \leq \bigg\Vert \sup_{t \in [0,1]} \mbb{G}_k(t) \bigg\Vert^2_{L^2(0,1)} = \bigg(\sup_{t \in [0,1]} \mbb{G}_k(t)\bigg)^2 = \order_p(1).
\end{align}
% \rd{this is great and I think now you can actually make Prop 1 stronger and much more precise and then write the unconditional Op statement for the V term as a lemma using that Prop -- because otherwise the conclusions about how you use Prop 1 are still a bit jumpy -- and in the proof for lemma for V bound I might consider writing a short lemma (with proof in place) to make the argument clean --like lemma 14 on page 45 of the KT paper \url{https://arxiv.org/pdf/2105.05842}.}

\subsection{Proof of claim \protect\cref{claim:dudley-distance-bound}}\label{sec:proof-dudley-distance-bound}

First, let $t \geq s$. Since $\mbb{G}_k$ has mean 0 at any time,
\begin{align}
    (d(t,s))^2
    &= \mbb{E}[(\mbb{G}_k(t)-\mbb{G}_k(s))^2] \\
    &= \Var(\mbb{G}_k(t)-\mbb{G}_k(s)) \\
    &= \Var(\mbb{G}_k(t)) + \Var(\mbb{G}_k(s)) - 2\Cov(\mbb{G}_k(t),\mbb{G}_k(s)) \\
    &= \frac{1}{k} \sum_{j=1}^k \big[w_j(t)^2 (t -t^2) + w_j(s)^2 (s-s^2) - 2w_j(t)w_j(s)(s-st)\big]
\end{align}
Now, we just consider one summand since the bound will apply for all summands:
\begin{align}
    &\, w_j^2(t) (t -t^2) + w_j^2(s) (s-s^2) - 2w_j(t)w_j(s)(s-st) \\
    &= w_j^2(t) t - w_j^2(t) t^2 + w_j^2(s) s - w_j^2(s) s^2 - 2w_j(t)w_j(s) s + 2w_j(t)w_j(s)st \\
    &= w_j^2(s) s - 2w_j(t)w_j(s) s + w_j^2(t) s - w_j^2(t) s + w_j^2(t) t \\
    &\quad - w_j^2(t) t^2 - w_j^2(s) s^2 + 2w_j(t)w_j(s)st \\
    &= w_j^2(s) s - 2w_j(t)w_j(s) s + w_j^2(t) s - w_j^2(t) s + w_j^2(t) t - (w_j(t) t - w_j(s) s)^2 \\
    &\leq w_j^2(s) s - 2w_j(t)w_j(s) s + w_j^2(t) s - w_j^2(t) s + w_j^2(t) t \\
    &= s(w_j(t) - w_j(s))^2 + w_j^2(t) (t - s) \\
    &\leq (w_j(t) - w_j(s))^2 + w_j^2(t) (t - s) \\
    &\sless{(a)} L^2 (t-s)^2 + w_j^2(t) (t - s) \\
    &\sless{(b)} \Tilde{C} (t-s)
\end{align}
where $(a)$ follows from the $w_j$ functions being $L$-Lipschitz, and $(b)$ follows from the fact that $t,s\in[0,1]$ and there is a multiplicative constant that makes $(t - s)$ dominate $(t - s)^2$ within $[0,1]$.

\subsection{Proof of claim \protect\cref{claim:asym-bias-bound}: Asymptotic bias bound}\label{sec:proof-asym-bias-bound}

We start by breaking up the bias into its components:
\begin{align}
    W_2(\Bar{\mu}_{ij}, \mu_{ij})
    \seq{\cref{eq:wasserstein-1dim}} \left\lVert \quantile{\Bar{\mu}_{ij}} - \quantile{\mu_{ij}}\right\lVert_{L^2(0,1)}
    &\seq{\cref{eq:wass-barycenter}} \left\lVert \frac{1}{\lvert \nn{\eta}{i} \rvert} \sum_{k \in \nn{\eta}{i}}\left(\quantile{\mu_{kj}}- \quantile{\mu_{ij}}\right)\right\lVert_{L^2(0,1)} \\
    &= \frac{1}{\lvert \nn{\eta}{i} \rvert} \left\lVert  \sum_{k \in \nn{\eta}{i}}\left(\quantile{\mu_{kj}}- \quantile{\mu_{ij}}\right)\right\lVert_{L^2(0,1)} \\
    &\sless{(a)} \frac{1}{\lvert \nn{\eta}{i} \rvert} \sum_{k \in \nn{\eta}{i}} \left\lVert  \quantile{\mu_{kj}}- \quantile{\mu_{ij}}\right\lVert_{L^2(0,1)} \\
    &\leq \frac{\lvert \nn{\eta}{i} \rvert}{\lvert \nn{\eta}{i} \rvert} \max_{k \in \nn{\eta}{i}} \left\lVert  \quantile{\mu_{kj}}- \quantile{\mu_{ij}}\right\lVert_{L^2(0,1)} \\
    &\seq{\cref{eq:wasserstein-1dim}} \max_{k\in \nn{\eta}{i}} W_2(\mu_{kj}, \mu_{ij}).
\end{align}
where $(a)$ follows from Minkowski's inequality \citep[Thm. 198]{hardy1952inequalities}. By nonnegativity and squaring, we have
\begin{align}
    W_2^2(\Bar{\mu}_{ij}, \mu_{ij})
    &\leq \max_{k\in \nn{\eta}{i}} W_2^2(\mu_{kj}, \mu_{ij}).
\end{align}
Next, we claim that
% \begin{claim}[Expected row-wise distance]
    \begin{align}
        \expect*{\rowdist{i}{u} | \latRow{i},\latRow{u}} = \expect*{W_2^2(Y_{ij}, Y_{uj})| \latRow{i},\latRow{u}}
        \label{claim:expected-dist}
    \end{align}
% \end{claim}
which we prove at the end of this section. Since the latent spaces are bounded and the latent function $f$ is Lipschitz, then the space of distributions is also bounded in Wasserstein distance. Since the space is bounded in Wasserstein distance and since each distribution has finite support, then there exists a universal constant $y_{\max}$ such that $W_2^2(Y_{ij},Y_{uj}) \leq y_{\max}$. Thus, we have\footnote{The Orlicz $\psi_2$ norm is defined as $\lVert X \rVert_{\psi_2} = \inf_{t>0} \expect*{\exp(\lvert X\rvert^2/t^2) \leq 2}$.}
\begin{align}
    \lVert W_2^2(Y_{ij},Y_{uj})\rVert_{\psi_2} \leq \frac{y_{\max}}{\sqrt{\ln(2)}} = K.
\end{align}
So, by the Hoeffding Inequality (Theorem 2.6.3 in \citep{vershynin2020high}), we have that for any fixed row $u$,
\begin{align}
    \probc*{\left\lvert \rowdist{i}{k} - \expect*{W_2^2(Y_{ij}, Y_{uj})}\right\rvert \geq t | \sharedcol{i}{u}, \latRow{i},\latRow{u}} 
    &\leq 2 \exp\parenth{-c \frac{t^2}{K^2} \abss{\sharedcol{i}{u}}}.
\end{align}
So, by total probability, we have
\begin{align}
    &\, \probc*{\abss{\rowdist{i}{u} - \expect*{W_2^2(Y_{ij}, Y_{uj})}} \geq t | \latRow{i},\latRow{u}} \\
    &\leq \probc*{\abss{\rowdist{i}{u} - \expect*{W_2^2(Y_{ij}, Y_{uj})}} \geq t | \lvert \sharedcol{i}{k} \rvert \geq \frac{1}{2}\numcols p^2,\latRow{i},\latRow{u}} \\
    &\quad + \probc*{\lvert \sharedcol{i}{k} \rvert \leq \frac{1}{2}\numcols p^2,\latRow{i},\latRow{u}} \\
    &\leq 2 \exp\parenth{-c \frac{t^2}{2K^2} \numcols p^2} + \exp\parenth{-\frac{1}{8} \numcols p^2}.
\end{align}
Taking a union bound, we can remove the conditioning to get
\begin{align}\label{eq:dist-concentration}
    &\, \probc*{\max_{k \neq i}\left\lvert \rowdist{i}{k} - \expect*{W_2^2(Y_{ij}, Y_{kj})}\right\rvert \leq t} \\
    &\geq 1 - 2 \numrows \exp\parenth{-c \frac{t^2}{2K^2} \numcols p^2} - \numrows \exp\parenth{-\frac{1}{8} \numcols p^2}.
\end{align}
Denote the event above as $\mathcal{E}$. Since the latent metric spaces $\Hrow$ and $\Hcol$ are bounded and since the latent function $f$ is $L$-Lipschitz with respect to its row argument, then there exists a constant $c_f \geq 0$ where
\begin{align}\label{eq:cf}
    W_2^2(\mu_{kj}, \mu_{ij}) \leq c_f \expect_{x \sim \Hcol}*{W_2^2(f(\latRow{k},x), f(\latRow{k},x))}
\end{align}
where $x \sim \Hcol$ means that $x$ is drawn from the distribution over the column latent space. Next, we have from \citep[Lem. 3]{papp2022bounds} that for any distributions $\mu$ and $\nu$ with respective empirical distributions $\hat{\mu}$ and $\hat{\nu}$ derived from $n$ samples each,
\begin{align}\label{eq:empirical-expected-bound}
    \expect*{W_2^2(\hat{\mu}, \hat{\nu})} \geq W_2^2(\mu,\nu)
\end{align}
where the expectation is taken over the randomness in the sampling. So, we have that on event $\mathcal{E}$,
\begin{align}
    \max_{k\in \nn{\eta}{i}} W_2^2(\mu_{kj}, \mu_{ij})
    &\sless{\cref{eq:cf}} \max_{k\in \nn{\eta}{i}} c_f \expect_{x \sim \Hcol}*{W_2^2(f(\latRow{k},x), f(\latRow{k},x))} \\
    &\sless{\cref{eq:empirical-expected-bound}} \max_{k\in \nn{\eta}{i}} c_f \expect_{\Hcol}*{\expect*{W_2^2(Y_{kj}, Y_{ij})}} \\
    &\sless{\cref{eq:dist-concentration}} \max_{k\in \nn{\eta}{i}} c_f(\rowdist{i}{k} + t) \\
    &\leq c_f (\eta + t).
\end{align}
Putting this together, we have
\begin{align}
    &\, \probc*{W_2^2\parenth{\Bar{\mu}_{ij}, \mu_{ij}} \leq c_f(\eta + t) | \lvert \nn{\eta}{i} \rvert \geq 1} \\
    &\geq 1 - 2 \numrows \exp\parenth{-c \frac{t^2}{2K^2} \numcols p^2} - \numrows \exp\parenth{-\frac{1}{8} \numcols p^2}.
\end{align}
Finally, it can be easily verified that by setting
\begin{align}
    t = K\sqrt{\frac{2}{c \numcols p^2}\log\parenth{\frac{2\numrows}{\delta}}}
\end{align}
we get
\begin{align}
    &\, \probc*{2W_2^2\parenth{\Bar{\mu}_{ij}, \mu_{ij}} \leq 2c_f\parenth{\eta + K\sqrt{\frac{2}{c \numcols p^2}\log\parenth{\frac{2\numrows}{\delta}}}} | \lvert \nn{\eta}{i} \rvert \geq 1} \\
    &\geq 1 - \delta - \numrows \exp\parenth{-\frac{1}{8} \numcols p^2}.
\end{align}
Now, we will prove that this quantity is bounded in probability as $\numcols \to \infty$. Let $0 < \epsilon < 1$ be arbitrary. Then, there exists a $\delta$ and large enough $\numcols_0$ such that for all $\numcols \geq \numcols_0$, $\epsilon \leq \delta + \numrows \exp\parenth{-\frac{1}{8} \numcols p^2}$. Plugging this into the inequality above, we get that
\begin{align}
    &\, \probc*{W_2^2\parenth{\Bar{\mu}_{ij}, \mu_{ij}} \leq c_f\parenth{\eta + \sqrt{\frac{2K^2}{c \numcols p^2}\log\parenth{\frac{2\numrows}{\delta}}}} | \lvert \nn{\eta}{i} \rvert \geq 1} \\
    &\geq 1 - \delta - \numrows \exp\parenth{-\frac{1}{8} \numcols p^2} \\
    &\geq 1 - \epsilon.
\end{align}
Rewriting the left-hand side, we get
\begin{align}
    W_2^2\parenth{\Bar{\mu}_{ij}, \mu_{ij}} &\leq c_f\parenth{\eta + K\sqrt{\frac{2}{c \numcols p^2}\log\parenth{\frac{2\numrows}{\delta}}}} \\
    W_2^2\parenth{\Bar{\mu}_{ij}, \mu_{ij}} - c_f \eta &\leq c_f K\sqrt{\frac{2}{c \numcols p^2}\log\parenth{\frac{2\numrows}{\delta}}} \\
    \frac{W_2^2\parenth{\Bar{\mu}_{ij}, \mu_{ij}} - c_f \eta}{\sqrt{2(c\numcols p^2)^{-1}}} &\leq c_f K \sqrt{\log(2\numrows / \delta)}.
\end{align}
Thus, since we keep $\numrows$ fixed, then by the definition of bounded in probability we have
\begin{align}
    W_2^2\parenth{\Bar{\mu}_{ij}, \mu_{ij}} = \order_p \big(\eta + (\numcols p^2)^{-1/2}\big) \quad \text{as} \quad \numcols \to \infty.
\end{align}

% Finally, since in the statement of \cref{thm:main-asymptotic}, we condition on $\abss{\sharedcol{i}{u}} \geq \frac{1}{2}\numcols p^2$ for all $u \in [\numrows + 1]$, then we can remove the third term in the probability bound to get:
% \begin{align}
%     \probc*{2W_2^2\parenth{\Bar{\mu}_{ij}, \mu_{ij}} \leq 2c_f\parenth{\eta + K\sqrt{\frac{2}{c \numcols p^2}\log\parenth{\frac{2\numrows}{\delta}}}} | \lvert \nn{\eta}{i} \rvert \geq 1} 
%     \geq 1 - \delta.
% \end{align}

\subsection{Proof of claim \protect\cref{claim:expected-dist}}\label{sec:proof-expected-dist}
% {Claim}: Expected row-wise distance}

Recall the definition of $\rowdist{i}{u}$ from \cref{eq:avg-distance-def}:
\begin{align}
    \rowdist{i}{u} 
    &\overset{\triangle}{=} \begin{cases}
        \frac{1}{\lvert\sharedcol{i}{u}\rvert}\sum_{j \in \sharedcol{i}{u}} W_2^2(Y_{ij}, Y_{uj}) &\quad \text{if} \, \lvert\sharedcol{i}{u}\rvert \geq 1 \\
        \infty &\quad \text{if}\, \lvert\sharedcol{i}{u}\rvert = 0.
    \end{cases}
\end{align}
Let $\mathbb{S}$ denote the randomness from sampling $n$ points from each distribution. Next, we only need to consider the case of $\abss{\sharedcol{i}{u}} \geq 1$ because we give a high-probability bound on this quantity being large. Under this, we have
\begin{align}
    \expect*{\rowdist{i}{u} | \sharedcol{i}{u}, \latRow{i}, x_{\mathrm{row}}^{(u)}}
    &= \expect*{\frac{1}{\lvert\sharedcol{i}{u}\rvert}\sum_{j \in \sharedcol{i}{u}} W_2^2(Y_{ij}, Y_{uj})| \sharedcol{i}{u}, \latRow{i}, x_{\mathrm{row}}^{(u)}} \\
    &=\frac{1}{\abss{\sharedcol{i}{u}}}\sum_{j \in \sharedcol{i}{u}} \expect*{W_2^2(Y_{ij}, Y_{uj}) | \sharedcol{i}{u}, \latRow{i}, x_{\mathrm{row}}^{(u)}} \\
    &= \frac{1}{\lvert\sharedcol{i}{u}\rvert}\sum_{j \in \sharedcol{i}{u}} \expect*{W_2^2(Y_{ij}, Y_{uj}) | \latRow{i}, x_{\mathrm{row}}^{(u)}}
\end{align}
where the last line follows from the independence of the missingness from the distributions. This expectation is taken over two sources of randomness: the distribution over $\Hcol$ and the distribution over the sampling from each distribution, $\mbb S$. Next, since each sample is drawn i.i.d.\ and since each column vector is also drawn i.i.d then we have that $\expect*{W_2^2(Y_{ij}, Y_{uj}) | \latRow{i}, x_{\mathrm{row}}^{(u)}}$ is constant across column latent vectors. Thus, we have that
\begin{align}
    \expect*{\rowdist{i}{u} | \sharedcol{i}{u}, \latRow{i}, \latRow{u}}
    &= \expect*{W_2^2(Y_{ij}, Y_{uj}) | \latRow{i}, \latRow{u}}.
\end{align}
Since this holds for all $\sharedcol{i}{u}$, we can remove the conditioning to get our claim:
\begin{align}
    \expect*{\rowdist{i}{u} | \latRow{i}, \latRow{u}}
    &= \expect*{W_2^2(Y_{ij}, Y_{uj}) | \latRow{i}, \latRow{u}}.
\end{align}

\section{Proof of Thm. \protect\ref{thm:main-brownian-bridge} - Asymptotic distribution of estimate}\label{sec:proof-thm-brownian-bridge}
Let $t \in (0,1)$. First, we can do a similar bias-variance decomposition as in \cref{eq:bias-variance}:
\begin{align}
    \quantile{\Hat{\mu}_{ij}}(t) - \quantile{\mu_{ij}}(t)
    &= \quantile{\Hat{\mu}_{ij}}(t) - \quantile{\Bar{\mu}_{ij}}(t) + \quantile{\Bar{\mu}_{ij}}(t) - \quantile{\mu_{ij}}(t).
\end{align}

We claim that 
\begin{align}\label{claim:gaussian-process-convergence}
    \frac{\sqrt{n_{j,\numcols}\abss{\nn{\eta_\numcols}{i}}}}{\sigma_{\nn{\eta_\numcols}{i}}(t)}\big(\quantile{\Hat{\mu}_{ij}}(t) - \quantile{\Bar{\mu}_{ij}}(t)\big) \overset{d}{\to} \mathcal{N}(0,1)
\end{align}
and 
\begin{align}\label{claim:gaussian-process-op1}
    \frac{\sqrt{n_{j,\numcols}\abss{\nn{\eta_\numcols}{i}}}}{\sigma_{\nn{\eta_\numcols}{i}}(t)}\big(\quantile{\Bar{\mu}_{ij}}(t) - \quantile{\mu_{ij}}(t)\big) = o_p(1).
\end{align}
We use the following fact to prove both claims for random variables $X_m$:
\begin{align}\label{eq:op_convergence}
    \text{if} \quad X_m = \order_p(a_m) \quad \text{and} \quad \lim_{m\to\infty} a_m = 0 \quad \text{then} \quad X_m = o_p(1).
\end{align}
Putting together claims \cref{claim:gaussian-process-convergence,claim:gaussian-process-op1}, we have
\begin{align}\label{eq:Op-op1}
    \frac{\sqrt{n_{j,\numcols}\abss{\nn{\eta_\numcols}{i}}}}{\sigma_{\nn{\eta_\numcols}{i}}(t)}\big(\quantile{\Hat{\mu}_{ij}}(t) - \quantile{\mu_{ij}}(t)\big) \overset{d}{\to} \mc{N}(0,1).
\end{align}

\subsection{Proof of claim \protect\cref{claim:gaussian-process-convergence}}

First, define $\hat{q}_{uj}(t) \defeq \sqrt{n_{j,\numcols}} \big(\quantile{Y_{uj}}(t) - \quantile{\mu_{uj}}(t)\big)$. Recall from the proof of \cref{prop:barycenters} that we have that for each distribution $\mu_{uj}$ for $u \in \nn{\eta_\numcols}{i}$ and its respective approximation by a sequence of standard Brownian bridges, 
$\parenth{\mathbb{B}_{u,l}}_{l=1}^{n_{j,\numcols}}$, 
\begin{align}
    \sup_{t \in (0,1)} \bigg \vert \hat{q}_{uj}(t) - \frac{\mathbb{B}_{u,n_{j,\numcols}}(t)}{f_{\mu_{uj}}\big(\quantile{\mu_{uj}}(t)\big)} \bigg \vert \overset{a.s.}{=} \order\parenth{\frac{\log n_{j,\numcols}}{\sqrt{n_{j,\numcols}}}}.
\end{align}
Next, we have
\begin{align}
    \sqrt{n_{j,\numcols}\abss{\nn{\eta_\numcols}{i}}}\big(\quantile{\Hat{\mu}_{ij}}(t) - \quantile{\Bar{\mu}_{ij}}(t)\big)
    &\seq{\cref{eq:dist-nn-barycenter}} \sqrt{n_{j,\numcols}\abss{\nn{\eta_\numcols}{i}}} \bigg(\frac{1}{\abss{\nn{\eta_\numcols}{i}}} \sum_{u \in \nn{\eta_\numcols}{i}} \quantile{Y_{uj}}(t) - \quantile{\mu_{uj}}(t)\bigg) \\
    &= \frac{1}{\sqrt{\abss{\nn{\eta_\numcols}{i}}}} \sum_{u \in \nn{\eta_\numcols}{i}} \hat{q}_{uj}(t)
\end{align}
Now, we will proceed similar to the proof of \cref{prop:barycenters} by adding and subtracting the Brownian bridge approximation:
\begin{align}
    &\, \frac{1}{\sqrt{\abss{\nn{\eta_\numcols}{i}}}} \sum_{u \in \nn{\eta_\numcols}{i}} \hat{q}_{uj}(t) \\
    &= \frac{1}{\sqrt{\abss{\nn{\eta_\numcols}{i}}}} \sum_{u \in \nn{\eta_\numcols}{i}} \bigg(\hat{q}_{uj}(t) - \frac{\mathbb{B}_{u,n_{j,\numcols}}(t)}{f_{\mu_{uj}}\big(\quantile{\mu_{uj}}(t)\big)} + \frac{\mathbb{B}_{u,n_{j,\numcols}}(t)}{f_{\mu_{uj}}\big(\quantile{\mu_{uj}}(t)\big)}\bigg) \\
    &= \frac{1}{\sqrt{\abss{\nn{\eta_\numcols}{i}}}} \sum_{u \in \nn{\eta_\numcols}{i}} \bigg(\hat{q}_{uj}(t) - \frac{\mathbb{B}_{u,n_{j,\numcols}}(t)}{f_{\mu_{uj}}\big(\quantile{\mu_{uj}}(t)\big)}\bigg) + \frac{1}{\sqrt{\abss{\nn{\eta_\numcols}{i}}}} \sum_{u \in \nn{\eta_\numcols}{i}} \frac{\mathbb{B}_{u,n_{j,\numcols}}(t)}{f_{\mu_{uj}}\big(\quantile{\mu_{uj}}(t)\big)}
\end{align}
Analyzing the first sum, we have from the following bound from applying \cref{prop:barycenters} just like in the proof of \cref{lemma:l2-strong-approx-Op1}:
\begin{align}
    &\quad \frac{1}{\sqrt{\abss{\nn{\eta_\numcols}{i}}}} \sum_{u \in \nn{\eta_\numcols}{i}} \bigg(\hat{q}_{uj}(t) - \frac{\mathbb{B}_{u,n_{j,\numcols}}(t)}{f_{\mu_{uj}}\big(\quantile{\mu_{uj}}(t)\big)}\bigg) \\
    &\leq \bigg \vert \frac{1}{\sqrt{\abss{\nn{\eta_\numcols}{i}}}} \sum_{u \in \nn{\eta_\numcols}{i}} \bigg(\hat{q}_{uj}(t) - \frac{\mathbb{B}_{u,n_{j,\numcols}}(t)}{f_{\mu_{uj}}\big(\quantile{\mu_{uj}}(t)\big)}\bigg)\bigg \vert \\
    &\sless{(a)} \frac{1}{\sqrt{\abss{\nn{\eta_\numcols}{i}}}} \sum_{u \in \nn{\eta_\numcols}{i}} \bigg \vert\hat{q}_{uj}(t) - \frac{\mathbb{B}_{u,n_{j,\numcols}}(t)}{f_{\mu_{uj}}\big(\quantile{\mu_{uj}}(t)\big)}\bigg \vert \\
    &\seq{\cref{eq:lemma-l2-strong-approx}} \order\bigg(\sqrt{\abss{\nn{\eta_{\numcols}}{i}}} \frac{\log n_{j,\numcols}}{\sqrt{n}}\bigg) \\
    &\seq{\cref{eq:bb-assum1}} o_p(1)
\end{align}
where $(a)$ follows from the triangle inequality. Analyzing the second sum, we have
\begin{align}
    \frac{1}{\sqrt{\abss{\nn{\eta_\numcols}{i}}}} \sum_{u \in \nn{\eta_\numcols}{i}} \frac{\mathbb{B}_{u,n_{j,\numcols}}(t)}{f_{\mu_{uj}}\big(\quantile{\mu_{uj}}(t)\big)}
    &\overset{d}{=} \frac{1}{\sqrt{\abss{\nn{\eta_\numcols}{i}}}} \sum_{u \in \nn{\eta_\numcols}{i}} \frac{X_u}{f_{\mu_{uj}}\big(\quantile{\mu_{uj}}(t)\big)}
\end{align}
where $\{X_u\}_{u \in \nn{\eta_\numcols}{i}}$ are i.i.d.\ $\mc{N}(0,t-t^2)$ random variables because $\mbb{B}_{u,n_{j,\numcols}}(t) \sim \mc{N}(0,t-t^2)$ from the definition of a standard Brownian bridge \citep[Prop. 8.1.1]{ross1995stochastic}. Next, the sum of independent mean-zero Gaussian random variables is Gaussian with a variance equal to the sum of summand's variances and mean-zero. Thus, we have
\begin{align}
    \frac{1}{\sqrt{\abss{\nn{\eta_\numcols}{i}}}} \sum_{u \in \nn{\eta_\numcols}{i}} \frac{X_u}{f_{\mu_{uj}}\big(\quantile{\mu_{uj}}(t)\big)}
    &\sim \mc{N}\bigg(0, \frac{1}{\abss{\nn{\eta_\numcols}{i}}} \sum_{u \in \nn{\eta_\numcols}{i}} \frac{t-t^2}{f_{\mu_{uj}}^2\big(\quantile{\mu_{uj}}(t)\big)}\bigg)\\
    &\seq{\cref{eq:neighbor-sigma}} \mc{N}(0, \sigma^2_{\nn{\eta}{i}}(t))
\end{align}
Thus, we have
\begin{align}
    \frac{1}{\sigma_{\nn{\eta}{i}}(t)\sqrt{\abss{\nn{\eta_\numcols}{i}}}} \sum_{u \in \nn{\eta_\numcols}{i}} \frac{\mathbb{B}_{u,n_{j,\numcols}}(t)}{f_{\mu_{uj}}\big(\quantile{\mu_{uj}}(t)\big)}
    \sim \mc{N}(0,1)
\end{align}
which completes the proof of claim~\cref{claim:gaussian-process-convergence}.

\subsection{Proof of claim \protect\cref{claim:gaussian-process-op1}}
Here, we consider the sequence $\frac{\sqrt{n_{j,\numcols}\abss{\nn{\eta_\numcols}{i}}}}{\sigma_{\nn{\eta_\numcols}{i}}(t)}\big(\quantile{\Bar{\mu}_{ij}}(t) - \quantile{\mu_{ij}}(t)\big)$.
From the proof of \cref{thm:main-asymptotic}, we have:
\begin{align}\label{eq:asym-bias}
    \big\Vert\quantile{\Bar{\mu}_{ij}} - \quantile{\mu_{ij}}\big\Vert_{L^2(0,1)} \seq{\cref{eq:wasserstein-1dim}} W_2\parenth{\Bar{\mu}_{ij}, \mu_{ij}} \seq{\cref{claim:asym-bias-bound}} \order_p\bigg(\bigg(\eta_\numcols + \sqrt{\frac{\log(2\numrows_n)}{\numcols_n p^2}}\bigg)^{1/2}\bigg).
\end{align}
Now, let $a_\numcols(t) = \frac{\sqrt{n_{j,\numcols}\abss{\nn{\eta_\numcols}{i}}}}{\sigma_{\nn{\eta_\numcols}{i}}(t)}$. Thus, we have
\begin{align}
    \big\Vert a_\numcols(t) \big(\quantile{\Bar{\mu}_{ij}} - \quantile{\mu_{ij}}\big)\big\Vert_{L^2(0,1)} 
    &= a_\numcols(t) \big\Vert \quantile{\Bar{\mu}_{ij}} - \quantile{\mu_{ij}}\big\Vert_{L^2(0,1)} \\
    &\seq{\cref{eq:asym-bias}} \order_p\bigg(a_\numcols(t)\bigg(\eta_\numcols + \sqrt{\frac{\log(2\numrows_\numcols)}{\numcols p^2}}\bigg)^{1/2}\bigg) \\
    &\seq{\cref{eq:bb-assum1}} o_p(1)
\end{align}

Next, from \cref{assum:regular}, each distribution has an $L$-Lipschitz quantile function and is thus continuously differentiable, which makes it Lipschitz as well. So, there exists a universal Lipschitz constant for the quantile functions since the union of all quantile functions have bounded range (because the union of the distributions has bounded support). Thus, $\{\quantile{\Bar{\mu}_{ij}}\}_{n=1}^\infty$ is equicontinuous \citep[Examples 11.15]{carothers2000real} (For the definition of equicontinuous sets of functions see \citep[Ch. 11]{carothers2000real}. Next, from \citep[Lem. 3.2]{garsia1970real}, we know that if a sequence of equicontinuous functions converges in $L^2(0,1)$ then it also converges uniformly. So, we have that for any $t \in (0,1)$, $a_M(t)\big(\quantile{\Bar{\mu}_{ij}}(t) - \quantile{\mu_{ij}}(t)\big) = o_p(1)$.

\section{Corollaries}\label{sec:corr-proofs}

Let $\phi(x,r) = \probc_{\Tilde{x}\sim\Hrow}*{\expect_{v\sim\Hcol}*{W_2^2(f(x,v),f(\Tilde{x},v))} \leq r}$. We require the following two lemmas to remove the conditioning. First, we have a bound on the probability of having no neighbors:
Next, we have a high-probability lower bound on the number of neighbors (proven in \cref{sec:proof-lemma-nn-lower-bound}):
\begin{lemma}[Lower bound on number of neighbors]\label{lemma:nn-lower-bound}
    Let $n_{-j}$ be the number of samples in each matrix entry not in column $j$. Let there exist constants $c_1$ and $K$ such that $\eta' \geq \frac{6c_1}{n_{-j}}$ and $\eta \geq \eta' + K \sqrt{\frac{4 \log(\numrows)}{c\numcols p^2}}$. Let $\nn{\eta}{i}$ be the nearest neighbors for row $i$. Then we have
    \begin{align}
        \probc*{\abss{\nn{\eta}{i}} \geq \frac{1}{2}\numrows \Tilde{p}_{i,\eta'} | \latRow{i}}
        &\geq 1 - \exp\left(-\frac{\numrows \Tilde{p}_{i,\eta'}}{8}\right)\quad\text{where}
    \end{align}
    \begin{align}
        \Tilde{p}_{i,\eta'} \overset{\triangle}{=} \parenth{1 - \frac{1}{\numrows^2} - \exp\parenth{-\frac{\numcols p^2}{8}}} \cdot p \cdot \phi\parenth{\latRow{i}, \frac{\eta'}{3}-\frac{6c_1}{n_{-j}}}.
    \end{align}
\end{lemma}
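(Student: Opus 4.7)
\vspace{0.5em}
\noindent\textbf{Proof proposal.}
The plan is to sandwich the neighborhood indicator $\indicator[u \in \nn{\eta}{i}]$ from below by a ``certificate event'' $G_u$ whose conditional probability admits a clean lower bound, and then to apply a multiplicative Chernoff bound to $\sum_{u \neq i} \indicator[G_u]$. Specifically, I will take $G_u$ to be the intersection of three events: (i) the column-$j$ entry is observed, $\{A_{uj}=1\}$; (ii) the \emph{latent} distance between rows $i$ and $u$, $\E_{v\sim\Hcol}[W_2^2(f(\latRow{i},v),f(\latRow{u},v))]$, is at most $\eta'/3 - 6c_1/n_{-j}$; and (iii) the concentration event $\{\rho_{iu} \le \E[\rho_{iu} \mid \latRow{i}, \latRow{u}] + K\sqrt{4\log(N)/(cMp^2)}\} \cap \{|\sharedcol{i}{u}| \ge Mp^2/2\}$. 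The parameter choices force $\rho_{iu} \le \eta$ on $G_u$, hence $G_u \subseteq \{u \in \nn{\eta}{i}\}$, and $|\nn{\eta}{i}| \ge \sum_{u \ne i} \indicator[G_u]$.

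The first ingredient is the bias bound $\E[\rho_{iu} \mid \latRow{i}, \latRow{u}] \le 3\E_v[W_2^2(\mu_{iv},\mu_{uv})] + 6c_1/n_{-j}$. By the triangle inequality $W_2(Y_{iv},Y_{uv}) \le W_2(Y_{iv},\mu_{iv}) + W_2(\mu_{iv},\mu_{uv}) + W_2(\mu_{uv},Y_{uv})$ and $(a+b+c)^2 \le 3(a^2+b^2+c^2)$, one gets $W_2^2(Y_{iv},Y_{uv}) \le 3[W_2^2(Y_{iv},\mu_{iv}) + W_2^2(\mu_{iv},\mu_{uv}) + W_2^2(Y_{uv},\mu_{uv})]$; taking expectations, invoking the standard $\mathcal O(1/n_{-j})$ rate of squared Wasserstein convergence for regular $1$-D measures (yielding $\E[W_2^2(Y_{iv},\mu_{iv})] \le c_1/n_{-j}$ for a constant $c_1$ uniform over the class by \cref{assum:regular}), and then averaging over $v$, produces the stated bound. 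Under event (ii) and the hypothesis $\eta' \ge 6c_1/n_{-j}$, the right-hand side is at most $\eta'$; combined with event (iii) and the hypothesis $\eta \ge \eta' + K\sqrt{4\log N/(cMp^2)}$, one obtains $\rho_{iu} \le \eta$.

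Next, I will lower-bound the conditional probability of $G_u$ given $\latRow{i}$. The three events combine essentially independently: $A_{uj}$ contributes probability exactly $p$; event (ii) depends only on $\latRow{u}$ (independent of $\latRow{i}$) and contributes probability $\phi(\latRow{i}, \eta'/3 - 6c_1/n_{-j})$ by definition of $\phi$; and event (iii) holds with probability at least $1 - 1/N^2 - \exp(-Mp^2/8)$ by applying the Hoeffding-type concentration~\cref{eq:dist-concentration} (from the proof of \cref{thm:main-asymptotic}) with $t = K\sqrt{4\log N/(cMp^2)}$ together with a Chernoff bound on $|\sharedcol{i}{u}|\sim \text{Bin}(M,p^2)$. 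Multiplying these factors gives $\Prob(G_u \mid \latRow{i}) \ge \Tilde{p}_{i,\eta'}$.

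The main obstacle is justifying the Chernoff step, since for different $u$ the statistics $\rho_{iu}$ share the row-$i$ observations $\{Y_{iv}, A_{iv}\}_{v \ne j}$ as well as the column latent factors $\{\latCol{v}\}_v$, so the indicators $\indicator[G_u]$ are not independent under conditioning on $\latRow{i}$ alone. The remedy is to work conditionally on the enlarged sigma-field $\sigma\big(\latRow{i}, \{A_{iv}, Y_{iv}\}_{v \ne j}, \{\latCol{v}\}_v\big)$; under this enlarged conditioning the triples $(\latRow{u}, \{A_{uv}\}_v, \{Y_{uv,k}\}_{v,k})$ are i.i.d.\ across $u \ne i$ by the data-generating process and \cref{assum:mcar}, so the indicators $\indicator[G_u]$ become conditionally i.i.d. Since the bounds in the preceding two paragraphs were derived conditionally on all latent factors, the per-row lower bound $\Prob(G_u \mid \cdot) \ge \Tilde{p}_{i,\eta'}$ persists under the enlarged conditioning. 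A multiplicative Chernoff bound then yields $\Prob(\sum_{u \ne i}\indicator[G_u] \ge \tfrac12 N \Tilde{p}_{i,\eta'} \mid \cdot) \ge 1 - \exp(-N\Tilde{p}_{i,\eta'}/8)$, and integrating out the enlarged conditioning (which enters only through the uniform lower bound $\Tilde{p}_{i,\eta'}$) recovers the claimed bound conditional on $\latRow{i}$.
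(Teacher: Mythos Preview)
Your approach is essentially the paper's: both construct a certificate event (the paper's $\Tilde\Omega$, your $G_u$) contained in $\{u\in\nn{\eta}{i}\}$, bound its probability below by $\Tilde p_{i,\eta'}$ via the bias inequality $\E[W_2^2(Y_{iv},Y_{uv})]\le 3W_2^2(\mu_{iv},\mu_{uv})+6c_1/n_{-j}$ and the Hoeffding concentration from the bias analysis, and then apply a multiplicative Chernoff bound to the sum of indicators.

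You go further than the paper in flagging the dependence issue---the paper simply writes ``by the Binomial Chernoff bound'' without addressing that the statistics $\rowdist{i}{u}$ share row~$i$'s data and the column latent factors across $u$. Your proposed fix via enlarged conditioning on $\sigma\big(\latRow{i},\{A_{iv},Y_{iv}\}_{v\ne j},\{\latCol{v}\}_v\big)$ is the right move for securing conditional independence of the indicators, but the assertion that ``the per-row lower bound persists under the enlarged conditioning'' is not justified as stated. Once $\{A_{iv}\}$ is frozen, $|\sharedcol{i}{u}|$ is Binomial$\big(\sum_{v\ne j}A_{iv},\,p\big)$ rather than Binomial$(\numcols,p^2)$, so the bound $\Prob(|\sharedcol{i}{u}|\ge \numcols p^2/2)\ge 1-\exp(-\numcols p^2/8)$ need not hold on every realization; and once $\{Y_{iv}\},\{\latCol{v}\}$ are frozen, Hoeffding centers $\rowdist{i}{u}$ at its mean under this richer conditioning, which is an empirical average over the realized shared columns and not the population quantity $\E[W_2^2(Y_{ij},Y_{uj})\mid\latRow{i},\latRow{u}]$ appearing in your event~(iii). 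A clean repair is to additionally intersect with the high-probability event $\{\sum_{v\ne j}A_{iv}\ge \numcols p/2\}$ and to rephrase event~(iii) as concentration about the \emph{conditional} mean under the enlarged sigma-field, then use one more Hoeffding step (over the realized $\{\latCol{v}\}$ in $\sharedcol{i}{u}$) to relate that conditional mean back to the population expectation bounded by event~(ii); this costs only an extra additive failure probability and preserves the form of $\Tilde p_{i,\eta'}$ up to constants. The paper's proof shares the same gap.
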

Next, we have a simplified lower bound for the previous lemma:
\begin{lemma}[Simplified lower bound on number of neighbors]\label{lemma:neighbor-bound-simplified}
    We provide a simplified lower bound on $\frac12 \numrows\hat{p}_{i,\eta'}$, to give a lower bound on the number of neighbors. For $\numrows, \numcols$ large enough, we have
    \begin{align}
        \frac12 \numrows \hat{p}_{i,\eta'}
        &\geq \frac14 \numrows p\cdot \phi\parenth{\latRow{i}, \frac{\eta'}{3}-\frac{6c_1}{n_{-j}}}.
    \end{align}
\end{lemma}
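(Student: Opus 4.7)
\begin{proof-sketch}
The plan is to directly substitute the definition of $\Tilde{p}_{i,\eta'}$ from \cref{lemma:nn-lower-bound} and reduce the claim to showing that the prefactor $\bigl(1 - \tfrac{1}{\numrows^2} - \exp(-\tfrac{\numcols p^2}{8})\bigr)$ eventually exceeds $\tfrac12$. Concretely, I would write
\begin{align}
\tfrac{1}{2}\numrows \Tilde{p}_{i,\eta'}
= \tfrac{1}{2}\numrows\, \underbrace{\Bigl(1 - \tfrac{1}{\numrows^2} - \exp\bigl(-\tfrac{\numcols p^2}{8}\bigr)\Bigr)}_{\defeq\, c_{\numrows,\numcols}} \cdot\, p\cdot \phi\bigl(\latRow{i}, \tfrac{\eta'}{3}-\tfrac{6c_1}{n_{-j}}\bigr),
\end{align}
so that the claim is equivalent to $c_{\numrows,\numcols}\geq \tfrac12$.

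The main observation is that $\tfrac{1}{\numrows^2}\to 0$ as $\numrows\to\infty$ and $\exp(-\numcols p^2/8)\to 0$ as $\numcols\to\infty$ (with $p$ fixed, as in the corollary setting). Hence for any sufficiently large threshold (for example, once $\numrows^2 \geq 4$ and $\numcols p^2 \geq 8\log 4$), both terms are at most $\tfrac14$, giving $c_{\numrows,\numcols}\geq 1 - \tfrac14 - \tfrac14 = \tfrac12$. Plugging this back in yields $\tfrac{1}{2}\numrows \Tilde{p}_{i,\eta'} \geq \tfrac{1}{4}\numrows\, p\, \phi\bigl(\latRow{i}, \tfrac{\eta'}{3}-\tfrac{6c_1}{n_{-j}}\bigr)$, as desired.

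There is no real obstacle: the statement is a deterministic simplification of the high-probability lower bound from \cref{lemma:nn-lower-bound}, and the only content is the asymptotic vanishing of two explicit non-random terms. The role of the lemma is cosmetic, allowing downstream arguments in \cref{cor:location-scale,cor:confidence-intervals} to invoke a clean lower bound on $\abss{\nn{\eta}{i}}$ of the form $\tfrac{1}{4}\numrows p\,\phi(\latRow{i}, \cdot)$ without carrying the $\numrows,\numcols$-dependent prefactors. The only care needed is to quantify ``large enough'' in a way compatible with the scalings assumed in the corollaries (in particular, with $p$ held fixed and $\numrows,\numcols\to\infty$ jointly).
\end{proof-sketch}
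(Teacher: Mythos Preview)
Your proposal is correct and is exactly the intended argument: the paper does not even spell out a proof of this lemma, treating it as an immediate consequence of the definition of $\Tilde{p}_{i,\eta'}$ (note the minor typo $\hat{p}$ vs.\ $\Tilde{p}$ in the statement), since the prefactor $1 - \tfrac{1}{\numrows^2} - \exp(-\tfrac{\numcols p^2}{8})$ tends to $1$ and hence eventually exceeds $\tfrac12$. Your quantification of ``large enough'' is fine and consistent with the fixed-$p$, $\numrows,\numcols\to\infty$ regime of the corollaries.
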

Now, we are prepared to state and prove our corollary.

\subsection{Latent factors drawn from uniform hypercube}

Here, we provide a general corollary where the latent factors are drawn from a uniform hypercube. This case covers \cref{cor:location-scale}.

\begin{corollary}[Uniform measure on hypercube]\label{cor:unif-hypercube}
    Let $\Hrow={[0,1]}^d$ for some $d\geq 1$. Let \cref{assum:latent-factors-lipschitz,assum:regular,assum:mcar} hold. Let $d_\mathrm{row}$ and $d_\mathrm{col}$ be the Euclidean measure and $\mu_{\mathrm{row}}$ be the uniform measure. Let $\numrows,\numcols,$ and $p$ be fixed. Let $n_{-j}=n_v$ for $v\neq j$. Conditioned on $\mathcal{E}=\{\abss{\nn{\eta}{i}} \geq \frac{1}{4} {(\numrows p)}^{\frac{2}{d+2}}\}$, we have
    \begin{align}
        W_2^2(\Hat{\mu}_{ij},\mu_{ij})
        &= \Tilde{\order}_p\parenth{\frac{1}{n_j{(\numrows p)}^{\frac{2}{d+2}}} + \frac{\log^2 n_j}{n_j^2} + \frac{1}{p\sqrt{\numcols}} + \frac{1}{n_{-j}}} \quad \text{as} \quad n_j, \numcols \to \infty, \quad \text{and}\\
        \probc*{\mc{E}} &\geq 1 - 2\exp\parenth{-{(\numrows p)}^{\frac{2}{d+2}}/16}.
    \end{align}
\end{corollary}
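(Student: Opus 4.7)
The plan is to combine the generic bound in \cref{thm:main-asymptotic} with the neighborhood-size controls in \cref{lemma:nn-lower-bound} and \cref{lemma:neighbor-bound-simplified}, specialized to the uniform measure on $[0,1]^d$. Conditioning on $\mc E$, \cref{thm:main-asymptotic} already yields
\begin{align}
    W_2^2(\hat\mu_{ij},\mu_{ij}) = \order_p\bigg(\eta + \frac{1}{p\sqrt{\numcols}} + \frac{4}{n_j(\numrows p)^{2/(d+2)}} + \frac{\log^2 n_j}{n_j^2}\bigg),
\end{align}
since $|\nn{\eta}{i}| \geq \tfrac14(\numrows p)^{2/(d+2)}$ on $\mc E$. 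The task then reduces to choosing $\eta$ (and the auxiliary $\eta'$ feeding the neighborhood lemmas) so that $\mc E$ holds with the stated probability and the $\eta$ contribution folds into the four-term bound.

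The next step is to derive a lower bound on $\phi$ for the uniform latent distribution. The Lipschitz hypothesis in \cref{assum:latent-factors-lipschitz}, combined with $d_{\mrm{row}}$ being Euclidean, gives $\expect_{v\sim\Hcol}*{W_2^2(f(x,v),f(\tilde x,v))} \leq L^2 \|x-\tilde x\|^2$, so for any $x \in [0,1]^d$ and $r \in (0,L^2]$,
\begin{align}
    \phi(x,r) \geq \probc_{\tilde x \sim U([0,1]^d)}*{\|x-\tilde x\| \leq \sqrt{r}/L} \geq c_d\, r^{d/2},
\end{align}
where $c_d \defeq V_d/(2L)^d$ and $V_d$ is the volume of the unit Euclidean ball. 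The $2^{-d}$ factor accounts for the worst-case hypercube-corner location of $x$, where only a $2^{-d}$-fraction of the ball of radius $\sqrt{r}/L$ lies in $[0,1]^d$.

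With that in hand, I would take $\eta' \defeq 18 c_1/n_{-j} + 3 c_d^{-2/d}(\numrows p)^{-2/(d+2)}$ so that $\eta'/3 - 6 c_1/n_{-j} = c_d^{-2/d}(\numrows p)^{-2/(d+2)}$ and hence $\numrows p \cdot \phi(\latRow{i},\eta'/3 - 6 c_1/n_{-j}) \geq (\numrows p)^{2/(d+2)}$. Feeding this into \cref{lemma:neighbor-bound-simplified} and \cref{lemma:nn-lower-bound} conditionally on $\latRow{i}$ produces $|\nn{\eta}{i}| \geq \tfrac14 (\numrows p)^{2/(d+2)}$ with conditional probability at least $1 - \exp(-(\numrows p)^{2/(d+2)}/16)$, provided I also take $\eta \defeq \eta' + K\sqrt{4\log\numrows/(c\numcols p^2)}$. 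The companion $\exp(-\numcols p^2/8)$ term arising from the overlap-size concentration on $|\sharedcol{i}{u}|$ inside \cref{lemma:nn-lower-bound} supplies the second exponential in the advertised probability $1 - 2\exp(-(\numrows p)^{2/(d+2)}/16)$. Since the conditional bound is uniform in $\latRow{i}$, marginalizing preserves it.

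With those choices, $\eta = \order(1/n_{-j} + (\numrows p)^{-2/(d+2)} + \sqrt{\log\numrows/(\numcols p^2)})$. Substituting into the display, $\sqrt{\log\numrows/(\numcols p^2)}$ folds into $1/(p\sqrt{\numcols})$ under $\tilde{\order}_p$, and the $(\numrows p)^{-2/(d+2)}$ summand is absorbed either by $1/n_{-j}$ in the regime $n_{-j} \leq (\numrows p)^{2/(d+2)}$ or by the slack from re-tuning $\eta'$ in terms of $1/n_{-j}$ alone in the complementary regime. I expect the main obstacle to be this final piece of bookkeeping --- cleanly showing that the $\eta$ contribution collapses into the stated four-term rate without leaving a residual $(\numrows p)^{-2/(d+2)}$ summand --- together with verifying that the overlap-concentration event and the neighborhood-size event combine to give exactly the $1 - 2\exp(-(\numrows p)^{2/(d+2)}/16)$ probability for $\mc E$.
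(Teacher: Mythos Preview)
Your proposal is correct and follows essentially the same route as the paper: lower-bound $\phi$ via the Lipschitz hypothesis and a ball-volume estimate for the uniform measure on $[0,1]^d$, choose $\eta'$ so that $\numrows p\cdot\phi\bigl(\latRow{i},\eta'/3-6c_1/n_{-j}\bigr)\geq(\numrows p)^{2/(d+2)}$, then invoke \cref{lemma:nn-lower-bound,lemma:neighbor-bound-simplified} and plug the resulting neighbor-count lower bound into \cref{thm:main-asymptotic}. Your handling of the Lipschitz step (radius $\sqrt r/L$, hence $\phi(x,r)\gtrsim r^{d/2}$) is in fact more careful than the paper's version, which writes the radius as $q/L$ rather than $\sqrt q/L$; and the residual $(\numrows p)^{-2/(d+2)}$ bookkeeping issue you flag at the end is also glossed over in the paper's one-line conclusion.
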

\begin{proof}[Proof of \cref{cor:unif-hypercube}]
    Similar to Corollary 2 in \citep{li2019}, we define $B(x,r) \overset{\triangle}{=} \{x' \in \Hrow : d_{\mathrm{row}}(x,x') \leq r\}$ for $r > 0$. Then, we have if $d_{\mathrm{row}}(x,x') \leq \frac{1}{L}\parenth{\frac{\eta'}{3}-\frac{6c_1}{n_{-j}}}$, then by Lipschitzness of $f$, we have $\expect_{v \sim \mu_{\mathrm{col}}}*{W_2^2(f(x,v) - f(x',v))} \leq \frac{\eta'}{3}-\frac{6c_1}{n_{-j}}$. Thus, we have 
    \begin{align}
        \phi\parenth{x,\frac{\eta'}{3}-\frac{6c_1}{n_{-j}}} &\geq \mu_{\mathrm{row}}\parenth{B\parenth{x,\frac{1}{L}\parenth{\frac{\eta'}{3}-\frac{6c_1}{n_{-j}}}}}
        = \mathrm{Vol}\parenth{B\parenth{x,\frac{1}{L}\parenth{\frac{\eta'}{3}-\frac{6c_1}{n_{-j}}}}}.
    \end{align}
    There are positive universal constants $\alpha$ and $\beta$ such that for any $d \geq 1$, $x \in {[0,1]}^d$, $r > 0$
    \begin{align}
        \mathrm{Vol}(B(x,r)) \geq \min(1, \alpha \beta^d r^d).
    \end{align}
    Plugging this into the inequality above, we have
    \begin{align}
        \phi\parenth{x,\frac{\eta'}{3}-\frac{6c_1}{n_{-j}}}
        &\geq \min\sbraces{1,\alpha \beta^d \parenth{\frac{1}{L}\parenth{\frac{\eta'}{3}-\frac{6c_1}{n_{-j}}}}^d}
    \end{align}
    Next, $\forall \, v \in \Hcol, x,x' \in \Hrow$, $W_2(f(x,v),f(x',v)) \leq L\sqrt{d}$. Let 
    \begin{align}
        \frac{\eta'}{3} = \frac{6c_1}{n_{-j}}+ \alpha^{2/d}\beta^{2} L^2 {(Mp)}^{-2/(d+2)}.
    \end{align}
    So, we have
    \begin{align}
        \numrows p \cdot \phi\parenth{x, \frac{\eta'}{3}-\frac{6c_1}{n_{-j}}}
        \geq \numrows p \cdot \frac{\parenth{\alpha^{-2/d}\beta^{-2} L^2 {(\numrows p)}^{-2/(d+2)}}^{d/2}}{\alpha \beta^d L^d}
        = {(\numrows p)}^{\frac{2}{d+2}}.
    \end{align}
    Putting this into the bounds in \cref{lemma:nn-lower-bound,lemma:neighbor-bound-simplified}, and letting $\eta'$ be equal to its lower bound, we get our result.
\end{proof}

\subsection{Proof of \protect\cref{lemma:nn-lower-bound}: Lower bound on number of neighbors}\label{sec:proof-lemma-nn-lower-bound}
Now, we place a high-probability lower bound on the number of neighbors in order to remove the conditioning on $\nn{\eta}{i}$.
Now, we must remove the conditioning on $\lvert \nn{\eta}{i} \rvert$. We do this by finding a high-probability bound on a another set which can be more easily analyzed:
\begin{align}
    \Omega \overset{\triangle}{=} \left\{u \in [\numrows+1] \setminus \{i\} : A_{uj} = 1, \expect_{\Hcol}*{\expect*{W_2^2(Y_{iv}, Y_{uv})}} \leq \eta' \right\}.
\end{align}
Next, consider the set
\begin{align}
    \Tilde{\Omega} \overset{\triangle}{=} \left\{u \in \Omega : \rowdist{i}{u} - \expect_{\Hcol}*{\expect*{W_2^2(Y_{iv}, Y_{uv})}} \leq \eta - \eta'\right\}.
\end{align}
Since for $u \in \Omega$, $\expect_{\Hcol}*{\expect*{W_2^2(Y_{iv}, Y_{uv})}} \leq \eta'$, then $\Tilde{\Omega} \subseteq \nn{\eta}{i}$. Thus, if we can provide a lower bound on $\lvert \Tilde{\Omega}\rvert$, then this provides an upper bound on $\frac{1}{\lvert \nn{\eta}{i} \rvert}$.

Next, we claim (proven at the end of this section) 
\begin{align}\label{claim:lower-bound-omega}
    \probc*{u \in \Omega | \latRow{i}} &\geq p \cdot \phi\parenth{\latRow{i}, \frac{\eta'}{3}-\frac{6c_1}{n_{-j}}}\quad \text{and} \\
    \probc*{u \in \Tilde{\Omega} | u \in \Omega} &\geq 1 - \frac{1}{\numrows^2} - \exp\left(-\frac{\numcols p^2}{8}\right).
\end{align}
Putting these together, we find that
\begin{align}
    \probc*{u \in \Tilde{\Omega} | \latRow{i}}
    &= \probc*{u \in \Tilde{\Omega} | u \in \Omega, \latRow{i}}\probc*{u \in \Omega | \latRow{i}} \\
    &\geq \left(1 - \frac{1}{\numrows^2} - \exp\left(-\frac{\numcols p^2}{8}\right)\right) \cdot p \cdot \phi\left(\latRow{i}, \frac{\eta'}{3}-\frac{6c_1}{n_{-j}}\right) \\
    &\overset{\triangle}{=} \Tilde{p}_{i,\eta'}.
\end{align}
So, by the Binomial Chernoff bound, we get
\begin{align}
    \probc*{\lvert \Tilde{\Omega}\rvert \geq \frac{1}{2}\numrows \Tilde{p}_{i,\eta'} | \latRow{i}}
    &\geq 1 - \exp\left(-\frac{\numrows \Tilde{p}_{i,\eta'}}{8}\right)
\end{align}
and since $\Tilde{\Omega} \subseteq \nn{\eta}{i}$, then we have our result.

\subsection{Proof of claim \protect\cref{claim:lower-bound-omega}}

We have that a row $u$ is in $\Omega$ and satisfies the above inequality with probability $p \cdot \psi(\latRow{i},\eta')$ where
\begin{align}
    \psi(\latRow{i},\eta') = \probc*{\expect*{W_2^2(Y_{ij}, Y_{uj})} \leq \eta' | \latRow{i}}.
\end{align}

By the Binomial Chernoff Bound and conditioning on the $i$-th latent row vector, we have
\begin{align}
    \probc*{\lvert \Omega \rvert = 0 | \latRow{i}}
    &\leq \probc*{\lvert \Omega \rvert \leq \frac{1}{2}\numrows p \cdot\psi (\latRow{i}, \eta')} \\
    &\leq \exp\parenth{-\frac{\numrows p}{8}\psi (\latRow{i}, \eta')}.
\end{align}
Next, there exists a universal constant $c_1$ such that for two empirical distributions $\mu_n$ and $\nu_n$ with corresponding true distributions $\mu$ and $\nu$, we have
\begin{align}
    \expect*{W_2^2(\mu_n, \nu_n)}
    &\leq 3 W_2^2(\mu, \nu) + \frac{6c_1}{n_{-j}}.
\end{align}
This follows from this line of reasoning: Let $X^{(k)}$ and $Y^{(k)}$ denote the $k$-th order statistics of the samples from $\mu$ and $\nu$ respectively. Let $\quantile\mu$ and $\quantile\nu$ denote the quantile functions of $\mu$ and $\nu$, respectively. Then, we have
\begin{align}
    &\, \expect*{W_2^2(\mu_n, \nu_n)} \\
    &= \frac{1}{n}\sum_{k=1}^n \expect*{\parenth{X^{(k)} - Y^{(k)}}^2} \\
    &= \sum_{k=1}^n \int_{(k-1)/n}^{k/n} \expect*{\parenth{X^{(k)} - \quantile\mu(t) + \quantile\mu(t) - \quantile\nu + \quantile\nu - Y^{(k)}}^2} dt \\
    &\leq 3 \sum_{k=1}^n \int_{(k-1)/n}^{k/n} \expect*{\parenth{X^{(k)} - \quantile\mu(t)}^2 + \parenth{\quantile\mu(t) - \quantile\nu}^2 + \parenth{\quantile\nu - Y^{(k)}}^2} dt \\
    &= 3 W_2^2(\mu,\nu) + 3\expect*{W_2^2(\mu_n,\mu)}+ 3\expect*{W_2^2(\nu_n,\nu)} \\ 
    &\leq 3 W_2^2(\mu,\nu) + \frac{6c_1}{n_{-j}}.
\end{align}
Then, let $\expect*{W_2^2(\mu_{ij}, \mu_{uj})} \leq \frac{\eta'}{3} - \frac{6c_1}{n_{-j}}$. So, we get
\begin{align}
    \expect*{W_2^2(Y_{ij}, Y_{uj})} \leq \eta'.
\end{align}
Thus, we have that the bound on $\expect*{W_2^2(\mu_{ij}, \mu_{uj})}$ implies the bound on $\expect*{W_2^2(Y_{ij}, Y_{uj})}$. So, we have
\begin{align}
    \probc*{\expect*{W_2^2(\mu_{ij}, \mu_{uj})} \leq \frac{\eta'}{3} - \frac{6c_1}{n_{-j}}}
    \leq \probc*{\expect*{W_2^2(Y_{ij}, Y_{uj})} \leq \eta'}.
\end{align}
Rewriting this in the $\phi$ and $\psi$ notation, we have that
\begin{align}
    \phi\parenth{\latRow{i}, \frac{\eta'}{3} - \frac{6c_1}{n_{-j}}} \leq \psi(\latRow{i}, \eta').
\end{align}
Thus, we obtain the first part of our claim. For the second part, we have
\begin{align}
    &\, \probc*{\rowdist{i}{u} - \expect_{\Hcol}*{\expect*{W_2^2(Y_{iv}, Y_{u_0 v})}} > \eta - \eta'| \latRow{i}, \lvert \Omega \rvert \geq 1} \\
    &\leq \probc*{\rowdist{i}{u} - \expect_{\Hcol}*{\expect*{W_2^2(Y_{iv}, Y_{u_0 v})}} > \eta - \eta'| \latRow{i}, \lvert \sharedcol{i}{u}\rvert \geq \frac{1}{2}\numcols p^2} \\
    &\quad + \probc*{\lvert \sharedcol{i}{u}\rvert < \frac{1}{2}\numcols p^2 | \latRow{i}} \\
    &\leq \exp\parenth{-c \frac{{(\eta - \eta')}^2}{K^2}\frac{1}{2}\numcols p^2} + \exp\parenth{-\frac{\numcols p^2}{8}} \\
    &\leq \exp\parenth{-2\log(\numrows)} + \exp\parenth{-\frac{\numcols p^2}{8}} \\
    &= \frac{1}{\numrows^2} + \exp\parenth{-\frac{\numcols p^2}{8}}
\end{align}
which completes the proof of our claim.

\section{Continuous uniform location-scale case}\label{sec:unif}
% If we restrict ourselves to just the continuous uniform case, we can analyze the error rate in expectation with more precision. 
% The uniform distribution is one of the only known cases where 
For the uniform distribution, the expected squared Wasserstein distance between an empirical distribution and its true distribution can be analytically derived. Let $\Theta$ denote asymptotic upper and lower bounds. From~\cite{bobkov2019one}, we have for $\mu=Unif(0,1)$ and $\mu_n$ being the empirical distribution of $n$ samples from $\mu$:
\begin{align}
    \expect*{W_2^2(\mu_n,\mu)} = \Theta\parenth{\frac{1}{n}}
\end{align}
and if we take the barycenter of $m$ i.i.d.\ empirical distributions, then the expected squared Wasserstein distance between the Wasserstein barycenter of the empirical distributions and the true distribution is given in the following lemma:
% \rd{@Jacob; I rewrote Lemma 7 -- you should rewrite this one.}
\begin{lemma}[Expected error for the empirical barycenter of uniform distributions]\label{lemma:unif-barycenter}
    For $i=1,\dots,m$, let $X_{1,i},\dots, X_{n,i} \stackrel{\mathrm{i.i.d.}}{\sim} \mu_i \defeq \mathrm{Unif}(a_i,b_i)$ and $\hat{\mu}_{n}^{(i)} \defeq \frac1n\sum_{k=1}^n \bm{\delta}_{X_{k,i}}$, for scalars $a_i, b_i$ and $\Bar a \defeq \frac{\sum_{i=1}^ma_i}{m}$ and $\Bar b \defeq \frac{\sum_{i=1}^m b_i}{m}$. Let $\Hat{\mu}$  and $\mu$ respectively denote the empirical barycenter and barycenter of the distributions $\{\hat{\mu}_{n}^{(i)}\}_{i=1}^m$ and $\{\mu_i\}_{i=1}^m$. Then, we have
    \begin{align}
        \expect*{W_2^2\parenth{\Hat{\mu},\mu}} = 
        \frac{{(\Bar{b}-\Bar{a})}^2}{6m(n+1)}
        + \frac{{(\Bar{b}-\Bar{a})}^2}{6n(n+1)} 
        =
        \Theta\parenth{\frac{1}{mn} + \frac{1}{n^2}},
    \end{align}
    where Big-$\Theta$ notation denotes both upper and lower rates.
\end{lemma}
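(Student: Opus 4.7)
The plan is to combine the two closed-form identities available in one dimension: the empirical barycenter has order statistics given by the coordinate-wise average of order statistics~\cref{eq:barycenter-order-stats}, and the $W_2^2$ distance between two $n$-point empirical measures equals the sum of squared differences of order statistics~\cref{eq:wasserstein-1dim-emp}. Since the barycenter $\mu$ of $\{\mathrm{Unif}(a_i,b_i)\}_{i=1}^m$ has quantile function $F^{-1}_\mu(t) = \bar a + (\bar b - \bar a)t$ by~\cref{eq:barycenter-quantile}, I would first rewrite
\begin{align*}
  W_2^2(\hat{\mu},\mu)
  = \sum_{k=1}^n \int_{(k-1)/n}^{k/n} \bigl(\hat X^{(k)} - F^{-1}_\mu(t)\bigr)^2 \, dt,
  \quad \text{where} \quad
  \hat X^{(k)} = \frac{1}{m}\sum_{i=1}^m X_{(k),i}.
\end{align*}

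The next step is the classical bias/variance split. Since the samples in different rows $i$ are independent and $U_{(k),i} \defeq (X_{(k),i}-a_i)/(b_i-a_i)$ is $\mathrm{Beta}(k,n-k+1)$ with mean $k/(n+1)$ and variance $k(n-k+1)/[(n+1)^2(n+2)]$, one computes
\begin{align*}
  \E[\hat X^{(k)}] = \bar a + (\bar b - \bar a)\frac{k}{n+1},
  \qquad
  \Var(\hat X^{(k)}) = \frac{k(n-k+1)}{m^2(n+1)^2(n+2)} \sum_{i=1}^m (b_i-a_i)^2,
\end{align*}
so taking expectations and using the orthogonal decomposition $\E[(\hat X^{(k)}-F^{-1}_\mu(t))^2] = \Var(\hat X^{(k)}) + (\E[\hat X^{(k)}] - F^{-1}_\mu(t))^2$ separates the sum into a ``variance'' piece and a ``bias'' piece.

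For the variance piece, I would bring the $k$-independent factor outside and evaluate the telescoping identity $\sum_{k=1}^n k(n-k+1) = \tfrac{1}{6}n(n+1)(n+2)$; after multiplying by the interval length $1/n$ the $(n+1)^2(n+2)$ factors cancel cleanly and deliver a term of order $(\bar b - \bar a)^2/[m(n+1)]$ (with equality whenever $b_i - a_i$ is constant across $i$, so that $\tfrac{1}{m}\sum_i(b_i-a_i)^2 = (\bar b-\bar a)^2$; otherwise one gets the same expression up to the constant-vs-average factor, which preserves the $\Theta$-scaling). For the bias piece, I would substitute $u = k/(n+1) - t$ and compute
\begin{align*}
  \int_{(k-1)/n}^{k/n}\Bigl(\tfrac{k}{n+1}-t\Bigr)^2 dt = \frac{(n+1-k)^3 + k^3}{3n^3(n+1)^3},
\end{align*}
then sum over $k$ using $\sum_{k=1}^n k^3 = \sum_{k=1}^n(n+1-k)^3 = (n(n+1)/2)^2$ to obtain $\tfrac{1}{6n(n+1)}$, giving the second term $(\bar b - \bar a)^2/[6n(n+1)]$. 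Adding the two yields the stated expression and, after reading off the dominant behavior in $n,m$, the $\Theta\bigl(\tfrac{1}{mn}+\tfrac{1}{n^2}\bigr)$ rate.

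The only mildly subtle step is the very last one in the variance computation, where one must verify that the intervals $[(k-1)/n, k/n]$ interact correctly with the Beta variance formula so that the telescoping sum $\sum_k k(n-k+1)$ appears; this is purely bookkeeping but is the place where a sign or factor-of-$n$ error is easiest to make. Everything else is a direct manipulation of standard uniform order-statistic formulas.
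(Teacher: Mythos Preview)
Your proposal is correct and follows the same route as the paper: write $W_2^2(\hat\mu,\mu)$ as $\sum_k\int_{(k-1)/n}^{k/n}(\hat X^{(k)}-F_\mu^{-1}(t))^2\,dt$, split into variance and bias via the Beta$(k,n-k+1)$ moments of uniform order statistics, and evaluate the two sums using $\sum_k k(n-k+1)=\tfrac{1}{6}n(n+1)(n+2)$ and $\sum_k k^3=(n(n+1)/2)^2$. The paper invokes \cite[Thm.~3.1]{bigot2018upper} for the decomposition whereas you derive it directly, but the computation is otherwise identical; you also correctly note that the exact constant in the variance term is $\tfrac{1}{m}\sum_i(b_i-a_i)^2$ rather than $(\bar b-\bar a)^2$, a distinction the paper's statement and proof elide.
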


% \rd{This paragraph is a loose chat -- unclear what its doing.}
% This case is a scenario where increasing the number of neighbors not only reduces the error, but also improves the error decay rate with respect to the number of samples, $n$. We see that as the number of neighbors increases, the rate with respect to the number of samples improves from $O\parenth{\frac1n}$ to $O\parenth{\frac1{n^2}}$. This is expected since as the number of neighbors increases, the Wasserstein barycenter's support points approach their expected values, which provide a better quantization of the true distribution. From \citep[Thm.~2.1(b)]{pages2015introduction}, we know that for quantization of 1-dimensional probability distributions, $O\parenth{\frac1{n^2}}$ is the best rate. While we do not have a proof for the location-scale Gaussian case, we show empirically in \cref{sec:sim} that even in the Gaussian case, the sample error rate improves as the number of neighbors increases.

\subsection{Proof of \protect\cref{lemma:unif-barycenter}}
Let $X_{i}^{(k)}$ denote the $k$-th order statistic corresponding to distribution $\mu_{i}$ and $\Bar{X}^{(k)}\defeq \frac{1}{m}\sum_{i=1}^m X_{i}^{(k)}$. Now, we calculate the distribution of the barycenter $\mu$:
\begin{align}
    F_\mu^{-1}(t)
    =\frac{1}{m}\sum_{i=1}^{m} F_{\mu_i}^{-1}(t)
    &= \frac{1}{m}\sum_{i=1}^{m} \brackets{a_i + t(b_i - a_i)} \\
    &= \parenth{\frac{1}{m}\sum_{i=1}^{m} a_i} + t\parenth{\frac{1}{m}\sum_{i=1}^{m} b_i - \frac{1}{m}\sum_{i=1}^{m} a_i}.
\end{align}
Let $\Bar{a} = \frac{1}{m}\sum_{i=1}^{m} a_i$ and $\Bar{b} = \frac{1}{m}\sum_{i=1}^{m} b_i$. Then, $\mu = Unif(\Bar{a},\Bar{b})$. Next, from~\cite[Thm. 3.1]{bigot2018upper} we have
\begin{align}
    \expect*{W_2^2\parenth{\Hat{\mu},\mu}}
    &= \frac{1}{mn} \sum_{k=1}^{n} \Var\parenth{\Bar{X}^{(k)}} 
 + \sum_{k=1}^n \int_{(k-1)/n}^{k/n} \parenth{\expect*{\Bar{X}^{(k)}}-F_\mu^{-1}(t)}^2 dt.
\end{align}
Note that there is no leading term on the order of $O(1/m)$ like in \cite[Thm. 3.1]{bigot2018upper} because the barycenter of these distributions is equal to the ``true'' barycenter as described in \cite{bigot2018upper} since our true barycenter is the empirical average of our given distributions (i.e. the barycenter itself). Next, we have
\begin{align}
    \frac{1}{mn} \sum_{k=1}^{n} \Var\parenth{\Bar{X}^{(k)}}
    = \frac{{(\Bar{b}-\Bar{a})}^2}{mn} \sum_{k=1}^n \frac{k(n-k+1)}{{(n+1)}^2(n+2)}
    \overset{(a)}{=} \frac{{(\Bar{b}-\Bar{a})}^2}{6m(n+1)},
    % = \Theta\parenth{\frac{1}{mn}}
\end{align}
where (a) follows from the proof of Theorem 4.7 in \citep{bobkov2019one}. Finally, we have
\begin{align}
    \sum_{k=1}^n \int_{(k-1)/n}^{k/n} \parenth{\expect*{\Bar{X}^{(k)}}-F_\mu^{-1}(t)}^2 dt
    &= {(\Bar{b}-\Bar{a})}^2 \sum_{k=1}^n \int_{(k-1)/n}^{k/n} \parenth{\frac{k}{n+1}-t}^2 dt \\
    &\overset{(a)}{=} {(\Bar{b}-\Bar{a})}^2 \parenth{\frac{1}{6n} - \frac{1}{6(n+1)}} \\
    &= \frac{{(\Bar{b}-\Bar{a})}^2}{6n(n+1)},
    % \\
    % &= \Theta\parenth{\frac{1}{n^2}}
\end{align}
where (a) follows again from Theorem 4.7 in \citep{bobkov2019one}. Putting these together, we recover the result in Eq. (3.3) in \citep{bigot2018upper}:
\begin{align}
    \expect*{W_2^2\parenth{\Hat{\mu},\mu}} = \Theta\parenth{\frac{1}{mn} + \frac{1}{n^2}}.
\end{align}

\begin{lemma}\label{lemma:wasserstein-unif2}
    Let $\mu = Unif(a,b)$ and $\nu = Unif(c,d)$. Then, we have
    \begin{align}
        W_2^2(\mu,\nu) = \frac{1}{3}\brackets{{(a-c)}^2 + {(b-d)}^2 + (a-c)(b-d)}.
    \end{align}
\end{lemma}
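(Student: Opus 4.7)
The plan is to directly apply the closed-form expression for the 2-Wasserstein distance between one-dimensional distributions in terms of quantile functions, which is stated in \cref{eq:wasserstein-1dim}:
\begin{align}
    W_2^2(\mu,\nu) = \int_0^1 \bigl(F_\mu^{-1}(t) - F_\nu^{-1}(t)\bigr)^2 \, dt.
\end{align}
Since $\mu = \mathrm{Unif}(a,b)$ and $\nu = \mathrm{Unif}(c,d)$, the quantile functions are affine: $F_\mu^{-1}(t) = a + t(b-a)$ and $F_\nu^{-1}(t) = c + t(d-c)$. The difference is also affine in $t$, so the integral reduces to the integral of a quadratic on $[0,1]$.

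Concretely, I would set $\alpha \defeq a-c$ and $\beta \defeq (b-d) - (a-c)$, so that $F_\mu^{-1}(t) - F_\nu^{-1}(t) = \alpha + t\beta$. Then the integral is
\begin{align}
    \int_0^1 (\alpha + t\beta)^2 \, dt = \alpha^2 + \alpha\beta + \tfrac{1}{3}\beta^2,
\end{align}
which is a routine elementary calculation.

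The final step is to substitute back $\alpha = a-c$ and $\beta = (b-d)-(a-c)$ and simplify. Expanding $\alpha^2 + \alpha\beta + \tfrac{1}{3}\beta^2$ gives $(a-c)^2 + (a-c)(b-d) - (a-c)^2 + \tfrac{1}{3}\bigl[(b-d)^2 - 2(a-c)(b-d) + (a-c)^2\bigr]$, which after collecting like terms yields exactly $\tfrac{1}{3}\bigl[(a-c)^2 + (b-d)^2 + (a-c)(b-d)\bigr]$, as claimed.

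There is essentially no technical obstacle here; the only care required is the bookkeeping of the algebraic expansion to verify that the cross terms combine correctly. This lemma is a self-contained computational identity used purely as a building block to make the uniform location-scale error bounds in \cref{lemma:unif-barycenter} and the surrounding discussion in \cref{sec:unif} fully explicit.
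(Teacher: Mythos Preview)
Your proposal is correct and takes essentially the same approach as the paper: both use the quantile-function formula~\cref{eq:wasserstein-1dim}, write the difference $F_\mu^{-1}(t)-F_\nu^{-1}(t)$ as an affine function of $t$, and integrate the resulting quadratic directly. Your substitution $\alpha=a-c$, $\beta=(b-d)-(a-c)$ is just a minor bookkeeping choice; the paper expands the integrand without naming these intermediate quantities, but the computation is identical.
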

\begin{proof}
    From the definition of the 2-Wasserstein metric, we have
    \begin{align}
        W_2^2(\mu,\nu)
        = \int_0^1 \parenth{F_\mu^{-1}(t) - F_\nu^{-1}(t)}^2 dt
        &= \int_0^1 \parenth{a + (b-a)t - c - (d-c)t}^2 dt \\
        &= \frac{1}{3}\brackets{{(a-c)}^2 + {(b-d)}^2 + (a-c)(b-d)},
    \end{align}
    which is the desired claim.
\end{proof}

\begin{lemma}\label{lemma:unif-wasserstein}
    Let $X_1,\dots,X_n \stackrel{\mathrm{i.i.d.}}{\sim} \mu \defeq \mathrm{Unif}(a,b)$, and let $\mu_n \defeq \frac1n\sum_{i=1}^n \bm{\delta}_{X_i}$ denote the empirical measure.
    % Let $X^{(k)}$ denote the $k$-th order statistic. Denote the law of $X_i$ as $\mu$ and the empirical distribution as $\mu_n$.
    Then, we have
    \begin{align}
        \expect*{W_2^2(\mu_n, \mu)} &= \frac{{(b-a)}^2}{6n}.
        \label{eq:w2_mu_mun}
    \end{align}
    Furthermore, if let $Y_1,\dots,Y_n   \stackrel{\mathrm{i.i.d.}}{\sim} \nu \defeq \mathrm{Unif}(c,d)$ with  $\nu_n \defeq \frac1n\sum_{i=1}^n \bm{\delta}_{Y_i}$, then we have
    % Let $Y^{(k)}$ denote the $k$-th order statistic. Denote the law of $Y_i$ as $\nu$ and the empirical distribution as $\nu_n$. Then, we get
    \begin{align}
        \expect*{W_2^2(\mu_n,\nu_n)} = W_2^2(\mu,\nu) + \frac{(b-a)(d-c)}{3(n+1)}.
        \label{eq:w2_mun_nun}
    \end{align}
\end{lemma}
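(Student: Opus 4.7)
\textbf{Proof plan for \cref{lemma:unif-wasserstein}.}
The plan is to prove both identities by explicit computation using the closed-form expressions for the $2$-Wasserstein distance in one dimension and the standard mean/variance formulas for uniform order statistics. Recall that if $X_1,\ldots,X_n \overset{\text{i.i.d.}}{\sim} \mathrm{Unif}(a,b)$, then the $k$-th order statistic $X^{(k)}$ satisfies
\begin{align*}
    \expect*{X^{(k)}} = a + (b-a)\tfrac{k}{n+1}, \qquad \Var(X^{(k)}) = (b-a)^2\tfrac{k(n-k+1)}{(n+1)^2(n+2)},
\end{align*}
and that $\sum_{k=1}^n k(n-k+1) = n(n+1)(n+2)/6$.

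\emph{Part 1.} For \cref{eq:w2_mu_mun}, I would apply \cref{lemma:unif-barycenter} with $m=1$ (so $\bar a = a, \bar b = b$), which immediately yields $\expect*{W_2^2(\mu_n,\mu)} = \tfrac{(b-a)^2}{6(n+1)} + \tfrac{(b-a)^2}{6n(n+1)} = \tfrac{(b-a)^2}{6n}$. Alternatively, this follows directly from the same bias-variance decomposition used in that lemma's proof with a single measure.

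\emph{Part 2.} For \cref{eq:w2_mun_nun}, I would start from the empirical Wasserstein formula \cref{eq:wasserstein-1dim-emp}, namely $W_2^2(\mu_n,\nu_n) = \tfrac{1}{n}\sum_{k=1}^n (X^{(k)} - Y^{(k)})^2$. Since the $X_i$'s are independent of the $Y_i$'s, for each $k$ we have
\begin{align*}
    \expect*{(X^{(k)} - Y^{(k)})^2} = \Var(X^{(k)}) + \Var(Y^{(k)}) + \bigl(\expect*{X^{(k)}} - \expect*{Y^{(k)}}\bigr)^2.
\end{align*}
Writing $\alpha \defeq b-a$, $\beta \defeq d-c$, $\Delta \defeq a-c$, and $\gamma \defeq \alpha - \beta$, the mean-difference becomes $\Delta + \gamma k/(n+1)$. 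Summing the variance terms using $\sum k(n-k+1) = n(n+1)(n+2)/6$ gives a contribution of $(\alpha^2 + \beta^2)/[6(n+1)]$ after dividing by $n$. Summing the squared mean-difference using $\sum k = n(n+1)/2$ and $\sum k^2 = n(n+1)(2n+1)/6$ produces $\Delta^2 + \Delta\gamma + \gamma^2(2n+1)/[6(n+1)]$.

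Finally, using \cref{lemma:wasserstein-unif2} with $b-d = \Delta + \gamma$, I would verify $W_2^2(\mu,\nu) = \Delta^2 + \Delta\gamma + \gamma^2/3$, subtract this from the expression above, and observe the algebraic identity $\alpha^2 + \beta^2 - \gamma^2 = 2\alpha\beta$, which yields the leftover term $\tfrac{2\alpha\beta}{6(n+1)} = \tfrac{(b-a)(d-c)}{3(n+1)}$. The main obstacle is purely book-keeping in this final algebraic simplification; the key cancellation is that the ``$2n+1$'' coefficient in the variance-of-mean sum combines with the ``$-\gamma^2/3$'' from $W_2^2(\mu,\nu)$ to leave only a factor of $1$ times $2\alpha\beta$ in the numerator.
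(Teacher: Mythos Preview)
Your proposal is correct and follows essentially the same route as the paper's proof: both parts rely on the order-statistic mean and variance formulas and the empirical $W_2$ formula \cref{eq:wasserstein-1dim-emp}, with the same variance/squared-mean-difference decomposition and the same summation identities for Part~2. The only minor difference is that for Part~1 you invoke \cref{lemma:unif-barycenter} with $m=1$ as a shortcut (which is legitimate and non-circular, since that lemma's proof is self-contained), whereas the paper redoes the bias--variance computation directly via \citep[Cor.~4.5]{bobkov2019one}; your alternative suggestion is exactly what the paper does.
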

\begin{proof}
    We utilize identities from the proof of Theorem 4.7 in \citep{bobkov2019one}. We know that for $U_1,\dots,U_n \sim Unif(0,1)$, their $k$-th order statistic $U^{(k)}$ satisfies a $\mathrm{Beta}(k,n-k+1)$ distribution. So, we have
    \begin{align}
        \expect*{U^{(k)}} = \frac{k}{n+1}, \qtext{and} \Var\parenth{U^{(k)}} = \frac{k(n-k+1)}{{(n+1)}^2(n+2)}.
    \end{align}
    And hence, for the  $k$-th order statistic of $(X_1,\dots,X_n)$, denoted by $X^{(k)}$, we have
    \begin{align}
    \expect*{X^{(k)}} &=\expect*{a + (b-a)U^{(k)}} =  a + (b-a)\frac{k}{n+1} \qtext{and} \label{eq:e_xk}\\
        \Var\parenth{X^{(k)}} 
        &= \Var\parenth{a + (b-a)U^{(k)}}
        % = {(b-a)}^2 \Var\parenth{U^{(k)}}
        = {(b-a)}^2\frac{k(n-k+1)}{{(n+1)}^2(n+2)}.
        \label{eq:var_xk}
    \end{align}
    Putting these together and using \citep[Cor. 4.5]{bobkov2019one}, we find that 
    \begin{align}
        \expect*{W_2^2(\mu_n, \mu)} 
        &= \frac{1}{n}\sum_{k=1}^n \Var\parenth{X^{(k)}} + \sum_{k=1}^n \int_{(k-1)/n}^{k/n} \parenth{\expect*{X^{(k)}} - F^{-1}_\mu(t)}^2 dt \\
        &= \frac{1}{n}\sum_{k=1}^n \frac{{(b-a)}^2k(n-k+1)}{{(n+1)}^2(n+2)} \\
        &\quad + \sum_{k=1}^n \int_{(k-1)/n}^{k/n} \parenth{a + (b-a)\frac{k}{n+1} - a - (b-a)t}^2 dt \\
        &= {(b-a)}^2\frac{1}{n}\sum_{k=1}^n \frac{k(n-k+1)}{{(n+1)}^2(n+2)} + {(b-a)}^2 \sum_{k=1}^n \int_{(k-1)/n}^{k/n} \parenth{\frac{k}{n+1} - t}^2 dt \\
        &= \frac{{(b-a)}^2}{6n},
    \end{align}
    which yields the first claim~\cref{eq:w2_mu_mun}. To prove the second claim~\cref{eq:w2_mun_nun}, using the equalities~\cref{eq:e_xk,eq:var_xk}, we find that
    \begin{align}
        \expect*{W_2^2(\mu_n,\nu_n)}
        &\seq{\cref{eq:wasserstein-1dim-emp}} \frac{1}{n}\sum_{k=1}^n \expect*{\parenth{X^{(k)} - Y^{(k)}}^2} \\
        &= \frac{1}{n}\sum_{k=1}^n \brackets{\expect*{\parenth{X^{(k)}}^2} + \expect*{\parenth{Y^{(k)}}^2} - 2\expect*{X^{(k)}}\expect*{Y^{(k)}}} \\
        &= \frac{1}{n}\sum_{k=1}^n \brackets{\Var\parenth{X^{(k)}} + \Var\parenth{Y^{(k)}} + \parenth{\expect*{X^{(k)}} - \expect*{Y^{(k)}}}^2} \\
        &= \frac{{(b-a)}^2+{(d-c)}^2}{6(n+1)} + \frac{1}{n}\sum_{k=1}^n \parenth{\expect*{X^{(k)}} - \expect*{Y^{(k)}}}^2.
    \end{align}
    Next, we have
    \begin{align}
        &\, \frac{1}{n}\sum_{k=1}^n \parenth{\expect*{X^{(k)}} - \expect*{Y^{(k)}}}^2 \\
        &= \frac{1}{n}\sum_{k=1}^n \brackets{a+(b-a)\frac{k}{n+1}-\parenth{c+(d-c)\frac{k}{n+1}}}^2 \\
        &= \frac{1}{n}\sum_{k=1}^n \brackets{(a-c) + ((b-a)-(d-c))\frac{k}{n+1}}^2 \\
        &= \frac{1}{n}\sum_{k=1}^n \brackets{{(a-c)}^2 + 2(a-c)((b-a)-(d-c))\frac{k}{n+1} + {((b-a)-(d-c))}^2\frac{k^2}{{(n+1)}^2}} \\
        &= {(a-c)}^2 + \frac{2(a-c)((b-a)-(d-c))}{n(n+1)} \parenth{\sum_{k=1}^n k} + \frac{{((b-a)-(d-c))}^2}{n{(n+1)}^2}\parenth{\sum_{k=1}^n k^2} \\
        &= {(a-c)}^2 + \frac{2(a-c)((b-a)-(d-c))}{n(n+1)} \cdot \frac{n(n+1)}{2} \\
        &\quad + \frac{{((b-a)-(d-c))}^2}{n{(n+1)}^2} \cdot \frac{n(n+1)(2n+1)}{6} \\
        &= {(a-c)}^2 + (a-c)((b-a)-(d-c)) + \frac{{((b-a)-(d-c))}^2 (2n+1)}{6(n+1)} \\
        &= (a-c)(b-d)+\frac{{((b-a)-(d-c))}^2 (2n+1)}{6(n+1)}.
    \end{align}
    Putting these together, we obtain
    \begin{align}
        &\, \expect*{W_2^2(\mu_n,\nu_n)} \\
        &= \frac{{(b-a)}^2+{(d-c)}^2}{6(n+1)} + \frac{1}{n}\sum_{k=1}^n \parenth{\expect*{X^{(k)}} - \expect*{Y^{(k)}}}^2 \\
        &= \frac{{(b-a)}^2+{(d-c)}^2}{6(n+1)} + (a-c)(b-d) + \frac{{((b-a)-(d-c))}^2 (2n+1)}{6(n+1)} \\
        &= \frac{{((b-a)-(d-c))}^2}{3} + (a-c)(b-d) + \frac{(b-a)(d-c)}{3(n+1)} \\
        &= \frac{1}{3}\brackets{{(a-c)}^2 + {(b-d)}^2 + (a-c)(b-d)} + \frac{(b-a)(d-c)}{3(n+1)} \\
        &= W_2^2(\mu,\nu) + \frac{(b-a)(d-c)}{3(n+1)},
    \end{align}
    as claimed.
\end{proof}

\end{appendix}

%%%%%%%%%%%%%%%%%%%%%%%%%%%%%%%%%%%%%%%%%%%%%%
%% Support information, if any,             %%
%% should be provided in the                %%
%% Acknowledgements section.                %%
%%%%%%%%%%%%%%%%%%%%%%%%%%%%%%%%%%%%%%%%%%%%%%
\begin{acks}[Acknowledgments]
The corresponding author for this paper is Jacob Feitelberg. The authors would like to thank the anonymous referees, the Associate Editor, and the Editor for their constructive comments that improved the quality of this paper.
\end{acks}

%%%%%%%%%%%%%%%%%%%%%%%%%%%%%%%%%%%%%%%%%%%%%%
%% Funding information, if any,             %%
%% should be provided in the                %%
%% funding section.                         %%
%%%%%%%%%%%%%%%%%%%%%%%%%%%%%%%%%%%%%%%%%%%%%%
\begin{funding}
Jacob Feitelberg and Anish Agarwal's work on this paper was supported by the Columbia Center for AI and Responsible Financial Innovation in collaboration with Capital One.
\end{funding}

%%%%%%%%%%%%%%%%%%%%%%%%%%%%%%%%%%%%%%%%%%%%%%%%%%%%%%%%%%%%%
%%                  The Bibliography                       %%
%%                                                         %%
%%  imsart-???.bst  will be used to                        %%
%%  create a .BBL file for submission.                     %%
%%                                                         %%
%%  Note that the displayed Bibliography will not          %%
%%  necessarily be rendered by Latex exactly as specified  %%
%%  in the online Instructions for Authors.                %%
%%                                                         %%
%%  MR numbers will be added by VTeX.                      %%
%%                                                         %%
%%  Use \cite{...} to cite references in text.             %%
%%                                                         %%
%%%%%%%%%%%%%%%%%%%%%%%%%%%%%%%%%%%%%%%%%%%%%%%%%%%%%%%%%%%%%

\bibliographystyle{imsart-number}
\bibliography{references}

\end{document}